\documentclass{article}
\usepackage{iclr2027_conference,times}

\usepackage{amsmath,amsfonts,bm}

\def\eqref#1{equation~\ref{#1}}

\def\1{\bm{1}}

\DeclareMathAlphabet{\mathsfit}{\encodingdefault}{\sfdefault}{m}{sl}
\SetMathAlphabet{\mathsfit}{bold}{\encodingdefault}{\sfdefault}{bx}{n}

\usepackage{amsthm}
\usepackage{hyperref}
\usepackage{url}
\usepackage{graphicx}   
\usepackage{booktabs}   
\usepackage{amsmath}    
\usepackage{amssymb}    
\newtheorem{odproposition}{Proposition}
\newtheorem{odlemma}{Lemma}
\newtheorem{odcorollary}{Corollary}
\usepackage{multirow}   
\usepackage{array}      
\usepackage{makecell}   
\usepackage{xcolor}
\usepackage{adjustbox}
\usepackage{float}
\usepackage{algorithm}
\usepackage{algpseudocode}

\usepackage{makecell}
\usepackage{multirow}
\usepackage{bigstrut}
\usepackage{colortbl}

\definecolor{ODGroupOne}{HTML}{E4EFF9}
\definecolor{ODGroupTwo}{HTML}{FFF7DE}
\definecolor{ODGroupThree}{HTML}{FAEAEE}
\definecolor{ODMixGroupNew}{HTML}{E8F3E5}
\definecolor{ODOursHeader}{HTML}{D6EAEB}
\definecolor{ODOursBody}{HTML}{DAF5FB}
\definecolor{ODOursSummary}{HTML}{E5F2F3}
\definecolor{ODNeutralHeader}{HTML}{F0F1F3}
\definecolor{ODSummary}{HTML}{F5F6F7}
\definecolor{ODRankBest}{HTML}{D6EAEB}
\definecolor{ODRule}{HTML}{58616B}

\definecolor{ODOverviewOurs}{HTML}{DAF5FB}
\definecolor{ODOverviewScenario}{HTML}{EFEFEF}
\newlength{\ODOverviewDataWidth}

\usepackage{wrapfig}
\usepackage{capt-of}
\usepackage{needspace}
\definecolor{ODMixHalfOurs}{HTML}{DAF5FB}
\newlength{\ODMixHalfDataWidth}

\title{Online Automated Algorithm Design with Large Language Models}

\author{
Zhiyao Zhang$^{1}$\thanks{Equal Contribution.}\quad
Yichen Li$^{1}$\footnotemark[1]\quad
Xingyu Wu$^{1}$\thanks{Corresponding Author.}\quad
Liang Feng$^{2}$\quad
Kay Chen Tan$^{1}$\\
$^{1}$Department of Data Science and Artificial Intelligence\\
The Hong Kong Polytechnic University, Hong Kong SAR 999077, China\\
$^{2}$College of Computer Science, Chongqing University,
Chongqing 400044, China\\
\texttt{\{zhiyzhang, xingy.wu\}@polyu.edu.hk}
}

\iclrfinalcopy 
\begin{document}

\maketitle
\ificlrfinal
\lhead{Under review}
\fi

\begin{abstract}
Large language models (LLMs) enable automated algorithm design (AAD) through reasoning and code synthesis. However, most existing LLM-based AAD methods separate algorithm design from target optimization, deploying a fixed design even as the optimization state evolves. Conventional adaptive optimizers can respond to such changes, but their adjustments remain confined to predefined parameters, operators, or strategies. To address these limitations, we introduce \textbf{online LLM-based AAD}, a novel optimization paradigm that treats the algorithm itself as a state-dependent decision variable. At each stage, LLM agents synthesize an algorithm with new behavior logic from the current optimization state. Executing the generated algorithm advances the search and provides feedback for subsequent designs, coupling algorithm design with target optimization without requiring a separate offline algorithm pretraining stage. To implement this paradigm, we propose \textbf{OnDesign}, a multi-agent framework that reconciles competing design perspectives to synthesize executable algorithms and uses execution feedback to refine how runtime evidence is interpreted for subsequent designs. We evaluate OnDesign across two mainstream black-box optimization paradigms on three scenarios: Bayesian optimization, evolutionary continuous optimization, and evolutionary mixed-variable optimization. Extensive experiments on six benchmark suites and one engineering problem across multiple problem dimensions demonstrate superior overall performance over conventional optimizers and offline LLM-based AAD methods.
\end{abstract}
\section{Introduction}

Designing an effective optimization algorithm requires substantial expertise in constructing search strategies and decision rules~\citep{guo2025configx}. Recent advances in large language models (LLMs) have opened a new avenue for \emph{Automated Algorithm Design} (AAD), enabling algorithms to be generated and refined through natural-language reasoning and code synthesis~\citep{romera2024mathematical}. Most existing LLM-based AAD methods, however, follow an offline design paradigm: candidate algorithms are first generated and evaluated on training optimization tasks, and a selected algorithm is then deployed to the target optimization task~\citep{zheng2026llm}. Although LLMs greatly expand \emph{what} can be designed and \emph{how} designs are explored, algorithm design remains temporally separated from target-task execution. This separation has two important consequences. First, candidate evaluation may require a separate objective-evaluation budget, introducing additional costs when objective evaluations or repeated optimization runs are expensive~\citep{yu2026exploring}.
Second, the deployed algorithm remains fixed rather than being redesigned in response to the evolving optimization state~\citep{antipov2026switching}. In this offline workflow, LLMs change how algorithms are designed, but design still precedes the target optimization run~\citep{shi2026generalizable}.

This limitation particularly  matters because optimization is inherently dynamic~\citep{guo2024deep}. As observations accumulate, the optimization state evolves, and the search behavior needed to make progress may change with it~\citep{yu2026autopso}. Conventional adaptive optimizers respond by adjusting parameters~\citep{hansen2001completely,zhang2009jade}, selecting operators~\citep{xie2025co}, or switching among strategies~\citep{ngo2026adaptive}, but these choices remain confined to a design space specified in advance~\citep{guo2026designx}. When the search requires mechanisms beyond this predefined space, such adaptations may be insufficient to sustain progress~\citep{wu2025learning}.


\begin{figure}[!t]
\begin{center}
\includegraphics[width=1\linewidth]{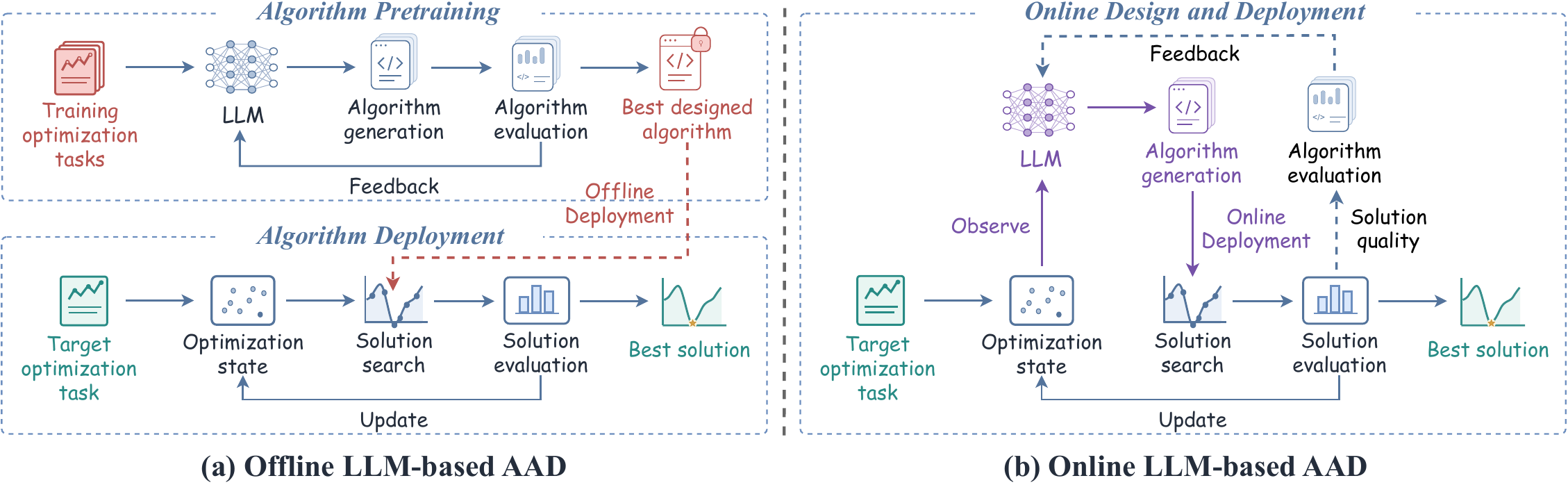}
\end{center}
\caption{Offline vs. online LLM-based AAD.
(a) Offline algorithm design and evaluation form a
pre-deployment loop, followed by target execution.
(b) Online algorithm design and target execution form a
coupled loop, with optimization states and execution gains
informing subsequent designs.}
\label{fig:offline_online}
\end{figure}

We therefore introduce {\emph{Online LLM-based AAD}}, which brings LLM-based algorithm design into the target optimization process. As shown in Figure~\ref{fig:offline_online}, instead of using LLMs to produce a single algorithm before execution, online LLM-based AAD repeatedly synthesizes an algorithm with state-conditioned behavior logic from the current optimization state. The LLMs can introduce new search mechanisms, rather than merely adjusting parameters or selecting among predefined operators or strategies. The algorithm itself thus becomes a state-dependent decision variable that evolves alongside the search. This interaction can be viewed as a meta-level sequential decision process: at each generation, the current optimization state conditions how LLMs design an algorithm; the resulting algorithm is then executed on the target task, where its execution both advances optimization and yields feedback that shapes the next algorithm design decision. Thus,
offline LLM-based AAD asks {what algorithm should be designed before optimization}, whereas online LLM-based AAD asks {what algorithm should LLMs generate next, given the search so far}.

To instantiate online LLM-based AAD, we propose OnDesign, a multi-agent framework that addresses two challenges: identifying runtime observations relevant to the next design and translating that evidence into executable algorithm logics. OnDesign comprises three interacting modules: (1) {State Analysis}, using a State Analysis Guideline (SAG) to organize heterogeneous runtime observations into a structured, evidence-grounded state report; (2) {Multi-Agent Deliberation}, bringing together role-specialized Proposers with competing search priorities and an Arbiter that reconciles their proposals to synthesize an algorithm from the state report and algorithm design history; and (3) {SAG Evolution}, periodically updating the SAG using performance feedback, adapting how runtime evidence is interpreted in subsequent state reports. Execution of the generated algorithm advances optimization and supplies feedback for subsequent designs and SAG updates.  Feedback therefore informs both what algorithm to generate and how to interpret the evidence used to generate it. We instantiate OnDesign within two major black-box optimization paradigms: Bayesian optimization~\citep{jones1998efficient} and evolutionary computation~\citep{10509608}. For the former, OnDesign generates acquisition functions; for the latter, it generates offspring-generation operators for continuous and mixed-variable optimization. Our main contributions are as follows:

\begin{itemize}
\item We formulate {Online LLM-based AAD}, extending algorithm design from offline discovery to state-dependent program synthesis during optimization. Under this formulation, the algorithm itself becomes a dynamic decision variable, synthesized by an LLM from the current optimization state and algorithm design history.

\item We propose {OnDesign}, a multi-agent framework that integrates State Analysis, Multi-Agent Deliberation, and SAG Evolution. Performance feedback informs both subsequent algorithm synthesis and updates to the guideline used to interpret runtime evidence.

\item We conduct extensive experiments in Bayesian optimization for expensive black-box problems~\citep{11311712} and in evolutionary computation for continuous and mixed-variable problems~\citep{9464165}. The results demonstrate the empirical superiority of OnDesign over conventional optimizers and existing offline LLM-based AAD methods across all three scenarios.
\end{itemize}

\section{Online LLM-Based AAD}
\label{sec:oaad}

We formalize online LLM-based AAD by casting target optimization and the meta-level algorithm-design process as two coupled sequential decision processes.

\paragraph{A sequential-decision view of optimization.}
We view target optimization as a sequential decision process in which each search action changes the information available for subsequent decisions~\cite{chen2022learning}. At stage $t$, the optimization state $s_t \in \mathcal{S}$ summarizes evaluated solutions, recent progress, the remaining evaluation budget, and other relevant information. Based on this state, the algorithm produces a search action $a_t \in \mathcal{A}$, specifying the candidate solutions to evaluate next.

Evaluating these candidates and performing the associated optimization updates produces the next state $s_{t+1}$, which informs the subsequent search action. Search actions are therefore coupled over time: each action affects both the progress of the search and the information on which later actions are based. Assuming that $s_t$ sufficiently summarizes the relevant history, this process admits the Markovian representation~\cite{sutton1998reinforcement}
\begin{equation}
    P(s_{t+1}\mid s_0,\ldots,s_t,a_0,\ldots,a_t)
    =
    P(s_{t+1}\mid s_t,a_t),
\end{equation}
where $P$ denotes the state-transition distribution.

The deployed algorithm determines how search actions are produced from the optimization state. Under a fixed algorithm $\pi$, the search action is $a_t=\operatorname{Execute}(\pi,s_t)$, where $\pi$ is represented as an executable program and $\operatorname{Execute}$ denotes the task-specific procedure for applying it. The resulting optimization trajectory can be summarized as
\begin{equation}
    s_t
    \xrightarrow{\operatorname{Execute}(\pi,\cdot)}
    a_t
    \xrightarrow{P}
    s_{t+1}
    \xrightarrow{\operatorname{Execute}(\pi,\cdot)}
    a_{t+1}
    \xrightarrow{P}
    s_{t+2}
    \longrightarrow \cdots.
\end{equation}
Each transition under $P$ includes solution evaluation and the associated optimization updates. The optimization state and search actions evolve over successive stages, while the deployed algorithm $\pi$ remains unchanged.

\paragraph{A sequential-decision view of online LLM-based AAD.}
Online LLM-based AAD makes the algorithm itself a state-dependent decision variable in the meta-level algorithm-design process. At each optimization stage $t$, LLM agents synthesize an algorithm $\pi_t$ from the current design state:
\begin{equation}
    \pi_t
    \sim
    \mathcal{D}_{\mathrm{LLM}}
    (\,\cdot\mid z_t,\iota\,),
    \qquad
    \pi_t \in \Pi,
\end{equation}
where $\mathcal{D}_{\mathrm{LLM}}$ denotes the algorithm-design policy instantiated by LLM agents, $z_t \in \mathcal{Z}$ is the design state of the meta-level process, and $\iota$ specifies the task instructions and program interface. The algorithm space $\Pi$ consists of executable algorithms compatible with this interface. The design state $z_t=(s_t,h_t)$ combines the current optimization state $s_t$ with the accumulated algorithm design history $h_t$.

Applying the generated algorithm produces the search action $a_t=\operatorname{Execute}(\pi_t,s_t)$. Evaluating its candidate solutions and performing the associated optimization updates produces $s_{t+1}\sim P(\,\cdot\mid s_t,a_t\,)$. The execution gain $r_t=\mathcal{R}(s_t,\pi_t,s_{t+1})$ quantifies progress in terms of the target objective, such as improvement in the best observed objective value.

The algorithm design history is updated as $h_{t+1}=h_t\oplus(\pi_t,r_t)$, where $\oplus$ appends the generated algorithm and its execution gain. The next design state is $z_{t+1}=(s_{t+1},h_{t+1})$. Historical optimization states are not included in the history entries; the current optimization state is supplied separately through $s_t$.

The optimization and algorithm-design processes together form the closed loop
\begin{equation}
    z_t
    \xrightarrow{\mathcal{D}_{\mathrm{LLM}}(\cdot\mid\iota)}
    \pi_t
    \xrightarrow{\operatorname{Execute}}
    a_t
    \xrightarrow{P,\mathcal{R}}
    (s_{t+1},r_t)
    \longrightarrow
    z_{t+1}.
\end{equation}
The updated design state conditions algorithm synthesis at the next optimization stage. The two processes are therefore coupled: algorithm design determines how the search proceeds, while target optimization supplies the state information and performance feedback for subsequent algorithm designs. This closed loop seeks to improve the best observed objective value within a fixed fitness-evaluation budget $FE_{\max}$. Algorithm design and target optimization take place within the same run, without requiring a separate offline algorithm pretraining stage.


\begin{figure}[!t]
\begin{center}
\includegraphics[width=1\linewidth]{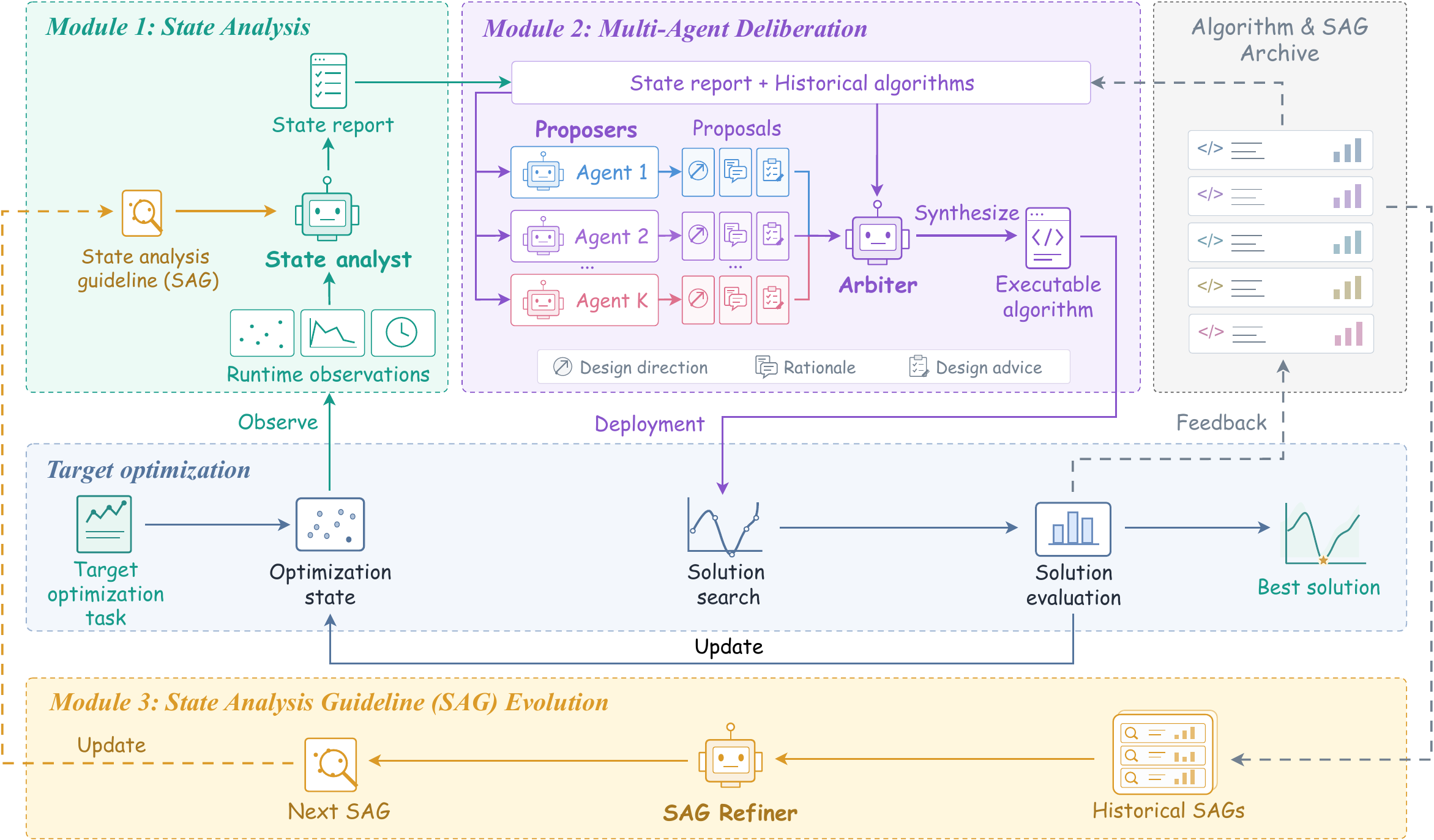}
\end{center}
\caption{{Overview of OnDesign.}
{\textbf{Module 1 (State Analysis)}}:  \emph{State Analyst} organizes runtime evidence according to the current State Analysis Guideline (SAG), producing the report used for algorithm design.
{\textbf{Module 2 (Multi-Agent Deliberation)}}: informed by this report and previously generated algorithms, \emph{Proposers} contribute distinct design directions, rationales, and design advices;  \emph{Arbiter} reconciles their proposals into an executable algorithm.
{\textbf{Module 3 (SAG Evolution)}}:  \emph{SAG Refiner} updates the guideline using archived SAG–score pairs.
The deployed algorithm generates performance feedback that is archived to support subsequent algorithm design and SAG evolution.}
\label{fig:ondesign}
\end{figure}

\section{OnDesign}
\label{sec:framework}

To realize online LLM-based AAD, we propose {OnDesign}, a multi-agent instantiation of the design policy $\mathcal{D}_{\mathrm{LLM}}$ introduced in Sec.~\ref{sec:oaad}. Its core mechanism is state-conditioned program synthesis: at each design stage, LLM agents characterize the current search, reason over the resulting state report and algorithm design history, and generate algorithms whose performance feedback informs subsequent algorithm designs. As shown in Fig.~\ref{fig:ondesign}, OnDesign realizes this loop through three modules: 1) {State Analysis}, in which \emph{State Analyst} constructs a state report guided by the current SAG; 2) {Multi-Agent Deliberation}, in which \emph{Proposers} develop complementary design proposals and \emph{Arbiter} produces an executable program; and 3) {SAG Evolution}, in which \emph{SAG Refiner} adapts the guideline for subsequent state analysis.

\paragraph{State Analysis.}
Let $o_t$ denote the runtime observations available for characterizing the optimization state $s_t$ at design stage $t$, and let $\phi_t$ denote the current State Analysis Guideline (SAG). An optimization run exposes heterogeneous signals, including optimization progress, solution distributions, model characteristics, search-space coverage, the remaining budget, and other aspects of the search. The importance of these signals can change as the search evolves. The SAG provides an explicit analytical focus by specifying which evidence to emphasize and how to structure its interpretation. Guided by $\phi_t$, {\emph{State Analyst}}, an LLM agent denoted by $\mathcal{P}$, identifies informative signals and interprets their relationships to characterize the current search dynamics, synthesizing the resulting evidence into a structured state report: $q_t = \mathcal{P}(o_t;\phi_t)$. The report $q_t$ characterizes $s_t$ and can include implications for search behavior. The SAG directs this analysis, while Multi-Agent Deliberation determines the final design direction and synthesizes the algorithm.

OnDesign instantiates the general design state $(s_t,h_t)$ introduced in Sec.~\ref{sec:oaad} through the state report $q_t$ and the algorithm archive $\mathbb{H}_t=\{(\pi_j,r_j)\}_{0\le j<t}$. The report summarizes the current optimization state $s_t$, while the archive records previous algorithms and their execution gains, thereby implementing the historical component $h_t$. Algorithm synthesis therefore uses the report-based design state $z_t=(q_t,\mathbb{H}_t)$. Appendix~\ref{app:theory} formalizes the role of this representation by relating its decision-relevant information to algorithm-value assessment and the final performance bound.

\paragraph{Multi-Agent Deliberation.}
The state report $q_t$ characterizes the current optimization state, but does not directly determine which algorithmic changes will improve subsequent optimization performance. Translating this evidence into executable algorithmic logic requires comparing alternative design directions and weighing competing search priorities. Multi-Agent Deliberation module makes these alternatives explicit through role-specialized proposals and examines their supporting evidence and trade-offs before synthesizing an executable algorithm. To support this synthesis, the module uses the design state $z_t$, combining current-state evidence from $q_t$ with prior design outcomes from the algorithm archive $\mathbb{H}_t$.

Given these inputs, $K$ role-specialized LLM agents, called \emph{Proposers} and denoted by $\mathcal{L}_k$, develop design proposals under the task instructions and program interface specified by $\iota$. Their application-dependent role instructions encourage distinct, potentially competing search priorities:
\begin{equation}
\begin{aligned}
    p_t^{(k)}
    &= \mathcal{L}_k(q_t,\mathbb{H}_t;\iota),
    \qquad k=1,\ldots,K,\\
    p_t^{(k)}
    &= \left(v_t^{(k)},e_t^{(k)},d_t^{(k)}\right).
\end{aligned}
\end{equation}
Each proposal comprises three elements: 1) \emph{Design Direction} $v_t^{(k)}$, guiding algorithm design toward desired search behaviors, such as favoring exploration or exploitation; 2) \emph{Rationale} $e_t^{(k)}$, justifying this direction with evidence from the state report and prior design outcomes; and 3) \emph{Design Advice} $d_t^{(k)}$, suggesting concrete mechanisms to realize the intended behavior.

\emph{Arbiter}, an LLM agent denoted by $\mathcal{G}$, evaluates the proposals against current-state evidence and prior design outcomes, reconciles competing priorities, and synthesizes an executable algorithm:
\begin{equation}
    \pi_t
    =
    \mathcal{G}
    \left(
        q_t,\mathbb{H}_t,\{p_t^{(k)}\}_{k=1}^{K};\iota
    \right)
    \in \Pi.
\end{equation}
\emph{Arbiter} integrates supported elements across proposals into coherent algorithmic logic rather than simply selecting one proposal. The resulting program $\pi_t$ is the algorithm generated in the meta-level design process; applying it to $s_t$ produces the search action $a_t=\operatorname{Execute}(\pi_t,s_t)$. Evaluating the corresponding candidate solutions and performing the associated optimization updates yields the execution gain $r_t$, which is recorded for subsequent designs through
$\mathbb{H}_{t+1}=\mathbb{H}_t\cup\{(\pi_t,r_t)\}$. The same execution gain $r_t$ also contributes to the score of the SAG $\phi_t$ that guided the construction of $q_t$, providing indirect feedback on SAG quality.

\paragraph{SAG Evolution.}
As search conditions evolve, an earlier SAG may become less informative.
The SAG Evolution module adapts this analytical focus through
\emph{SAG Refiner}, an LLM agent denoted by $\mathcal{E}$.
After each execution of $\pi_t$, the gain $r_t$ is credited to
$\phi_t$, which guided the construction of the state report $q_t$
used to design $\pi_t$. Each SAG is scored by the arithmetic mean
of all gains attributed to it, and the resulting SAG--score pairs
form the updated archive $\mathcal{H}^{\mathrm{SAG}}_{t+1}$.

Starting from an initial SAG $\phi_0$ and an empty archive,
the next-stage guideline is determined by
\begin{equation}
    \phi_{t+1}=
    \begin{cases}
        \mathcal{E}\!\left(\mathcal{H}^{\mathrm{SAG}}_{t+1}\right),
        & (t+1)\bmod L_{\mathrm{SAG}}=0,\\
        \phi_t,
        & \text{otherwise},
    \end{cases}
    \label{eq:sag_update}
\end{equation}
where $L_{\mathrm{SAG}}$ is a positive integer specifying the
update interval in completed design stages.
Refinement uses the archive after incorporating $r_t$, and the
resulting $\phi_{t+1}$ first guides the construction of $q_{t+1}$.
Between refinements, the guideline is retained while its feedback
continues to accumulate.

The SAG score provides a performance-based proxy for guideline
quality, rather than an isolated estimate of its causal effect.
Higher scores offer heuristic support for particular analytical
emphases, while lower scores motivate alternative ways of
emphasizing and interpreting runtime observations.
\emph{SAG Refiner} reasons over these SAG--score pairs to adapt
subsequent state analysis.

Thus, the algorithm archive $\mathbb{H}_{t+1}$ informs
\emph{what algorithm should be designed next}, while the SAG
archive $\mathcal{H}^{\mathrm{SAG}}_{t+1}$ informs
\emph{how runtime observations should be interpreted for that design}.
Detailed scoring and archive definitions are provided in
Appendix~\ref{app:ondesign_bo}.

\paragraph{Performance-gap analysis.}
To characterize the sources of performance loss in online algorithm design, we compare OnDesign with an ideal online designer operating under the same initial history, task interface, evaluation budget, and information constraints. The ideal designer may choose any budget-compatible algorithm in $\Pi$ at each stage. Let $V_0^*$ and $V_0^{\mathrm{OnDesign}}$ denote the expected terminal utilities of the ideal designer and OnDesign, respectively, with higher utility indicating better final solutions.

At stage $t$, the design-space error $\kappa_t$ measures loss from restricting the accessible algorithm family; the representation error $\varepsilon_t$ measures inaccuracies in assessing algorithm values from the state report and archive; and the synthesis error $\eta_t$ measures the shortfall relative to the best accessible design under these assessments.

\begin{odproposition}[Performance-gap bound for OnDesign]
\label{prop:online_design_gap}
For a bounded terminal utility and a finite design horizon $T$, under the representation and synthesis conditions specified in Appendix~\ref{app:theory}, the expected performance gap satisfies
\begin{equation}
0 \le V_0^*-V_0^{\mathrm{OnDesign}}
\le
\mathbb E_{\mathrm{OnDesign}}\!\left[
\sum_{t=0}^{T-1}
\bigl(\kappa_t+2\varepsilon_t+\eta_t\bigr)
\right].
\label{eq:online_design_gap}
\end{equation}
The expectation is over the histories and algorithms generated during the OnDesign run.
\end{odproposition}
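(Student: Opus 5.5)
The plan is a performance-difference (telescoping) argument over the finite horizon, in the style of approximate dynamic programming. Let $V_t^*(z)$ denote the optimal expected terminal utility from design state $z_t=z$, where the ideal designer may pick any budget-compatible $\pi\in\Pi$ at each stage. Let $Q_t^*(z,\pi)$ be the corresponding action-value: execute $\pi$ at stage $t$, then act optimally thereafter. Bellman's principle gives $V_t^*(z)=\sup_{\pi\in\Pi}Q_t^*(z,\pi)$ and $V_T^*=U$, the bounded terminal utility. Boundedness of $U$ and finiteness of $T$ ensure that all these quantities are finite and that the suprema and expectations are well defined.

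The lower bound $V_0^*-V_0^{\mathrm{OnDesign}}\ge 0$ is immediate. OnDesign's (randomized) policy is one admissible policy for the ideal designer: it respects the same history, interface, budget and information constraints. So its value cannot exceed the supremum.

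For the upper bound, I would write the telescoping identity
\[
V_0^*(z_0)-V_0^{\mathrm{OnDesign}}(z_0)=\mathbb E_{\mathrm{OnDesign}}\Bigl[\sum_{t=0}^{T-1}\bigl(V_t^*(z_t)-Q_t^*(z_t,\pi_t)\bigr)\Bigr].
\]
This follows by adding and subtracting $V_{t+1}^*(z_{t+1})$ along the OnDesign trajectory. It uses $\mathbb E[V_{t+1}^*(z_{t+1})\mid z_t,\pi_t]=Q_t^*(z_t,\pi_t)$ and the fact that $V_T^*$ and OnDesign's realized terminal utility agree at $T$. It then suffices to bound each per-stage regret $V_t^*(z_t)-Q_t^*(z_t,\pi_t)$ by $\kappa_t+2\varepsilon_t+\eta_t$. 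To do this I would introduce the accessible family $\Pi_t^{\mathrm{acc}}\subseteq\Pi$ and an estimated value $\hat Q_t(\pi)$ computed from the report-based state $(q_t,\mathbb H_t)$. The appendix conditions are read as three inequalities:
\begin{itemize}
\item[(i)] $\kappa_t\ge V_t^*(z_t)-\sup_{\pi\in\Pi_t^{\mathrm{acc}}}Q_t^*(z_t,\pi)$, the design-space error;
\item[(ii)] $\sup_{\pi\in\Pi_t^{\mathrm{acc}}}\lvert\hat Q_t(\pi)-Q_t^*(z_t,\pi)\rvert\le\varepsilon_t$, the representation error;
\item[(iii)] $\hat Q_t(\pi_t)\ge\sup_{\pi\in\Pi_t^{\mathrm{acc}}}\hat Q_t(\pi)-\eta_t$, the synthesis error.
\end{itemize}
The per-stage regret then decomposes through the chain $V^*\to\sup_{\mathrm{acc}}Q^*\to\sup_{\mathrm{acc}}\hat Q\to\hat Q(\pi_t)\to Q^*(\pi_t)$. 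The first link is bounded by (i). The second and fourth links each cost at most $\varepsilon_t$ by (ii), which is where the factor $2$ comes from. The third link costs at most $\eta_t$ by (iii). Summing over $t$ inside the expectation gives the claimed inequality.

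The main obstacle is measurability and conditioning. The errors $\kappa_t,\varepsilon_t,\eta_t$ are random, since they depend on the realized history. The LLM policy is stochastic, and the report $q_t$ is a lossy, possibly randomized function of $s_t$. So I must check that $\hat Q_t$ is measurable with respect to the information OnDesign actually has, and that the tower property applies with respect to the filtration generated by $(z_t,\pi_t)$. I would handle this by stating the conditions almost surely, or in conditional expectation given $z_t$, and by ensuring $\pi_t\in\Pi_t^{\mathrm{acc}}$, so that the fourth link is covered by (ii). If suprema are not attained, I would use $\delta$-optimal elements and let $\delta\to0$.
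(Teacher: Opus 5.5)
This is essentially the paper's proof: the same finite-horizon value-difference (telescoping) lemma along OnDesign's trajectory, the same per-stage chain through the accessible family yielding $\kappa_t+2\varepsilon_t+\eta_t$ (with $\pi_t\in\mathcal C_t$ almost surely), and the lower bound from OnDesign being an admissible policy. The one point to tighten is where the values live. The paper indexes $V_t^*$ and $Q_t^*$ by the full history $\xi_t$, including OnDesign's reports, SAG records and internal randomness. Then $\mathbb E[V_{t+1}^*(\xi_{t+1})\mid\xi_t,\pi_t]=Q_t^*(\xi_t,\pi_t)$ holds by definition for any history-dependent policy, and the representation condition compares $G_t(z_t,\pi)$ with $Q_t^*(\xi_t,\pi)$. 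Your indexing by a design state $z_t$, with conditioning on $(z_t,\pi_t)$, is valid only if that state is Markov (e.g.\ $(s_t,h_t)$ under the assumption of Sec.~\ref{sec:oaad}); it would fail for the report-based $z_t=(q_t,\mathbb H_t)$, which need not be Markov.
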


The bound decomposes OnDesign's performance gap into errors arising from the accessible design space, state representation, and algorithm synthesis. These terms also clarify the complementary roles of its modules: State Analysis organizes decision-relevant evidence ($\varepsilon_t$); Multi-Agent Deliberation explores alternative designs and synthesizes algorithms ($\kappa_t,\eta_t$); and SAG Evolution adapts the analytical focus, targeting representation errors in subsequent stages. Their benefit depends on reducing accumulated design error. Definitions, proofs, and restricted-class comparisons appear in Appendix~\ref{app:theory}.

\raggedbottom
\section{Experiments}
\label{sec:experiments}

\subsection{Experimental Setup}
\label{sec:exp_setup}

We evaluate OnDesign in three scenarios: {Bayesian optimization}, {evolutionary continuous optimization}, and {evolutionary mixed-variable optimization}. The Bayesian optimization scenario represents computationally black-box settings, where limited evaluation budgets make sample-efficient search essential. Scenario-specific instantiations and prompt templates are detailed in Appendices~\ref{app:ondesign_bo}--\ref{app:prompt_templates}.

\paragraph{Baselines.}
We group the baselines into three categories.
(1) {\textbf{Fixed-logic optimizers}}: LogEI~\citep{ament2023unexpected}, LogPI~\citep{ament2023unexpected}, UCB~\citep{srinivas2009gaussian}, and JES~\citep{hvarfner2022joint} for Bayesian optimization; BSPGA~\citep{su2020non}, CLPSO~\citep{liang2006comprehensive}, and CoDE~\citep{wang2011differential} for evolutionary continuous optimization; and CoDE~\citep{wang2011differential}, MDE-IHS~\citep{liao2010two}, $\mathrm{DE}_{\mathrm{MV}}$~\citep{lin2018hybrid}, Sig-DE~\citep{yu2016stock}, and IMI-GWO~\citep{gupta2019efficient} for evolutionary mixed-variable optimization.
(2) {\textbf{Adaptive optimizers}}: SETUP-BO~\citep{vasconcelos2022self} for Bayesian optimization; SAHLPSO~\citep{tao2021self}, SHADE~\citep{tanabe2013success}, and AutoEP~\citep{xu2026autoep} for evolutionary continuous optimization; and SHADE~\citep{tanabe2013success} and $\mathrm{PSO}_{\mathrm{MV}}$~\citep{wang2021particle} for evolutionary mixed-variable optimization.
(3) {\textbf{LLM-designed algorithms}}: FunBO~\citep{aglietti2024funbo}, ATRBO-LCB~\citep{li2026llamea}, and TREvol~\citep{li2026llamea} for Bayesian optimization; ParEvo1~\citep{hu2025partition} and ParEvo2~\citep{hu2025partition} for evolutionary continuous optimization; and scenario-specific algorithms generated offline by EoH~\citep{liu2024eoh}, MEoH~\citep{yao2025meoh}, and ReEvo~\citep{ye2024reevo} for all three scenarios. Detailed baseline descriptions appear in Appendix~\ref{app:benchmarks_baselines}.

\paragraph{Benchmarks.}
Let $D$ denote the number of decision variables.
Bayesian optimization and evolutionary continuous optimization use BBOB~\citep{elhara2019coco} at $D\in\{10,30\}$, CEC2020~\citep{yue2019cec2020} and CEC2022~\citep{kumar2021cec2022} at $D\in\{10,20\}$, and CEC2026~\citep{Chen2026CEC} at $D=30$. Evolutionary mixed-variable optimization uses MV-BBOB at $D\in\{10,30\}$ and EOPCCV~\citep{9464165} at $D=10$, covering continuous--integer and continuous--categorical problems, respectively. The construction of MV-BBOB is described in Appendix~\ref{app:mv_bbob} and detailed benchmark settings are provided in Appendix~\ref{app:evaluation_protocol}.

\paragraph{Protocol and metrics.}
All methods are evaluated over ten independent runs with the same fitness-evaluation budget within each scenario. For Bayesian optimization, $FE_{\max}=2D+101$, including $2D+1$ Sobol initialization samples; the evolutionary optimization scenarios use a population size of 100 and $FE_{\max}=5{,}000$, including initialization. OnDesign uses $L_{\mathrm{SAG}}=10$ by default; see Appendix~\ref{app:sag_sensitivity} for an illustrative sensitivity analysis. Other baseline hyperparameters retain their implementation defaults. OnDesign and the three offline LLM-based AAD methods (EoH, MEoH, and ReEvo) use DeepSeek-V4~\citep{xu2026deepseek}. For each problem, we report the mean and standard deviation of the final best objective value. Function-wise ranks are averaged within each suite--dimension configuration, with lower objective values and ranks indicating better performance. For each baseline, $+/-/=$ counts denote the numbers of functions on which it performs better, worse, or equally well relative to OnDesign. Detailed settings are provided in Appendix~\ref{app:evaluation_protocol}.

\begin{table}[t]
  \centering
  \caption{Results on Bayesian and evolutionary continuous optimization scenarios. Entries report average rank ($\downarrow$), with $+/-/=$ counts for baselines. }
  \label{tab:bo-ec-main}
  \fontsize{8.5}{9.5}\selectfont
  \setlength{\tabcolsep}{0.95pt}
  \renewcommand{\arraystretch}{1.0}
  \arrayrulecolor{black}
  \setlength{\ODOverviewDataWidth}{\dimexpr(\linewidth-54pt-14\tabcolsep)/7\relax}
  \begin{tabular}{@{}>{\raggedright\arraybackslash}p{54pt}*{7}{>{\centering\arraybackslash}p{\ODOverviewDataWidth}}@{}}
    \toprule
    \multirow{2}{*}[-2.8pt]{\textbf{Method}} & \multicolumn{2}{c}{\textbf{BBOB}} & \multicolumn{2}{c}{\textbf{CEC2020}} & \multicolumn{2}{c}{\textbf{CEC2022}} & \textbf{CEC2026} \\
    \cmidrule(lr){2-3}\cmidrule(lr){4-5}\cmidrule(lr){6-7}\cmidrule(l){8-8}
     & 10D & 30D & 10D & 20D & 10D & 20D & 30D \\
    \hline
    \rowcolor{ODOverviewScenario}
     & \multicolumn{7}{c}{\textbf{\textit{Bayesian Optimization Scenario}}} \\
    \hline
    LogEI & 7.15~{\fontsize{7.5}{9}\selectfont(4/20/0)} & 7.17~{\fontsize{7.5}{9}\selectfont(1/22/1)} & 5.35~{\fontsize{7.5}{9}\selectfont(3/7/0)} & 7.30~{\fontsize{7.5}{9}\selectfont(4/6/0)} & 6.29~{\fontsize{7.5}{9}\selectfont(4/8/0)} & 6.88~{\fontsize{7.5}{9}\selectfont(3/9/0)} & 7.45~{\fontsize{7.5}{9}\selectfont(4/25/0)} \\
    LogPI & 7.38~{\fontsize{7.5}{9}\selectfont(3/21/0)} & 7.13~{\fontsize{7.5}{9}\selectfont(1/22/1)} & 7.55~{\fontsize{7.5}{9}\selectfont(3/7/0)} & 5.00~{\fontsize{7.5}{9}\selectfont(2/8/0)} & 6.21~{\fontsize{7.5}{9}\selectfont(4/8/0)} & 7.83~{\fontsize{7.5}{9}\selectfont(2/10/0)} & 8.02~{\fontsize{7.5}{9}\selectfont(4/25/0)} \\
    UCB & 7.81~{\fontsize{7.5}{9}\selectfont(3/21/0)} & 8.00~{\fontsize{7.5}{9}\selectfont(1/22/1)} & 7.40~{\fontsize{7.5}{9}\selectfont(2/8/0)} & 7.55~{\fontsize{7.5}{9}\selectfont(2/8/0)} & 6.13~{\fontsize{7.5}{9}\selectfont(5/7/0)} & 6.67~{\fontsize{7.5}{9}\selectfont(3/9/0)} & 7.72~{\fontsize{7.5}{9}\selectfont(4/25/0)} \\
    JES & 4.58~{\fontsize{7.5}{9}\selectfont(4/20/0)} & 6.44~{\fontsize{7.5}{9}\selectfont(2/21/1)} & 6.20~{\fontsize{7.5}{9}\selectfont(2/8/0)} & 6.70~{\fontsize{7.5}{9}\selectfont(2/8/0)} & 5.13~{\fontsize{7.5}{9}\selectfont(4/8/0)} & 6.17~{\fontsize{7.5}{9}\selectfont(3/9/0)} & 4.66~{\fontsize{7.5}{9}\selectfont(4/25/0)} \\
    \noalign{\vskip1.25pt\hbox to\linewidth{\color[gray]{0.45}\leaders\hbox{\rule{2pt}{0.3pt}\kern2pt}\hfill}\vskip1.25pt}
    SETUP-BO & 4.94~{\fontsize{7.5}{9}\selectfont(4/20/0)} & 8.17~{\fontsize{7.5}{9}\selectfont(1/22/1)} & 6.95~{\fontsize{7.5}{9}\selectfont(2/8/0)} & 5.20~{\fontsize{7.5}{9}\selectfont(3/7/0)} & 7.38~{\fontsize{7.5}{9}\selectfont(3/9/0)} & 5.46~{\fontsize{7.5}{9}\selectfont(4/8/0)} & 7.57~{\fontsize{7.5}{9}\selectfont(5/24/0)} \\
    \noalign{\vskip1.25pt\hbox to\linewidth{\color[gray]{0.45}\leaders\hbox{\rule{2pt}{0.3pt}\kern2pt}\hfill}\vskip1.25pt}
    FunBO & 6.75~{\fontsize{7.5}{9}\selectfont(3/20/1)} & 5.10~{\fontsize{7.5}{9}\selectfont(3/21/0)} & 6.90~{\fontsize{7.5}{9}\selectfont(2/8/0)} & 7.30~{\fontsize{7.5}{9}\selectfont(2/8/0)} & 9.08~{\fontsize{7.5}{9}\selectfont(3/9/0)} & 6.79~{\fontsize{7.5}{9}\selectfont(3/9/0)} & 6.81~{\fontsize{7.5}{9}\selectfont(6/23/0)} \\
    ATRBO-LCB & 8.52~{\fontsize{7.5}{9}\selectfont(3/20/1)} & 5.44~{\fontsize{7.5}{9}\selectfont(3/19/2)} & 8.45~{\fontsize{7.5}{9}\selectfont(2/7/1)} & 8.35~{\fontsize{7.5}{9}\selectfont(2/8/0)} & 8.38~{\fontsize{7.5}{9}\selectfont(3/8/1)} & 6.42~{\fontsize{7.5}{9}\selectfont(3/9/0)} & 7.66~{\fontsize{7.5}{9}\selectfont(4/25/0)} \\
    TREvol & 8.06~{\fontsize{7.5}{9}\selectfont(4/20/0)} & 8.54~{\fontsize{7.5}{9}\selectfont(1/22/1)} & 8.75~{\fontsize{7.5}{9}\selectfont(2/8/0)} & 6.35~{\fontsize{7.5}{9}\selectfont(3/7/0)} & 7.63~{\fontsize{7.5}{9}\selectfont(3/9/0)} & 6.13~{\fontsize{7.5}{9}\selectfont(3/9/0)} & 7.41~{\fontsize{7.5}{9}\selectfont(6/23/0)} \\
    EoH & 5.75~{\fontsize{7.5}{9}\selectfont(6/18/0)} & 8.06~{\fontsize{7.5}{9}\selectfont(1/22/1)} & 4.80~{\fontsize{7.5}{9}\selectfont(2/8/0)} & 6.95~{\fontsize{7.5}{9}\selectfont(3/7/0)} & 4.75~{\fontsize{7.5}{9}\selectfont(5/7/0)} & 8.33~{\fontsize{7.5}{9}\selectfont(2/10/0)} & 7.24~{\fontsize{7.5}{9}\selectfont(5/24/0)} \\
    MEoH & 8.75~{\fontsize{7.5}{9}\selectfont(2/21/1)} & 5.71~{\fontsize{7.5}{9}\selectfont(3/20/1)} & 7.00~{\fontsize{7.5}{9}\selectfont(3/7/0)} & 6.30~{\fontsize{7.5}{9}\selectfont(3/7/0)} & 6.25~{\fontsize{7.5}{9}\selectfont(5/7/0)} & 7.13~{\fontsize{7.5}{9}\selectfont(2/10/0)} & 5.69~{\fontsize{7.5}{9}\selectfont(5/24/0)} \\
    ReEvo & 5.38~{\fontsize{7.5}{9}\selectfont(9/15/0)} & 6.23~{\fontsize{7.5}{9}\selectfont(2/21/1)} & 5.00~{\fontsize{7.5}{9}\selectfont(3/7/0)} & 7.20~{\fontsize{7.5}{9}\selectfont(2/8/0)} & 6.25~{\fontsize{7.5}{9}\selectfont(3/9/0)} & 6.63~{\fontsize{7.5}{9}\selectfont(3/9/0)} & 4.93~{\fontsize{7.5}{9}\selectfont(6/22/1)} \\
    \hline
    \rowcolor{ODOverviewOurs}
    \textbf{OnDesign} & \textbf{2.94} & \textbf{2.02} & \textbf{3.65} & \textbf{3.80} & \textbf{4.54} & \textbf{3.58} & \textbf{2.84} \\
    \hline
    \rowcolor{ODOverviewScenario}
     & \multicolumn{7}{c}{\textbf{\textit{Evolutionary Continuous Optimization Scenario}}} \\
    \hline
    BSPGA & 6.73~{\fontsize{7.5}{9}\selectfont(4/19/1)} & 7.83~{\fontsize{7.5}{9}\selectfont(1/23/0)} & 6.90~{\fontsize{7.5}{9}\selectfont(1/9/0)} & 8.00~{\fontsize{7.5}{9}\selectfont(1/9/0)} & 6.21~{\fontsize{7.5}{9}\selectfont(0/9/3)} & 7.04~{\fontsize{7.5}{9}\selectfont(0/11/1)} & 7.45~{\fontsize{7.5}{9}\selectfont(1/28/0)} \\
    CLPSO & 9.54~{\fontsize{7.5}{9}\selectfont(2/22/0)} & 10.23~{\fontsize{7.5}{9}\selectfont(0/23/1)} & 9.50~{\fontsize{7.5}{9}\selectfont(0/10/0)} & 9.80~{\fontsize{7.5}{9}\selectfont(0/10/0)} & 9.33~{\fontsize{7.5}{9}\selectfont(0/11/1)} & 10.38~{\fontsize{7.5}{9}\selectfont(0/12/0)} & 9.86~{\fontsize{7.5}{9}\selectfont(0/29/0)} \\
    CoDE & 10.88~{\fontsize{7.5}{9}\selectfont(1/23/0)} & 11.19~{\fontsize{7.5}{9}\selectfont(0/23/1)} & 10.80~{\fontsize{7.5}{9}\selectfont(0/10/0)} & 11.10~{\fontsize{7.5}{9}\selectfont(0/10/0)} & 10.38~{\fontsize{7.5}{9}\selectfont(0/11/1)} & 10.42~{\fontsize{7.5}{9}\selectfont(0/12/0)} & 10.91~{\fontsize{7.5}{9}\selectfont(0/29/0)} \\
    \noalign{\vskip1.25pt\hbox to\linewidth{\color[gray]{0.45}\leaders\hbox{\rule{2pt}{0.3pt}\kern2pt}\hfill}\vskip1.25pt}
    SAHLPSO & 4.31~{\fontsize{7.5}{9}\selectfont(4/17/3)} & 6.02~{\fontsize{7.5}{9}\selectfont(3/21/0)} & 5.45~{\fontsize{7.5}{9}\selectfont(1/8/1)} & 4.45~{\fontsize{7.5}{9}\selectfont(3/7/0)} & 6.71~{\fontsize{7.5}{9}\selectfont(2/9/1)} & 5.92~{\fontsize{7.5}{9}\selectfont(3/8/1)} & 5.64~{\fontsize{7.5}{9}\selectfont(2/27/0)} \\
    SHADE & 5.96~{\fontsize{7.5}{9}\selectfont(3/19/2)} & 4.98~{\fontsize{7.5}{9}\selectfont(2/21/1)} & 6.45~{\fontsize{7.5}{9}\selectfont(0/10/0)} & 7.90~{\fontsize{7.5}{9}\selectfont(0/10/0)} & 5.75~{\fontsize{7.5}{9}\selectfont(0/9/3)} & 6.00~{\fontsize{7.5}{9}\selectfont(1/10/1)} & 6.36~{\fontsize{7.5}{9}\selectfont(0/29/0)} \\
    AutoEP & 4.56~{\fontsize{7.5}{9}\selectfont(5/17/2)} & 4.35~{\fontsize{7.5}{9}\selectfont(4/19/1)} & 4.15~{\fontsize{7.5}{9}\selectfont(1/8/1)} & 3.95~{\fontsize{7.5}{9}\selectfont(2/7/1)} & 5.00~{\fontsize{7.5}{9}\selectfont(0/8/4)} & 3.88~{\fontsize{7.5}{9}\selectfont(2/8/2)} & 3.53~{\fontsize{7.5}{9}\selectfont(5/23/1)} \\
    \noalign{\vskip1.25pt\hbox to\linewidth{\color[gray]{0.45}\leaders\hbox{\rule{2pt}{0.3pt}\kern2pt}\hfill}\vskip1.25pt}
    ParEvo1 & 10.63~{\fontsize{7.5}{9}\selectfont(2/22/0)} & 9.44~{\fontsize{7.5}{9}\selectfont(2/21/1)} & 11.50~{\fontsize{7.5}{9}\selectfont(0/10/0)} & 11.60~{\fontsize{7.5}{9}\selectfont(0/10/0)} & 10.96~{\fontsize{7.5}{9}\selectfont(0/11/1)} & 11.25~{\fontsize{7.5}{9}\selectfont(0/12/0)} & 11.79~{\fontsize{7.5}{9}\selectfont(0/29/0)} \\
    ParEvo2 & 5.90~{\fontsize{7.5}{9}\selectfont(5/17/2)} & 7.06~{\fontsize{7.5}{9}\selectfont(1/21/2)} & 3.95~{\fontsize{7.5}{9}\selectfont(1/8/1)} & 5.05~{\fontsize{7.5}{9}\selectfont(1/9/0)} & 3.88~{\fontsize{7.5}{9}\selectfont(2/7/3)} & 4.79~{\fontsize{7.5}{9}\selectfont(1/9/2)} & 5.22~{\fontsize{7.5}{9}\selectfont(2/27/0)} \\
    EoH & 8.98~{\fontsize{7.5}{9}\selectfont(2/22/0)} & 4.92~{\fontsize{7.5}{9}\selectfont(10/11/3)} & 9.85~{\fontsize{7.5}{9}\selectfont(0/10/0)} & 7.10~{\fontsize{7.5}{9}\selectfont(0/10/0)} & 8.92~{\fontsize{7.5}{9}\selectfont(0/10/2)} & 6.00~{\fontsize{7.5}{9}\selectfont(1/10/1)} & 6.22~{\fontsize{7.5}{9}\selectfont(5/24/0)} \\
    MEoH & 4.00~{\fontsize{7.5}{9}\selectfont(7/16/1)} & 4.75~{\fontsize{7.5}{9}\selectfont(2/20/2)} & 3.70~{\fontsize{7.5}{9}\selectfont(2/7/1)} & 3.75~{\fontsize{7.5}{9}\selectfont(1/9/0)} & 4.38~{\fontsize{7.5}{9}\selectfont(3/7/2)} & 5.25~{\fontsize{7.5}{9}\selectfont(2/9/1)} & 4.84~{\fontsize{7.5}{9}\selectfont(1/28/0)} \\
    ReEvo & 3.46~{\fontsize{7.5}{9}\selectfont(8/14/2)} & 4.71~{\fontsize{7.5}{9}\selectfont(4/17/3)} & 3.70~{\fontsize{7.5}{9}\selectfont(2/7/1)} & 3.45~{\fontsize{7.5}{9}\selectfont(0/10/0)} & 3.67~{\fontsize{7.5}{9}\selectfont(3/6/3)} & 4.67~{\fontsize{7.5}{9}\selectfont(2/9/1)} & 4.55~{\fontsize{7.5}{9}\selectfont(1/28/0)} \\
    \hline
    \rowcolor{ODOverviewOurs}
    \textbf{OnDesign} & \textbf{3.06} & \textbf{2.52} & \textbf{2.05} & \textbf{1.85} & \textbf{2.83} & \textbf{2.42} & \textbf{1.60} \\
    \hline
  \end{tabular}
\end{table}

\subsection{Performance Across Optimization Scenarios}
\label{sec:exp_performance}

Across the three optimization scenarios, OnDesign achieves the best average rank in all 17 suite--dimension configurations (Tables~\ref{tab:bo-ec-main} and~\ref{tab:mix-main}). Detailed results are provided in Appendix~\ref{app:additional_results}.

\paragraph{Bayesian optimization.}
\label{sec:exp_bo}
On BBOB at $D=30$, OnDesign attains an average rank of 2.02, compared with 5.10 for FunBO, the best-ranked baseline, and obtains lower mean final objective values on 21 of the 24 functions. The average-rank lead extends to CEC2020 at $D=20$ (3.80 versus 5.00 for LogPI) and CEC2026 at $D=30$ (2.84 versus 4.66 for JES), again relative to the best-ranked baseline in each configuration. On CEC2022, the margin over the best-ranked baseline is smaller at $D=10$ (4.54 versus 4.75 for EoH) than at $D=20$ (3.58 versus 5.46 for SETUP-BO), showing that the consistent lead in average rank does not imply an equally large margin across configurations.

\paragraph{Evolutionary continuous optimization.}
\label{sec:exp_ec}
On BBOB at $D=30$ and CEC2022 at $D=20$, OnDesign achieves average ranks of 2.52 and 2.42, respectively, compared with 4.35 and 3.88 for AutoEP, the best-ranked baseline in both configurations. On CEC2020 at $D=20$, it ranks ahead of the best-ranked baseline, ReEvo (1.85 versus 3.45), and obtains lower mean final objective values on all ten functions. On CEC2026 at $D=30$, OnDesign attains an average rank of 1.60, compared with 3.53 for AutoEP and 4.55 for ReEvo, the best-ranked adaptive and offline LLM-designed baselines in this configuration, respectively. It obtains lower mean final objective values than these two methods on 23 and 28 of the 29 functions, respectively.

\paragraph{Evolutionary mixed-variable optimization.}
\label{sec:exp_mixed}
On continuous--integer MV-BBOB, OnDesign achieves average ranks of 2.13 and 2.60 at $D=10$ and $D=30$, compared with 3.60 for SHADE and 3.42 for MEoH, the best-ranked baselines in the respective configurations. On continuous--categorical EOPCCV, it obtains the lowest mean final objective value on all 30 functions, yielding an average rank of 1.00, compared with 3.38 for the best-ranked baseline, IMI-GWO.

\paragraph{Engineering validation.}
\label{sec:crashworthiness}
We further evaluate OnDesign on the crashworthiness design of a multicell automotive energy-absorbing component studied by \citet{zhang2024two}. Nine continuous wall-thickness variables are optimized to maximize specific energy absorption (SEA), with each evaluation requiring an LS-DYNA finite-element simulation. Using the Bayesian optimization instantiation, OnDesign generates acquisition functions online to select subsequent solutions for simulation. The comparison uses the same baselines as the Bayesian optimization benchmarks, with 19 initial and 40 sequential evaluations ($FE_{\max}=59$); full problem and evaluation details appear in Appendix~\ref{app:engineering_problem}. Figure~\ref{fig:case1-convergence} shows that OnDesign converges rapidly and reaches approximately 40.95 kJ/kg, the highest final SEA among the compared methods.

\par\medskip
\noindent
\begin{minipage}[t]{0.53\textwidth}
  \vspace{0pt}
  \centering
  \captionof{table}{Results on evolutionary mixed-variable optimization scenario. Average rank ($\downarrow$) with $+/-/=$.}
  \label{tab:mix-main}
  \begingroup
  \fontsize{8.5}{9.5}\selectfont
  \setlength{\tabcolsep}{0.8pt}
  \renewcommand{\arraystretch}{1.0}
  \arrayrulecolor{black}
  \setlength{\ODMixHalfDataWidth}{\dimexpr(\linewidth-40pt-6\tabcolsep)/3\relax}
  \begin{tabular}{@{}>{\raggedright\arraybackslash}p{40pt}*{3}{>{\centering\arraybackslash}p{\ODMixHalfDataWidth}}@{}}
    \toprule
    \multirow{2}{*}[-2.8pt]{\textbf{Method}} & \multicolumn{2}{c}{\textbf{MV-BBOB}} & \textbf{EOPCCV} \\
    \cmidrule(lr){2-3}\cmidrule(l){4-4}
     & 10D & 30D & 10D \\
    \midrule
    CoDE & 9.79~{\fontsize{7.5}{9}\selectfont(0/23/1)} & 8.96~{\fontsize{7.5}{9}\selectfont(0/23/1)} & 10.50~{\fontsize{7.5}{9}\selectfont(0/30/0)} \\
    MDE-IHS & 5.27~{\fontsize{7.5}{9}\selectfont(1/21/2)} & 6.69~{\fontsize{7.5}{9}\selectfont(0/23/1)} & 7.60~{\fontsize{7.5}{9}\selectfont(0/30/0)} \\
    $\mathrm{DE}_{\mathrm{MV}}$ & 7.94~{\fontsize{7.5}{9}\selectfont(0/23/1)} & 8.77~{\fontsize{7.5}{9}\selectfont(0/23/1)} & 5.53~{\fontsize{7.5}{9}\selectfont(0/30/0)} \\
    Sig-DE & 9.29~{\fontsize{7.5}{9}\selectfont(0/23/1)} & 9.38~{\fontsize{7.5}{9}\selectfont(0/23/1)} & 9.87~{\fontsize{7.5}{9}\selectfont(0/30/0)} \\
    IMI-GWO & 4.54~{\fontsize{7.5}{9}\selectfont(3/18/3)} & 4.15~{\fontsize{7.5}{9}\selectfont(7/17/0)} & 3.38~{\fontsize{7.5}{9}\selectfont(0/30/0)} \\
    \noalign{\vskip1.25pt\hbox to\linewidth{\color[gray]{0.45}\leaders\hbox{\rule{2pt}{0.3pt}\kern2pt}\hfill}\vskip1.25pt}
    SHADE & 3.60~{\fontsize{7.5}{9}\selectfont(3/17/4)} & 3.46~{\fontsize{7.5}{9}\selectfont(3/18/3)} & 4.33~{\fontsize{7.5}{9}\selectfont(0/30/0)} \\
    $\mathrm{PSO}_{\mathrm{MV}}$ & 10.06~{\fontsize{7.5}{9}\selectfont(0/23/1)} & 9.94~{\fontsize{7.5}{9}\selectfont(0/23/1)} & 8.17~{\fontsize{7.5}{9}\selectfont(0/30/0)} \\
    \noalign{\vskip1.25pt\hbox to\linewidth{\color[gray]{0.45}\leaders\hbox{\rule{2pt}{0.3pt}\kern2pt}\hfill}\vskip1.25pt}
    EoH & 4.65~{\fontsize{7.5}{9}\selectfont(1/18/5)} & 4.40~{\fontsize{7.5}{9}\selectfont(6/18/0)} & 3.55~{\fontsize{7.5}{9}\selectfont(0/30/0)} \\
    MEoH & 4.02~{\fontsize{7.5}{9}\selectfont(1/17/6)} & 3.42~{\fontsize{7.5}{9}\selectfont(9/13/2)} & 4.72~{\fontsize{7.5}{9}\selectfont(0/30/0)} \\
    ReEvo & 4.71~{\fontsize{7.5}{9}\selectfont(4/16/4)} & 4.25~{\fontsize{7.5}{9}\selectfont(8/15/1)} & 7.35~{\fontsize{7.5}{9}\selectfont(0/30/0)} \\
    \hline
    \rowcolor{ODMixHalfOurs}
    \textbf{OnDesign} & \textbf{2.13} & \textbf{2.60} & \textbf{1.00} \\
    \hline
  \end{tabular}
  \endgroup
\end{minipage}
\hfill
\begin{minipage}[t]{0.49\textwidth}
  \vspace{19pt}
  \centering
  \includegraphics[width=0.75\linewidth]{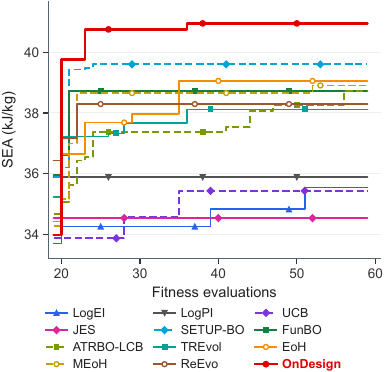}
  \par\vspace{-12pt}
  \captionof{figure}{Engineering validation.}
  \label{fig:case1-convergence}
\end{minipage}
\par\medskip

Taken together, these results show that OnDesign can design acquisition functions and offspring-generation operators across optimization paradigms and variable types, while also effectively addressing expensive engineering optimization problems under limited evaluation budgets.

\paragraph{Case study: Online algorithm design in action.}
To illustrate the dynamics of online algorithm design, we conduct a case study in the Bayesian optimization setting. Figure~\ref{fig:algorithm_evolution} traces how OnDesign redesigns acquisition functions over the course of a target optimization run. Specifically, the traced run considers the 10-dimensional BBOB F1 problem, using 21 initial evaluations followed by 100 sequential evaluations. The 12 displayed checkpoints include all eight evaluations that improve the incumbent and four additional snapshots that expose intervening design changes. Throughout the run, the generated acquisition functions retain an improvement-based backbone while repeatedly restructuring how posterior mean and uncertainty affect candidate selection, including confidence-bound corrections, uncertainty normalization, and improvement-dependent gates. The selected design direction also shifts between exploration- and exploitation-oriented behavior rather than following a predefined schedule. The optimality gap decreases from 26.167 after initialization to 0.02496 at the end of the run. In particular, the acquisition function generated at $FE=56$ reduces the gap from 0.12372 to 0.03794. Appendix~\ref{app:bo_case_study_fe056} expands this checkpoint, tracing the runtime evidence, competing proposals, Arbiter synthesis, executed code, and resulting feedback.

\begin{figure}[!t]
\centering
\includegraphics[width=0.8\linewidth]{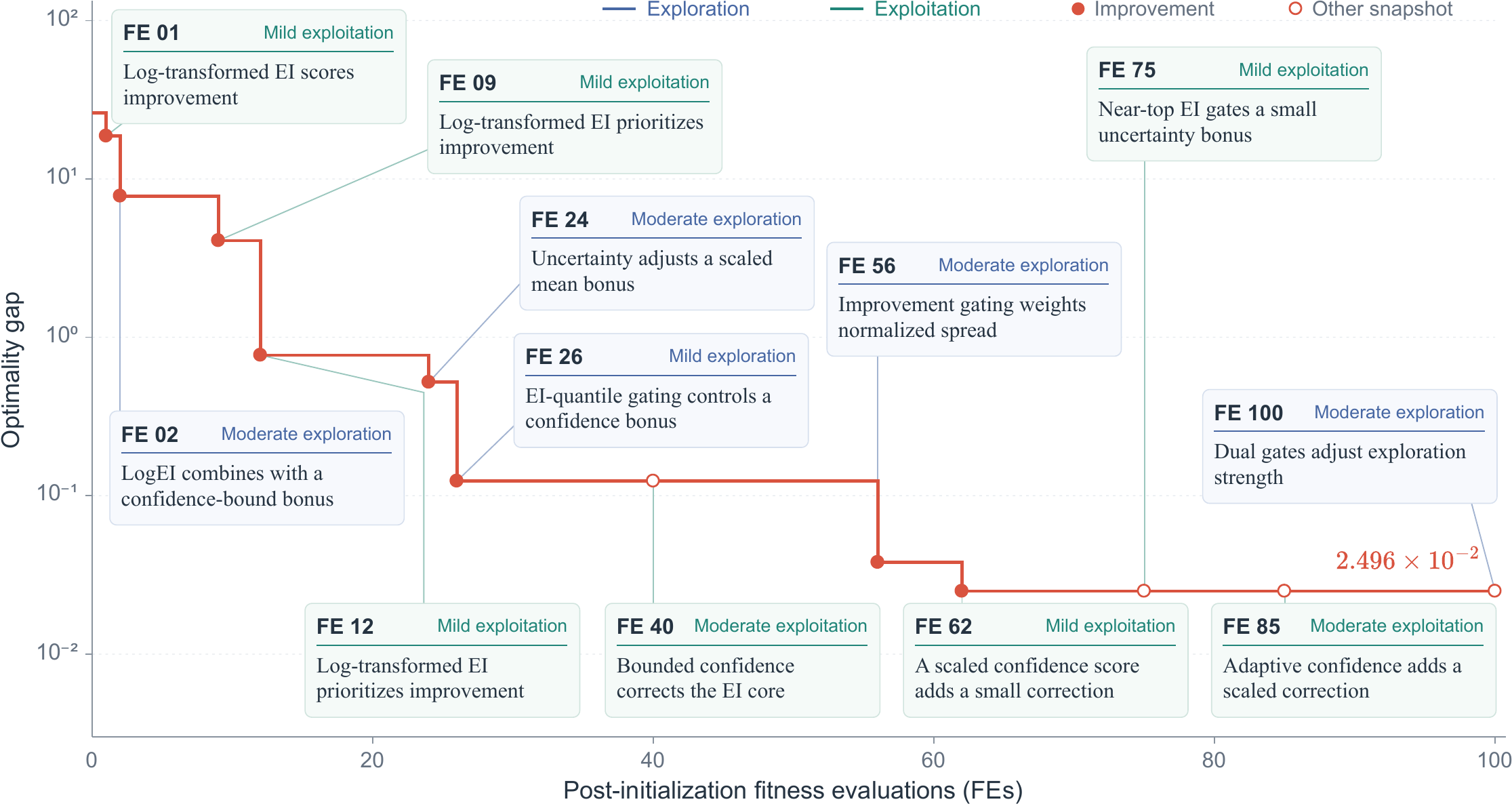}
\caption{Evolution of acquisition functions during a Bayesian optimization run on the 10-dimensional BBOB F1 problem. The red step curve shows the incumbent optimality gap, $f_{\mathrm{best}}-f^\star$.} 
\label{fig:algorithm_evolution}
\end{figure}

\Needspace{18\baselineskip}

\subsection{Ablation Study}
\label{sec:ablation}

\begin{wraptable}{r}{0.49\textwidth}
  \centering
  \caption{Module ablation on Bayesian optimization scenario. Average rank ($\downarrow$) with $+/-/= $.}
  \label{tab:bo-ablation-main}

  \begingroup
  \fontsize{8.5}{9.5}\selectfont
  \setlength{\tabcolsep}{0.8pt}
  \renewcommand{\arraystretch}{1.0}
  \arrayrulecolor{black}

  \begin{tabular}{@{}>{\raggedright\arraybackslash}p{54pt}*{2}{>{\centering\arraybackslash}p{\dimexpr(\linewidth-54pt-4\tabcolsep)/2\relax}}@{}}
    \toprule
    \multirow{2}{*}[-2.8pt]{\textbf{Method}}
      & \multicolumn{2}{c}{\textbf{BBOB}} \\
    \cmidrule(l){2-3}
      & 10D & 30D \\
    \midrule
    w/o SA
      & 2.69~{\fontsize{7.5}{9}\selectfont(6/15/3)}
      & 2.60~{\fontsize{7.5}{9}\selectfont(8/14/2)} \\
    w/o MAD
      & 2.56~{\fontsize{7.5}{9}\selectfont(5/15/4)}
      & 2.88~{\fontsize{7.5}{9}\selectfont(7/16/1)} \\
    w/o SAGE
      & 2.83~{\fontsize{7.5}{9}\selectfont(6/15/3)}
      & 2.46~{\fontsize{7.5}{9}\selectfont(8/14/2)} \\
    \hline
    \rowcolor[HTML]{DAF5FB}
    \textbf{OnDesign}
      & \textbf{1.92}
      & \textbf{2.06} \\
    \hline
  \end{tabular}
  \endgroup
\end{wraptable}

To assess the contribution of each module, we compare OnDesign with three variants on the 24 BBOB functions at $D\in\{10,30\}$. \emph{w/o SA} bypasses the State Analysis module and provides runtime observations directly for algorithm design; \emph{w/o MAD} removes the \emph{Proposers} and relies solely on the \emph{Arbiter} for algorithm synthesis; and \emph{w/o SAGE} disables the SAG Evolution module, retaining the initial SAG throughout optimization. All other settings remain unchanged. Detailed results are given in Table~\ref{tab:bo-ablation-details}.

 As summarized in Table~\ref{tab:bo-ablation-main}, OnDesign achieves the best average rank at both dimensions (1.92 and 2.06, respectively) and outperforms each variant on 14--16 of the 24 functions. All three variants have worse average ranks, with the largest degradation observed for \emph{w/o SAGE} at $D=10$ and \emph{w/o MAD} at $D=30$. Removing any module worsens the average rank in both tested dimensions. This pattern is consistent with the complementary roles highlighted by Proposition~\ref{prop:online_design_gap}: State Analysis and SAG Evolution support the decision-relevant representation associated with current and subsequent $\varepsilon_t$, while Multi-Agent Deliberation supports algorithm coverage and synthesis associated with $\kappa_t$ and $\eta_t$.

\section{Conclusion}
\label{sec:conclusion}

We introduced online LLM-based AAD, bringing state-dependent algorithm synthesis into the target optimization process. OnDesign implements this formulation through State Analysis, Multi-Agent Deliberation, and SAG Evolution, with execution feedback informing both subsequent algorithm designs and the interpretation of runtime evidence. Experiments in Bayesian optimization, evolutionary continuous optimization, and evolutionary mixed-variable optimization show overall advantages over conventional optimizers and offline LLM-based AAD baselines, with further validation on an engineering task. 

The present evaluation focuses on single-objective problems with up to 30 decision variables, while applicability to large-scale, noisy, constrained, and multiobjective settings requires further study. Future work will broaden the evaluation to these settings.
%
%

\section*{Ethics Statement}
We confirm that all authors of this submission have read and agree to abide by the ICLR Code of Ethics.

\section*{Reproducibility Statement}
The code is provided in the supplementary material to enable the reproduction of our results.

\section*{AI Use Statement}
Generative AI tools were used in this work to improve the presentation of the manuscript and assist with preliminary coding. They were not used to formulate the research idea or design the proposed algorithm. All AI-assisted outputs, including revisions to the text, were carefully reviewed by the authors. The authors assume full responsibility for the final manuscript and for all text, claims, and artifacts produced with AI assistance.

\flushbottom
\bibliography{iclr2027_conference}
\bibliographystyle{iclr2027_conference}

\appendix

\clearpage
\section{Related Work}

\textbf{LLM-Based Automated Algorithm Design.}
LLM-based AAD enables algorithms to be generated and refined through program synthesis~\citep{romera2024mathematical}. Representative methods, including EoH~\citep{liu2024eoh}, MEoH~\citep{yao2025meoh}, and ReEvo~\citep{ye2024reevo}, formulate algorithm discovery as a search over executable programs or heuristics. Further advances enhance this search through feedback-guided reflection~\citep{zhong2026hifo}, niching-enhanced population evolution~\citep{hu2025partition}, tree-based exploration~\citep{zheng2025monte}, and performance-guided co-evolution of prompts and heuristics~\citep{liu2025experience}. These developments broaden both the range of discoverable algorithms and the mechanisms for exploring candidate designs. Nevertheless, the methods above generally follow an offline workflow: candidates are generated and evaluated before deployment, and a selected algorithm is then held fixed during the target run. Consequently, performance feedback guides a separate pre-deployment design loop rather than algorithm redesign within the target optimization process.

\textbf{Runtime Algorithm Adaptation.}
A complementary line of research adapts optimization behavior during execution through dynamic parameter control~\citep{zhang2009jade,tanabe2013success}, operator selection, algorithm portfolios~\citep{vasconcelos2022self}, or learned control policies~\citep{guo2024deep,guo2025configx}. These approaches respond to evolving search conditions, typically within a predefined space of parameters, operators, or strategies. Recent efforts have also explored LLM-assisted parameter adaptation~\citep{xu2026autoep}. However, their reliance on predefined adaptation mechanisms can limit their flexibility when effective search requires changes to the underlying algorithmic logic.


\textbf{Positioning of Our Work.}
Online LLM-based AAD integrates program synthesis into the target optimization process, combining the generative scope of LLM-based AAD with the responsiveness of runtime algorithm adaptation. Unlike offline LLM-based AAD, algorithm design continues during the target run; unlike conventional runtime algorithm adaptation, it can extend beyond predefined parameters, operators, or strategies to synthesize new executable algorithmic logic. OnDesign instantiates this formulation through a multi-agent feedback loop: LLM agents generate algorithms from current-state evidence and prior design outcomes, while executing these algorithms advances the search and informs subsequent designs. This shifts the design question from \emph{how a predefined algorithm should adapt} to \emph{what algorithm should run next}.

\clearpage
\section{Performance-Gap Analysis of OnDesign}
\label{app:theory}

This appendix develops the performance-gap analysis in Sec.~\ref{sec:framework}. Its central result, Proposition~\ref{prop:online_design_gap}, bounds the difference in expected terminal utility between OnDesign and an ideal online designer by accumulated design-space, representation, and synthesis errors. Appendix~\ref{app:theory_setup} specifies the comparison and information structure, and Appendix~\ref{app:theory_errors} derives the bound and relates its terms to OnDesign's modules. Appendix~\ref{app:theory_aliasing} examines decision-relevant state information, while Appendix~\ref{app:theory_comparison} gives a conditional comparison with restricted design classes as a corollary. Complete proofs and additional technical details appear in Appendix~\ref{app:theory_proof}.

\subsection{Setup and Reference Designer}
\label{app:theory_setup}

\paragraph{What is compared?}
We compare the expected quality of the best solution found by the end of a common function-evaluation budget. All designers share the task distribution, initial information, program interface, and execution and observation law. The common information constraints specify access to task observations: decisions use only information available at the current stage, without access to hidden task information or future observations. At each history, the ideal online designer may choose any budget-compatible algorithm in $\Pi$ and use the full available history. OnDesign organizes this information into the report-based input defined below. Information obtained before deployment, including information from offline design, is included in the common initial history $\xi_0$.

Let $\xi_t$ denote the complete available history before the algorithm at stage $t$ is chosen, including observations, previously deployed algorithms, relevant internal records, and the remaining budget. Assume terminating executions and a finite upper bound $T$ on the number of design stages. Runs ending earlier are completed with absorbing stages that leave the outcome unchanged. Program-specific evaluation requirements are included in the remaining-budget constraint. We measure the final outcome by a bounded utility
\begin{equation}
U:\Xi_T\longrightarrow[0,1],
\label{eq:theory_terminal_utility}
\end{equation}
where $\Xi_T$ denotes the set of terminal histories, and larger utility values mean better solutions. Any utility with a known finite range can be put on this scale by a positive affine transformation. The same utility is used for every designer. The execution gains $r_t$ provide feedback during the run, while $U$ evaluates the final result; $U$ need not equal the sum of those gains.

\paragraph{What information reaches algorithm synthesis?}
The general design state $(s_t,h_t)$ in Sec.~\ref{sec:oaad} combines the optimization state and design history. As described in Sec.~\ref{sec:framework}, OnDesign instantiates these components through a state report $q_t$ and an algorithm archive $\mathbb H_t$. Following that section, we denote its report-based design input by $z_t$ and write
\begin{equation}
z_t=\rho_t(\xi_t)=(q_t,\mathbb H_t).
\label{eq:theory_representation}
\end{equation}
Thus, $\xi_t$ is the full history used for analysis, while $z_t$ is the input used for algorithm synthesis under the fixed task instructions $\iota$. Equal access to task observations permits different representations of those observations; the representation error below quantifies how well $z_t$ supports value assessment relative to the full-history reference. Different histories may produce the same input $z_t$, and SAG updates can change the mapping $\rho_t$. Using complete histories for the analysis accommodates report-based inputs that need not themselves be Markov states, as in work on history aggregation~\citep{hutter2016extreme}.

\paragraph{How is the value of an algorithm defined?}
A design policy $\mathcal D$ is a rule for choosing the next program from the available information. At history $\xi_t$, its choice belongs to $\Pi(\xi_t)\subseteq\Pi$, the algorithms compatible with the task interface and remaining evaluation budget. Executing the chosen program produces the next history. The expected final utility of following $\mathcal D$ from a history $\xi$ is
\begin{equation}
V_t^{\mathcal D}(\xi)
=\mathbb E_{\mathcal D}\!\left[U(\xi_T)\mid\xi_t=\xi\right].
\label{eq:theory_policy_value}
\end{equation}
Two reference values help evaluate a design decision. The value $V_t^*(\xi)$ describes the best expected outcome attainable from the current history. The value $Q_t^*(\xi,\pi)$ describes the outcome of executing program $\pi$ now and making optimal design decisions afterwards. Their difference therefore measures the loss associated with the current choice, including its effects on the subsequent search.

Using finite-horizon dynamic programming~\citep{puterman2014markov}, these values satisfy
\begin{equation}
\begin{aligned}
V_t^*(\xi)
&=\sup_{\mathcal D}V_t^{\mathcal D}(\xi),\\
Q_t^*(\xi,\pi)
&=\mathbb E\!\left[V_{t+1}^*(\xi_{t+1})
    \mid\xi_t=\xi,\pi_t=\pi\right],\\
V_t^*(\xi)
&=\sup_{\pi\in\Pi(\xi)}Q_t^*(\xi,\pi),
\qquad V_T^*(\xi)=U(\xi).
\end{aligned}
\label{eq:theory_bellman}
\end{equation}
These are analytical reference values; OnDesign need not compute them. We use suprema so that a best program or policy need not be attained. The regularity conditions for this recursion are stated in Appendix~\ref{app:theory_proof}. Below, $V_0^*$ abbreviates $V_0^*(\xi_0)$, $V_0^{\mathrm{OnDesign}}$ abbreviates $V_0^{\mathrm{OnDesign}}(\xi_0)$, and expectations are conditional on $\xi_0$.

\subsection{Error Decomposition and Performance-Gap Bound}
\label{app:theory_errors}

We analyze each design decision through the accessible algorithm family, the information supporting value assessment, and the quality of the synthesized program. Fix an accessible family and a reference score as specified below. The resulting three error terms share the final-utility scale and jointly bound the performance gap; their interaction is allowed throughout the analysis.

\paragraph{Design-space error: which algorithms are available?}
Let $\mathcal C_t(\xi_t)\subseteq\Pi(\xi_t)$ be a nonempty family accessible to the design procedure, specified before the current output is sampled and containing that output $\pi_t$ almost surely. The difference between the best attainable values in the full and accessible families is
\begin{equation}
\kappa_t(\xi_t)
=V_t^*(\xi_t)
-\sup_{\pi\in\mathcal C_t(\xi_t)}Q_t^*(\xi_t,\pi)
\ge0.
\label{eq:theory_coverage}
\end{equation}
Both alternatives are evaluated with optimal continuation after the current stage. At a fixed history, enlarging the accessible family cannot increase this error. The family describes the programs accessible through synthesis; it need not be explicitly enumerated.

\paragraph{Representation error: can the input support accurate value judgments?}
Introduce a reference score $G_t(z,\pi)\in[0,1]$ that uses only the design input and the candidate program. It expresses the value information supported by the representation for purposes of analysis, without requiring an explicit value estimator in OnDesign. We assume that, along the histories visited by OnDesign, this score approximates program values with error at most $\varepsilon_t(\xi_t)\ge0$:
\begin{equation}
\sup_{\pi\in\mathcal C_t(\xi_t)}
\left|G_t(z_t,\pi)-Q_t^*(\xi_t,\pi)\right|
\le\varepsilon_t(\xi_t).
\label{eq:theory_representation_error}
\end{equation}
The same score function must apply whenever the input $z_t$ is the same; it cannot recover information omitted from that input. This connects the condition to approximate value-preserving state abstraction~\citep{abel2016near}. Appendix~\ref{app:theory_aliasing} gives an example of how omitted information can affect design decisions.

The archive $\mathbb H_t=\{(\pi_j,r_j)\}_{j<t}$ supplies feedback from earlier designs, while $q_t$ characterizes current search conditions. The representation condition concerns their joint ability to support candidate-value assessment, including any historical context needed to interpret earlier gains.

\paragraph{Synthesis error: how good is the program actually generated?}
Using the same reference score $G_t$ as in Eq.~(\ref{eq:theory_representation_error}), we quantify how closely the generated program realizes the best accessible assessed value. We assume, almost surely,
\begin{equation}
\sup_{\pi\in\mathcal C_t(\xi_t)}G_t(z_t,\pi)
-G_t(z_t,\pi_t)
\le\eta_t(\xi_t,\pi_t),
\label{eq:theory_synthesis_error}
\end{equation}
where $\eta_t(\xi_t,\pi_t)\ge0$. This error measures how closely synthesis realizes the highest-scoring accessible design. The condition permits randomized synthesis and allows the error to depend on the sampled output.

\paragraph{From one decision to the final result.}
These conditions bound the loss of each actual design decision:
\begin{equation}
V_t^*(\xi_t)-Q_t^*(\xi_t,\pi_t)
\le\kappa_t+2\varepsilon_t+\eta_t.
\label{eq:theory_stage_bound}
\end{equation}
The factor $2$ appears because value approximation enters both sides of the comparison: the best accessible program and the program actually generated. The terms form an upper bound rather than an equality decomposition of the decision loss.

The next identity connects these individual decisions to final performance. It is a finite-horizon form of the performance-difference identity~\citep{kakade2002approximately}, specialized to the terminal utility used here.
\begin{odlemma}[Finite-horizon value difference]
\label{lem:theory_value_difference}
For the process in Appendix~\ref{app:theory_setup}, under the technical conditions in Appendix~\ref{app:theory_proof},
\begin{equation}
V_0^*-V_0^{\mathrm{OnDesign}}
=\mathbb E_{\mathrm{OnDesign}}\!\left[
\sum_{t=0}^{T-1}
\left(V_t^*(\xi_t)-Q_t^*(\xi_t,\pi_t)\right)
\right].
\label{eq:theory_value_difference}
\end{equation}
\end{odlemma}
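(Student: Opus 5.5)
The plan is a telescoping argument along the trajectory generated by OnDesign. The value $V_0^{\mathrm{OnDesign}}$ equals $\mathbb E_{\mathrm{OnDesign}}[U(\xi_T)]$. By the terminal condition in Eq.~(\ref{eq:theory_bellman}), $U(\xi_T)=V_T^*(\xi_T)$ on every terminal history. Since $\xi_0$ is fixed and expectations are conditional on it, $V_0^*=\mathbb E_{\mathrm{OnDesign}}[V_0^*(\xi_0)]$. Hence
\begin{equation*}
V_0^*-V_0^{\mathrm{OnDesign}}
=\mathbb E_{\mathrm{OnDesign}}\!\left[V_0^*(\xi_0)-V_T^*(\xi_T)\right]
=\sum_{t=0}^{T-1}\mathbb E_{\mathrm{OnDesign}}\!\left[V_t^*(\xi_t)-V_{t+1}^*(\xi_{t+1})\right].
\end{equation*}
This is a finite telescoping sum of bounded terms, because every $V_t^*$ takes values in $[0,1]$. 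Each expectation is therefore finite and linearity applies.

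The key step is to replace $V_{t+1}^*(\xi_{t+1})$ by $Q_t^*(\xi_t,\pi_t)$ inside each expectation. I condition on $(\xi_t,\pi_t)$, using the tower property for the OnDesign process. Under OnDesign, the next history $\xi_{t+1}$ is produced from $(\xi_t,\pi_t)$ by the common execution and observation law. This law does not depend on the designer, because all designers share it by the setup of Appendix~\ref{app:theory_setup}. It follows that
\begin{equation*}
\mathbb E_{\mathrm{OnDesign}}\!\left[V_{t+1}^*(\xi_{t+1})\mid \xi_t,\pi_t\right]=Q_t^*(\xi_t,\pi_t)
\end{equation*}
almost surely, which is exactly the second line of Eq.~(\ref{eq:theory_bellman}). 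Substituting into each summand and recombining the sum by linearity gives Eq.~(\ref{eq:theory_value_difference}). Randomized synthesis, including LLM sampling and SAG updates that change $\rho_t$, is harmless here. All such randomness sits in the choice of $\pi_t$ given $\xi_t$, and the identity conditions on that choice. The absorbing stages for early termination contribute zero terms, since on them $\xi_{t+1}=\xi_t$ and $Q_t^*=V_t^*=V_{t+1}^*$.

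I expect the main obstacle to be measure-theoretic rather than algebraic. The first issue is making $V_t^*$ and $Q_t^*$ measurable functions of the history when they are defined by suprema over a possibly uncountable program space $\Pi(\xi)$. The second is justifying that the conditional expectation of $V_{t+1}^*$ under the shared kernel equals $Q_t^*$ when $\pi_t$ is a history-dependent random program. My first approach is to invoke the regularity conditions deferred to Appendix~\ref{app:theory_proof}. These are a standard Borel structure on histories and programs, a measurable transition kernel, and measurability (or universal measurability) of the optimal values, as in finite-horizon dynamic programming~\citep{puterman2014markov}. If they are unavailable, I would take $\Pi$ countable, being a set of finite programs, which makes every supremum measurable directly.
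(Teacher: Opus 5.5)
Your proposal is correct and follows essentially the same route as the paper. Both proofs telescope $V_0^*(\xi_0)-V_T^*(\xi_T)$ along OnDesign's trajectory using $V_T^*=U$, then apply the tower property with the shared execution law to replace $\mathbb E_{\mathrm{OnDesign}}[V_{t+1}^*(\xi_{t+1})]$ by $\mathbb E_{\mathrm{OnDesign}}[Q_t^*(\xi_t,\pi_t)]$ stage by stage. Your added remarks on boundedness and measurability only make explicit what the paper delegates to its technical conditions, and the absorbing stages need no separate treatment because the tower-property step applies uniformly at every stage.
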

The sum follows OnDesign's actual trajectory. Each term already includes the effect of the current choice on future opportunities through $Q_t^*$. Thus, the identity accounts for the interaction between successive design decisions.

Combining Eqs.~(\ref{eq:theory_stage_bound}) and~(\ref{eq:theory_value_difference}) proves Proposition~\ref{prop:online_design_gap}. For later comparison, define the accumulated error bound
\begin{equation}
\mathcal E_{\mathrm{OnDesign}}
:=
\min\!\left\{1,\,
\mathbb E_{\mathrm{OnDesign}}\!\left[
\sum_{t=0}^{T-1}(\kappa_t+2\varepsilon_t+\eta_t)
\right]\right\}.
\label{eq:theory_accumulated_error}
\end{equation}
Then $0\le V_0^*-V_0^{\mathrm{OnDesign}}\le\mathcal E_{\mathrm{OnDesign}}$. The cap of one uses the normalization $U\in[0,1]$ adopted here; the uncapped bound is the result stated in Proposition~\ref{prop:online_design_gap}. If the expected stage error satisfies $\mathbb E_{\mathrm{OnDesign}}[\kappa_t+2\varepsilon_t+\eta_t]\le\delta$ for every $t$, then $\mathcal E_{\mathrm{OnDesign}}\le\min\{1,T\delta\}$. The result thus identifies the accumulated design error that controls proximity to the ideal reference under the stated conditions.

\paragraph{Connection to OnDesign.}
The three modules address complementary aspects of the bound. State Analysis organizes decision-relevant evidence in $q_t$, targeting representation error $\varepsilon_t$. Multi-Agent Deliberation develops alternative design directions and integrates them into executable code, linking its role to the accessible family $\mathcal C_t$ and the errors $\kappa_t$ and $\eta_t$. SAG Evolution updates the analytical focus used to construct subsequent reports, targeting future representation errors. These connections describe how the modules can contribute to a smaller performance gap through reductions in accumulated error; the bound accommodates their coupled effects.

\subsection{Representation Ambiguity and State Information}
\label{app:theory_aliasing}

A useful state report preserves information that can change which algorithm should be generated next. Consider a one-stage example with two equally likely histories and two candidate programs. One program emphasizes exploration and the other emphasizes local refinement. Suppose their terminal utilities are
\begin{center}
\begin{tabular}{lcc}
\toprule
History & Exploration program & Refinement program\\
\midrule
$\xi$: exploration is useful & $1$ & $0$\\
$\xi'$: refinement is useful & $0$ & $1$\\
\bottomrule
\end{tabular}
\end{center}
If both histories produce the same design input $z$, including the same report and algorithm archive, a designer whose choices depend only on that input uses the same distribution over programs in both cases. Its expected utility is $1/2$, whatever that distribution is. A designer that distinguishes the histories can choose the useful program in each case and achieve utility $1$. This example isolates the value of retaining information that changes the preferred algorithm, even when both useful programs are already accessible.

The relevant connection is to value-preserving state abstraction~\citep{li2006towards} and its approximate variants~\citep{abel2016near}. More generally, when two histories share the same input, a score based on that input must assign the same value to a given program. If its true values differ, at least one approximation must be inaccurate.

\begin{odproposition}[Value ambiguity under a shared representation]
\label{prop:theory_aliasing}
Consider two histories $\xi$ and $\xi'$ at the same stage with $\rho_t(\xi)=\rho_t(\xi')=z$. For any program $\pi$ belonging to both accessible families, every score based only on $(z,\pi)$ satisfies
\begin{equation}
\begin{aligned}
&\max\left\{
\left|G_t(z,\pi)-Q_t^*(\xi,\pi)\right|,
\left|G_t(z,\pi)-Q_t^*(\xi',\pi)\right|
\right\}\\
&\qquad\ge\frac12
\left|Q_t^*(\xi,\pi)-Q_t^*(\xi',\pi)\right|.
\end{aligned}
\label{eq:theory_aliasing}
\end{equation}
\end{odproposition}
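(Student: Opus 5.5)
The argument is a one-line triangle inequality. The only substantive point is that the score takes the same value at both histories because they share the same input.

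Fix the stage $t$, the two histories $\xi,\xi'$ with $\rho_t(\xi)=\rho_t(\xi')=z$, and a program $\pi$ in both accessible families $\mathcal C_t(\xi)$ and $\mathcal C_t(\xi')$. Since $G_t$ is a function of $(z,\pi)$ only, as required in the representation condition of Eq.~(\ref{eq:theory_representation_error}), it takes the single value $g:=G_t(z,\pi)\in[0,1]$ at both histories. Write $a:=Q_t^*(\xi,\pi)$ and $b:=Q_t^*(\xi',\pi)$. Both are well defined and lie in $[0,1]$ by the Bellman recursion in Eq.~(\ref{eq:theory_bellman}) and the boundedness of $U$.

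The key step is to insert $g$ between $a$ and $b$ and apply the triangle inequality:
\begin{equation*}
|a-b|\le |a-g|+|g-b|\le 2\max\{|g-a|,|g-b|\}.
\end{equation*}
Dividing by two gives Eq.~(\ref{eq:theory_aliasing}). No expectations are involved, because the inequality is pointwise in the program and in the pair of histories.

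There is no real analytical obstacle. The only care needed is to make explicit that the shared-input requirement forces the \emph{same} number $g$ to appear in both deviations. Without this, the score could approximate $Q_t^*$ exactly at each history separately and the bound would fail.

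I would close with the consequence for Eq.~(\ref{eq:theory_representation_error}): on any pair of histories visited by OnDesign that share an input, $\max\{\varepsilon_t(\xi),\varepsilon_t(\xi')\}$ is at least half the value gap of any commonly accessible program. This is the bound the two-history example in the table attains, with gap $1$ and error $1/2$.
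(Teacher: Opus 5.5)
Your proposal is correct and matches the paper's proof essentially step for step: the shared input forces the single value $G_t(z,\pi)$ at both histories, and the triangle inequality through that common value forces one of the two deviations to be at least $\tfrac12\left|Q_t^*(\xi,\pi)-Q_t^*(\xi',\pi)\right|$. Your closing remark is an additional correct observation the paper does not spell out: in the two-history example, a score of $1/2$ has error exactly $1/2$ against a value gap of $1$, so the bound is tight there.
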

The shared-representation condition applies to the complete design input $(q_t,\mathbb H_t)$, including both the report and the archive. The proposition gives a lower bound on value-approximation error when histories with different program values share that input. Its decision-level implication depends on which program is preferred: the example above turns value ambiguity into a performance gap by reversing the preferred program across histories. Preserving distinctions relevant to this choice therefore provides a concrete objective for state analysis.

\paragraph{Connection to State Analysis and SAG Evolution.}
State Analysis organizes evidence that distinguishes useful algorithmic responses. As the search evolves, SAG Evolution updates the guideline underlying $\rho_t$ to direct subsequent reports toward newly relevant evidence. The execution gains provide feedback for this refinement, as described in Sec.~\ref{sec:framework}. Proposition~\ref{prop:theory_aliasing} explains the information-preservation objective, while Proposition~\ref{prop:online_design_gap} characterizes its benefit when refinement reduces accumulated representation error.

\subsection{Conditional Comparisons with Restricted Design Classes}
\label{app:theory_comparison}

The performance-gap bound also yields a comparison with a specified design class. The following corollary relates the potential value of greater design freedom to the accumulated error bound for OnDesign.

\paragraph{Which methods form the comparison class?}
Let $\mathfrak D_{\mathrm{all}}$ contain all design policies allowed in Eq.~(\ref{eq:theory_bellman}). These policies use available history and obey the common interface and budget. Two restricted classes are of interest:
\begin{itemize}
\item \textbf{Offline AAD.} Fix a family $\mathcal F$ before deployment. A policy in $\mathfrak D_{\mathrm{off}}(\mathcal F)$ selects a program from this family using the initial information and keeps that program throughout the run. The deployed program may have internal state and adapt its actions to observations; the restriction concerns redesigning its code.
\item \textbf{Predefined adaptation.} Fix a template family $\{\pi_\theta:\theta\in\Theta_{\mathrm{pre}}\}$ before deployment. A policy in $\mathfrak D_{\mathrm{adapt}}(\Theta_{\mathrm{pre}})$ may use the complete available history to choose configurations, but its programs must remain in this family.
\end{itemize}
Both are subsets of $\mathfrak D_{\mathrm{all}}$: an unrestricted online policy can retain a fixed program or stay within a fixed template family. Each class is compared separately with unrestricted online design while holding its declared family fixed.

\paragraph{Potential value and realized advantage.}
For any nonempty restricted class $\mathfrak R\subseteq\mathfrak D_{\mathrm{all}}$, define
\begin{equation}
V_0^{\mathfrak R}
:=\sup_{\mathcal D\in\mathfrak R}V_0^{\mathcal D}(\xi_0),
\qquad
\Delta_{\mathfrak R}:=V_0^*-V_0^{\mathfrak R}\ge0.
\label{eq:theory_restriction_gap}
\end{equation}
The restriction gap $\Delta_{\mathfrak R}$ is the performance improvement potentially available beyond the best method in the specified class. Its value depends on the expressiveness of that class and is zero if the class attains the ideal value. It compares complete policies and their induced trajectories, whereas $\kappa_t$ concerns the programs available at one current history with optimal continuation thereafter.

\begin{odcorollary}[Advantage over a restricted design class]
\label{prop:theory_comparison}
Under the conditions of Proposition~\ref{prop:online_design_gap}, every nonempty class $\mathfrak R\subseteq\mathfrak D_{\mathrm{all}}$ satisfies
\begin{equation}
\Delta_{\mathfrak R}-\mathcal E_{\mathrm{OnDesign}}
\le V_0^{\mathrm{OnDesign}}-V_0^{\mathfrak R}
\le\Delta_{\mathfrak R}.
\label{eq:theory_comparison}
\end{equation}
Consequently, if $\mathcal E_{\mathrm{OnDesign}}<\Delta_{\mathfrak R}$, then OnDesign has strictly higher expected terminal utility than every policy in $\mathfrak R$.
\end{odcorollary}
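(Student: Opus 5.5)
The argument is an algebraic rearrangement of the gap in Proposition~\ref{prop:online_design_gap} combined with the definition of $\Delta_{\mathfrak R}$. Write
\[
V_0^{\mathrm{OnDesign}}-V_0^{\mathfrak R}
=\bigl(V_0^*-V_0^{\mathfrak R}\bigr)-\bigl(V_0^*-V_0^{\mathrm{OnDesign}}\bigr)
=\Delta_{\mathfrak R}-\bigl(V_0^*-V_0^{\mathrm{OnDesign}}\bigr),
\]
so both inequalities in Eq.~(\ref{eq:theory_comparison}) reduce to two-sided bounds on the gap $V_0^*-V_0^{\mathrm{OnDesign}}$.

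\textbf{Upper bound.} We need $V_0^{\mathrm{OnDesign}}\le V_0^*$. This is the left inequality of Proposition~\ref{prop:online_design_gap}. It also follows directly from the definition $V_0^*=\sup_{\mathcal D}V_0^{\mathcal D}(\xi_0)$, once we note that OnDesign is itself an admissible policy in $\mathfrak D_{\mathrm{all}}$. It uses only available history and respects the interface and budget; its internal randomness and its dependence on $\rho_t$ and SAG updates are functions of $\xi_t$.

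\textbf{Lower bound.} We need $V_0^*-V_0^{\mathrm{OnDesign}}\le\mathcal E_{\mathrm{OnDesign}}$. The uncapped part is the right inequality of Proposition~\ref{prop:online_design_gap}. The cap at $1$ holds because $U\in[0,1]$ forces every policy value into $[0,1]$, so the gap is at most $1$. Taking the minimum of the two bounds gives the claim.

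\textbf{Strict advantage.} Here we need a little care because $V_0^{\mathfrak R}$ is a supremum. If $\mathcal E_{\mathrm{OnDesign}}<\Delta_{\mathfrak R}$, the lower bound gives $V_0^{\mathrm{OnDesign}}-V_0^{\mathfrak R}>0$. Hence for every $\mathcal D\in\mathfrak R$,
\[
V_0^{\mathcal D}(\xi_0)\le V_0^{\mathfrak R}<V_0^{\mathrm{OnDesign}},
\]
which is strict superiority over each member of the class, not merely over the supremum.

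The only step needing any thought is the admissibility of OnDesign as an element of $\mathfrak D_{\mathrm{all}}$. This ensures that $V_0^*$ dominates its value and that the quantities are well defined under the regularity conditions of Appendix~\ref{app:theory_proof}. The fact that $\mathfrak R\subseteq\mathfrak D_{\mathrm{all}}$, which gives $\Delta_{\mathfrak R}\ge0$, is immediate from the definitions.
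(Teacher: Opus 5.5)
Your proposal is correct and follows essentially the same route as the paper's proof. Like the paper, you add and subtract $V_0^*$, bound the gap $V_0^*-V_0^{\mathrm{OnDesign}}$ between zero and $\mathcal E_{\mathrm{OnDesign}}$ using Proposition~\ref{prop:online_design_gap} together with the range of $U$, and obtain strictness because OnDesign exceeds the supremum over $\mathfrak R$, and hence every member of $\mathfrak R$.
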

The lower bound expresses actual advantage as at least the potential improvement minus the accumulated error bound. Choosing either of the specified offline or predefined adaptive classes gives the corresponding comparison. Thus, a positive restriction gap is realized as a strict advantage when OnDesign's accumulated error bound is smaller than that gap.

Appendix~\ref{app:theory_proof} gives a sufficient criterion for a positive restriction gap. Together, these results connect the scope of accessible designs and the quality of state-conditioned synthesis to expected final optimization performance under a common evaluation budget.

\Needspace{9\baselineskip}
\subsection{Proofs and Technical Details}
\label{app:theory_proof}

\paragraph{Technical conditions.}
Fix the task instructions $\iota$, the initial history $\xi_0$, a task distribution, and a common execution and observation law before the run. Policies may condition on available histories but cannot access the realized hidden task or future observations directly. Histories include relevant realized internal randomness; OnDesign's records include its reports and SAG history. We assume that the conditional distributions and suprema are measurable and admit the finite-horizon Bellman recursion in Eq.~(\ref{eq:theory_bellman}). Countable spaces of finite programs with well-defined execution provide a natural setting.

Programs may consume different amounts of the budget. Their costs enter the transition law and the history-dependent algorithm space $\Pi(\xi_t)$. The analysis considers terminating executions for which the budget imposes a finite upper bound $T$ on the number of design stages. Runs ending earlier are padded with absorbing stages that have a single no-operation program, zero cost, unchanged utility, and zero design errors.

The accessible family contains the generated program almost surely. If $\mathcal C_t(\xi_t)=\Pi(\xi_t)$, then $\kappa_t=0$, and the performance bound is determined by representation and synthesis errors. The analysis allows correlated proposals and an Arbiter that combines advice into a new program. Equations~(\ref{eq:theory_representation_error}) and~(\ref{eq:theory_synthesis_error}) are imposed along OnDesign's histories and outputs. The same expected performance bound follows if the synthesis condition holds in conditional expectation.

\begin{proof}[Proof of Lemma~\ref{lem:theory_value_difference}]
By the definition of $Q_t^*$ and the tower property,
\begin{equation}
\mathbb E_{\mathrm{OnDesign}}\!\left[Q_t^*(\xi_t,\pi_t)\right]
=\mathbb E_{\mathrm{OnDesign}}\!\left[V_{t+1}^*(\xi_{t+1})\right].
\end{equation}
Subtracting and summing over stages cancels all intermediate values:
\begin{equation}
\begin{aligned}
&\mathbb E_{\mathrm{OnDesign}}\!\left[
\sum_{t=0}^{T-1}\bigl(V_t^*(\xi_t)-Q_t^*(\xi_t,\pi_t)\bigr)
\right]\\
&\qquad=V_0^*(\xi_0)
-\mathbb E_{\mathrm{OnDesign}}\!\left[V_T^*(\xi_T)\right]\\
&\qquad=V_0^*(\xi_0)
-\mathbb E_{\mathrm{OnDesign}}\!\left[U(\xi_T)\right]
=V_0^*-V_0^{\mathrm{OnDesign}}.
\end{aligned}
\end{equation}
This argument follows the actual trajectory and does not require independent design decisions.
\end{proof}

\begin{proof}[Proof of Proposition~\ref{prop:online_design_gap}]
First bound the loss of one design decision. Fix a visited history and output program, and suppress the arguments of $\kappa_t$, $\varepsilon_t$, and $\eta_t$. The representation condition gives
\begin{equation}
\begin{aligned}
\sup_{\pi\in\mathcal C_t}Q_t^*(\xi_t,\pi)
&\le\sup_{\pi\in\mathcal C_t}G_t(z_t,\pi)+\varepsilon_t,\\
Q_t^*(\xi_t,\pi_t)
&\ge G_t(z_t,\pi_t)-\varepsilon_t.
\end{aligned}
\end{equation}
Using the definition of $\kappa_t$ and then the synthesis condition,
\begin{equation}
\begin{aligned}
V_t^*(\xi_t)-Q_t^*(\xi_t,\pi_t)
&=\kappa_t+\sup_{\pi\in\mathcal C_t}Q_t^*(\xi_t,\pi)
-Q_t^*(\xi_t,\pi_t)\\
&\le\kappa_t+2\varepsilon_t
+\sup_{\pi\in\mathcal C_t}G_t(z_t,\pi)-G_t(z_t,\pi_t)\\
&\le\kappa_t+2\varepsilon_t+\eta_t.
\end{aligned}
\end{equation}
This proves Eq.~(\ref{eq:theory_stage_bound}). Next take expectations and sum over stages. Lemma~\ref{lem:theory_value_difference} converts this sum into the final performance gap, proving the upper bound. The lower bound follows because the ideal policy class contains OnDesign.
\end{proof}

\begin{proof}[Proof of Proposition~\ref{prop:theory_aliasing}]
The same input requires the same score $G_t(z,\pi)$ in both histories. The triangle inequality gives
\[
\left|Q_t^*(\xi,\pi)-Q_t^*(\xi',\pi)\right|
\le
\left|Q_t^*(\xi,\pi)-G_t(z,\pi)\right|
+
\left|G_t(z,\pi)-Q_t^*(\xi',\pi)\right|.
\]
At least one term on the right must be at least half the left-hand side.
\end{proof}

\Needspace{7\baselineskip}
\begin{proof}[Proof of Corollary~\ref{prop:theory_comparison}]
Adding and subtracting the ideal value gives
\begin{equation}
V_0^{\mathrm{OnDesign}}-V_0^{\mathfrak R}
=\Delta_{\mathfrak R}
-\left(V_0^*-V_0^{\mathrm{OnDesign}}\right).
\end{equation}
Combining Proposition~\ref{prop:online_design_gap} with the range of the terminal utility bounds the parenthesized term between zero and $\mathcal E_{\mathrm{OnDesign}}$, as defined in Eq.~(\ref{eq:theory_accumulated_error}), yielding Eq.~(\ref{eq:theory_comparison}). If $\mathcal E_{\mathrm{OnDesign}}<\Delta_{\mathfrak R}$, OnDesign exceeds the supremum of the restricted class.
\end{proof}

\paragraph{A sufficient criterion for a positive restriction gap.}
Suppose every policy in the restricted class has probability at least $p>0$ of reaching, at a specified stage, histories where each program it can deploy loses at least $m>0$ relative to the ideal choice. Then
\begin{equation}
\Delta_{\mathfrak R}\ge pm,
\qquad
V_0^{\mathrm{OnDesign}}-V_0^{\mathfrak R}
\ge pm-\mathcal E_{\mathrm{OnDesign}}.
\label{eq:theory_positive_advantage}
\end{equation}
The condition and its proof can be made precise as follows.
The value-difference identity applies to any admissible policy $\mathcal D$. Therefore,
\begin{equation}
\Delta_{\mathfrak R}
=\inf_{\mathcal D\in\mathfrak R}
\mathbb E_{\mathcal D}\!\left[
\sum_{t=0}^{T-1}
\left(V_t^*(\xi_t)-Q_t^*(\xi_t,\pi_t)\right)
\right].
\label{eq:theory_restriction_identity}
\end{equation}
Fix a stage $\tau$ and a measurable set $\mathcal B_\tau$ of histories at that stage. Suppose every $\mathcal D\in\mathfrak R$ reaches $\mathcal B_\tau$ at stage $\tau$ with probability at least $p>0$. On this event, suppose every program that the restricted policy may deploy satisfies
\begin{equation}
V_\tau^*(\xi)-Q_\tau^*(\xi,\pi)\ge m>0.
\label{eq:theory_unavoidable_loss}
\end{equation}
Every summand in Eq.~(\ref{eq:theory_restriction_identity}) is nonnegative, and the stage-$\tau$ summand is at least $pm$ in expectation for each restricted policy. Taking the infimum gives $\Delta_{\mathfrak R}\ge pm$. Substituting this into Eq.~(\ref{eq:theory_comparison}) proves Eq.~(\ref{eq:theory_positive_advantage}).

\clearpage
\section{OnDesign for Bayesian Optimization}
\label{app:ondesign_bo}
\textbf{Algorithm~\ref{alg:ondesign_bo}} summarizes OnDesign for Bayesian optimization, with task instructions $\iota$ in Fig.~\ref{fig:prompt_bo_task}. At design stage $t$, $\pi_t$ denotes the current acquisition function, which maps the incumbent value and Gaussian Process (GP) posterior statistics to an acquisition utility for selecting the next candidate. Throughout Appendices~\ref{app:ondesign_bo}--\ref{app:ondesign_emvo}, $f$ denotes the true objective function, $t$ indexes design stages, $N$ is the initial sample size, $FE$ is the number of consumed fitness evaluations,  and $FE_{\text{max}}$ is the fitness evaluation budget. Following Sec.~\ref{sec:framework}, $\mathcal{P}$ denotes \emph{State Analyst} and $\mathcal{E}$ denotes \emph{SAG Refiner}.

\begin{algorithm}[H]
\renewcommand{\algorithmicrequire}{\textbf{Input:}}
\renewcommand{\algorithmicensure}{\textbf{Output:}}
\caption{OnDesign for Bayesian Optimization}
\label{alg:ondesign_bo}
\small
\begin{algorithmic}[1]
\Require $f$, $N$, $FE_{\text{max}}$, $L_{\mathrm{SAG}}$, and task instructions $\iota$
\Ensure Best solution $\textbf{x}^{*}$ and its fitness $f(\textbf{x}^{*})$
\State $t\leftarrow 0$; $[\mathbb{DB},FE]\leftarrow\mathit{Initialization}(N)$;
\State Initialize $\phi_t$; $\mathbb{H}_t\leftarrow\emptyset$; $\mathcal{H}^{\mathrm{SAG}}_t\leftarrow\emptyset$;
\State $\mathbb{MODEL}_t\leftarrow\mathit{Surrogate\_Construction}(\mathbb{DB})$;
\While{$FE<FE_{\text{max}}$}
    \State Collect runtime observations $o_t$;
    \State $q_t\leftarrow\mathcal{P}(o_t;\phi_t)$; \Comment{State Analyst}
    \State $\pi_t\leftarrow\mathit{Multi\text{-}Agent\_Deliberation}(q_t,\mathbb{H}_t;\iota)$;
    \State $\check{\textbf{x}}_t\leftarrow\mathit{Candidate\_Selection}(\mathbb{DB},\mathbb{MODEL}_t,\pi_t)$;
    \State Evaluate $\check{\textbf{x}}_t$ by $f$;
    \State $[\mathbb{H}_{t+1},r_t]\leftarrow\mathit{Feedback\_Update}(\mathbb{H}_t,\pi_t,\mathbb{DB},\check{\textbf{x}}_t)$;
    \State $\mathcal{H}^{\mathrm{SAG}}_{t+1}\leftarrow\mathit{SAG\_Feedback}(\mathcal{H}^{\mathrm{SAG}}_t,\phi_t,r_t)$;
    \State $\mathbb{DB}\leftarrow\mathbb{DB}\cup\{\check{\textbf{x}}_t\}$ and $FE\leftarrow FE+1$;
    \State $\mathbb{MODEL}_{t+1}\leftarrow\mathit{Surrogate\_Construction}(\mathbb{DB})$;
    \State $\phi_{t+1}\leftarrow\mathit{SAG\_Update}(\phi_t,\mathcal{H}^{\mathrm{SAG}}_{t+1};t,L_{\mathrm{SAG}})$; \Comment{SAG Refiner}
    \State $t\leftarrow t+1$;
\EndWhile
\State $\textbf{x}^{*}\leftarrow\arg\min_{\textbf{x}\in\mathbb{DB}} f(\textbf{x})$;
\end{algorithmic}
\end{algorithm}

At design stage $t$, $\mathbb{H}_t$ and $\mathcal{H}^{\mathrm{SAG}}_t$ denote the algorithm and SAG archives, respectively, following Sec.~\ref{sec:framework}. Execution feedback produces $\mathbb{H}_{t+1}$ and $\mathcal{H}^{\mathrm{SAG}}_{t+1}$, while $\mathbb{DB}$ and $FE$ are updated in place. The following feedback conventions also apply to Appendices~\ref{app:ondesign_ec} and~\ref{app:ondesign_emvo}.

\paragraph{SAG scoring and refinement.}
After executing $\pi_t$, $\mathit{SAG\_Feedback}$ credits $r_t$ to $\phi_t$, the SAG that guided the construction of $q_t$. Let $\mathcal{I}_{t+1}(\phi)=\{j\mid 0\le j\le t,\ \phi_j=\phi\}$ index the completed design stages associated with a SAG $\phi$. Its score and the scored SAG archive are updated as
\begin{equation}
\begin{aligned}
    g_{t+1}^{\mathrm{SAG}}(\phi)
    &= \frac{1}{|\mathcal{I}_{t+1}(\phi)|}
       \sum_{j\in\mathcal{I}_{t+1}(\phi)}r_j,\\
    \mathcal{H}^{\mathrm{SAG}}_{t+1}
    &= \left\{\big(\phi,g_{t+1}^{\mathrm{SAG}}(\phi)\big)
       \mid \mathcal{I}_{t+1}(\phi)\ne\emptyset\right\},
\end{aligned}
\label{eq:sag_feedback}
\end{equation}
The score averages all completed execution gains attributed to the same SAG; only SAGs with at least one such execution are included in the archive. Feedback is recorded after every execution, independently of the refinement interval. $\mathit{SAG\_Update}$ then invokes $\mathcal{E}$ on $\mathcal{H}^{\mathrm{SAG}}_{t+1}$ if $(t+1)\bmod L_{\mathrm{SAG}}=0$, otherwise retaining $\phi_t$, as specified in Eq.~(\ref{eq:sag_update}). The resulting $\phi_{t+1}$ is used for state analysis at the next design stage.

Throughout Appendices~\ref{app:ondesign_bo}--\ref{app:ondesign_emvo}, execution feedback uses the square root of the nonnegative relative improvement in the best observed objective value. Let $b_t$ and $b_{t+1}$ denote the best values before and after executing $\pi_t$, respectively. For minimization,
\begin{equation}
    r_t=\sqrt{\frac{\max\{b_t-b_{t+1},0\}}{d_t}},
    \qquad
    d_t=\begin{cases}
        |b_t|, & b_t\ne0,\\
        \max\{|b_{t+1}|,10^{-20}\}, & b_t=0.
    \end{cases}
    \label{eq:execution_gain}
\end{equation}
Thus, the previous best value provides the normalization baseline; the second case avoids a zero denominator. A stage that does not improve the incumbent receives zero feedback. In BO, $b_{t+1}$ is the best value in the database after adding the evaluated candidate; in the evolutionary settings, it is the best value after population selection.

The framework has five main steps:
\begin{enumerate}
    \item \textbf{Initialization} (Lines 1--2): Evaluate $N$ samples to initialize $\mathbb{DB}$ and $FE=N$. The SAG Refiner $\mathcal{E}$ generates $\phi_t$ from the task instructions and available state-data types. Both archives start empty.
    \item \textbf{Surrogate Construction} (Lines 3 and 13): Fit a GP model to $\mathbb{DB}$, obtaining $\mathbb{MODEL}_t$ with mean $\hat{y}_t(\textbf{x})$ and standard deviation $\hat{s}_t(\textbf{x})$. Refit after each evaluation.
    \item \textbf{Online Acquisition-Function Design} (Lines 5--7): Starting at $t=0$, $\mathcal{P}$ constructs $q_t$ from $o_t$ under $\phi_t$ (Table~\ref{tab:bo_state_data_hint}). Multi-Agent Deliberation module synthesizes $\pi_t$ using $q_t$ and the algorithm descriptions and gains in $\mathbb{H}_t$.
    \item \textbf{Candidate Selection} (Lines 8--9): Execution of $\pi_t$ selects
    \[
        \check{\textbf{x}}_t=\arg\max_{\textbf{x}\in\mathcal{X}}
        \pi_t\big(best\_y,\hat{y}_t(\textbf{x}),\hat{s}_t(\textbf{x})\big),
    \]
    where $best\_y$ is the best observed fitness. Evaluate $\check{\textbf{x}}_t$ by $f$.
    \item \textbf{Feedback and SAG Update} (Lines 10--14): Compute $r_t$ from the previous best fitness and the evaluated candidate using Eq.~(\ref{eq:execution_gain}). Archive $(\pi_t,r_t)$ in $\mathbb{H}_{t+1}$, credit $r_t$ to $\phi_t$ in $\mathcal{H}^{\mathrm{SAG}}_{t+1}$, and add the evaluated candidate to $\mathbb{DB}$. After incrementing $FE$ and refitting the surrogate, $\mathit{SAG\_Update}$ determines $\phi_{t+1}$ from $\mathcal{H}^{\mathrm{SAG}}_{t+1}$. Advance $t$ once (Line 15), and use this guideline for the next state report.
\end{enumerate}
The best solution in $\mathbb{DB}$ is returned at budget exhaustion (i.e., $FE=FE_{\text{max}}$). The shared prompts appear in Appendix~\ref{app:prompt_templates}.

\paragraph{State information.}
An acquisition function determines how model predictions and uncertainty guide the next evaluation. Its online design therefore requires a joint view of observed progress, surrogate beliefs, and the spatial distribution of evaluated solutions. Table~\ref{tab:bo_state_data_hint} organizes this evidence into complementary categories that form \emph{State Data Types Hint} provided to \emph{SAG Refiner} $\mathcal{E}$ within SAG Evolution module. At design stage $t$, the resulting SAG $\phi_t$ guides \emph{State Analyst} $\mathcal{P}$ in emphasizing and relating the available observations $o_t$ in the state report $q_t$.

\begin{table}[H]
    \centering
    \caption{State information summarized by the \emph{State Data Types Hint} for Bayesian optimization.}
    \label{tab:bo_state_data_hint}
    \small
    \renewcommand{\arraystretch}{1.12}
    \begin{tabular}{@{}>{\raggedright\arraybackslash}p{0.20\linewidth}@{\hspace{0.025\linewidth}}>{\raggedright\arraybackslash}p{0.48\linewidth}@{\hspace{0.025\linewidth}}>{\raggedright\arraybackslash}p{0.27\linewidth}@{}}
        \toprule
        \textbf{State information} & \textbf{Representative descriptors} & \textbf{Role in state analysis} \\
        \midrule
        Observed fitness distribution &
        Best and worst fitness, mean and dispersion of fitness values in $\mathbb{DB}$. &
        Characterizes attained solution quality and variation across evaluated regions. \\
        \addlinespace
        Search progress &
        Recent evaluation trajectory, successive fitness changes, and absolute or normalized improvement in the best-found solution. &
        Distinguishes short-term fluctuations from sustained progress or stagnation. \\
        \addlinespace
        Surrogate characteristics &
        GP signal and noise variances, their relative magnitudes, kernel lengthscales, and their evolution during search. &
        Describes the model's inferred correlation structure and signal--noise balance. \\
        \addlinespace
        Posterior predictions &
        Predicted fitness and uncertainty across the search space, including the locations of low-predicted-fitness regions. &
        Relates model-predicted opportunities to uncertainty in those regions. \\
        \addlinespace
        Sampling geometry &
        Spatial isolation of the incumbent, distances between recent candidates and evaluated solutions, and proximity to the domain boundary. &
        Places the incumbent and recent evaluations in the context of sampled regions. \\
        \addlinespace
        Model--data agreement &
        Discrepancy between observed fitness and the GP posterior mean at evaluated solutions. &
        Provides local evidence of agreement between the surrogate and observed outcomes. \\
        \addlinespace
        Evaluation budget &
        Consumed evaluations $FE$ and remaining evaluations $FE_{\text{max}}-FE$. &
        Contextualizes the search evidence within the remaining optimization horizon. \\
        \bottomrule
    \end{tabular}
\end{table}

These categories distinguish what the search has achieved, what the surrogate predicts, and where evaluations have been concentrated. Their relationships are central to state analysis: similar best-fitness trajectories can coexist with different uncertainty patterns and sampling geometries. The SAG-guided state report makes these differences explicit, providing Multi-Agent Deliberation module with the context for synthesizing the next acquisition function.

\clearpage
\section{OnDesign for Evolutionary Continuous Optimization}
\label{app:ondesign_ec}

\textbf{Algorithm~\ref{alg:ondesign_ec}} summarizes OnDesign for evolutionary continuous optimization, with task instructions $\iota$ specified in Fig.~\ref{fig:prompt_ec_task}. At design stage $t$, $\pi_t$ generates an offspring population $\mathbb{O}_t$ from the parent population $\mathbb{P}_t$ and its fitness values; both populations contain $N$ solutions. Other notation follows Appendix~\ref{app:ondesign_bo}. One design stage corresponds to one generation and evaluates $N$ offspring, after which selection produces $\mathbb{P}_{t+1}$.

\begin{algorithm}[H]
\renewcommand{\algorithmicrequire}{\textbf{Input:}}
\renewcommand{\algorithmicensure}{\textbf{Output:}}
\caption{OnDesign for Evolutionary Continuous Optimization}
\label{alg:ondesign_ec}
\small
\begin{algorithmic}[1]
\Require $f$, $N$, $FE_{\text{max}}$, $L_{\mathrm{SAG}}$, and task instructions $\iota$
\Ensure Best solution $\textbf{x}^{*}$ and its fitness $f(\textbf{x}^{*})$
\State $t\leftarrow 0$; $[\mathbb{P}_t,FE]\leftarrow\mathit{Initialization}(N)$;
\State $\mathbb{DB}\leftarrow\mathbb{P}_t$; initialize $\phi_t$; $\mathbb{H}_t\leftarrow\emptyset$; $\mathcal{H}^{\mathrm{SAG}}_t\leftarrow\emptyset$;
\While{$FE<FE_{\text{max}}$}
    \State Collect runtime observations $o_t$;
    \State $q_t\leftarrow\mathcal{P}(o_t;\phi_t)$; \Comment{State Analyst}
    \State $\pi_t\leftarrow\mathit{Multi\text{-}Agent\_Deliberation}(q_t,\mathbb{H}_t;\iota)$;
    \State $best\_y \leftarrow\min_{\textbf{x}\in\mathbb{P}_t} f(\textbf{x})$;
    \State $\mathbb{O}_t\leftarrow\mathit{Offspring\_Generation}(\mathbb{P}_t,\pi_t)$;
    \State Evaluate the solutions in $\mathbb{O}_t$ by $f$;
    \State $\mathbb{DB}\leftarrow\mathbb{DB}\cup\mathbb{O}_t$ and $FE\leftarrow FE+N$;
    \State $\mathbb{P}_{t+1}\leftarrow\mathit{Environmental\_Selection}(\mathbb{P}_t\cup\mathbb{O}_t,N)$;
    \State $[\mathbb{H}_{t+1},r_t]\leftarrow\mathit{Feedback\_Update}(\mathbb{H}_t,\pi_t,best\_y,\mathbb{P}_{t+1})$;
    \State $\mathcal{H}^{\mathrm{SAG}}_{t+1}\leftarrow\mathit{SAG\_Feedback}(\mathcal{H}^{\mathrm{SAG}}_t,\phi_t,r_t)$;
    \State $\phi_{t+1}\leftarrow\mathit{SAG\_Update}(\phi_t,\mathcal{H}^{\mathrm{SAG}}_{t+1};t,L_{\mathrm{SAG}})$; \Comment{SAG Refiner}
    \State $t\leftarrow t+1$;
\EndWhile
\State $\textbf{x}^{*}\leftarrow\arg\min_{\textbf{x}\in\mathbb{P}_t} f(\textbf{x})$;
\end{algorithmic}
\end{algorithm}

Algorithm design precedes execution from $t=0$, with $t$ advanced once per generation and SAG refinement every $L_{\mathrm{SAG}}$ generations. The framework consists of five main steps:
\begin{enumerate}
    \item \textbf{Initialization} (Lines 1--2): With $t=0$, evaluate $N$ samples to form $\mathbb{P}_t$ and $\mathbb{DB}$, with $FE=N$. Generate $\phi_t$ with \emph{SAG Refiner} $\mathcal{E}$ and initialize both archives as in Appendix~\ref{app:ondesign_bo}. The first offspring-generation operator is synthesized inside the main loop.
    \item \textbf{Online Offspring-Generation Design} (Lines 4--6): \emph{State Analyst} $\mathcal{P}$ constructs $q_t$ from runtime observations $o_t$ under $\phi_t$ (Table~\ref{tab:ec_state_data_hint}). Multi-Agent Deliberation module synthesizes $\pi_t$ from $q_t$ and the algorithm descriptions and gains in $\mathbb{H}_t$.
    \item \textbf{Offspring Generation and Evaluation} (Lines 7--10): Record the best fitness in $\mathbb{P}_t$ as $best\_y$. Then, $\pi_t$ generates $N$ offspring from $\mathbb{P}_t$, forming $\mathbb{O}_t$. Their evaluation updates $\mathbb{DB}\leftarrow\mathbb{DB}\cup\mathbb{O}_t$ and $FE\leftarrow FE+N$.
    \item \textbf{Environmental Selection} (Line 11): Merge $\mathbb{P}_t$ and $\mathbb{O}_t$ and retain the best $N$ solutions in $\mathbb{P}_{t+1}$. Offspring solutions excluded from $\mathbb{P}_{t+1}$ remain in $\mathbb{DB}$ as evidence about previously explored regions.
    \item \textbf{Feedback and SAG Update} (Lines 12--14): Compute $r_t$ using Eq.~(\ref{eq:execution_gain}), with $b_t=best\_y$ and $b_{t+1}$ equal to the best fitness in $\mathbb{P}_{t+1}$. Update $\mathbb{H}_{t+1}\leftarrow\mathbb{H}_t\cup\{(\pi_t,r_t)\}$ and credit $r_t$ to $\phi_t$ through $\mathit{SAG\_Feedback}$. Then $\mathit{SAG\_Update}$ uses $\mathcal{H}^{\mathrm{SAG}}_{t+1}$ to generate $\phi_{t+1}$ when $(t+1)\bmod L_{\mathrm{SAG}}=0$, and otherwise retains $\phi_t$. Advance $t$ (Line 15); the resulting guideline is used in the next generation.
\end{enumerate}
When $FE=FE_{\text{max}}$, the best solution in the final $\mathbb{P}_t$ is returned. The shared prompts appear in Appendix~\ref{app:prompt_templates}.

\clearpage
\paragraph{State information.}
Offspring generation acts on a population whose fitness distribution, spatial structure, and capacity for further progress evolve jointly. At design stage $t$, OnDesign combines evidence about $\mathbb{P}_t$, its changes across generations, and empirical landscape characteristics inferred from $\mathbb{DB}$. Table~\ref{tab:ec_state_data_hint} summarizes these complementary sources of observations $o_t$ in \emph{State Data Types Hint} provided to \emph{SAG Refiner} $\mathcal{E}$. The resulting SAG $\phi_t$ guides \emph{State Analyst} $\mathcal{P}$ in constructing $q_t$, which describes both the population's current condition and the search dynamics that produced it.

\begin{table}[H]
    \centering
    \caption{State information summarized by \emph{State Data Types Hint} for evolutionary continuous optimization.}
    \label{tab:ec_state_data_hint}
    \small
    \renewcommand{\arraystretch}{1.10}
    \begin{tabular}{@{}>{\raggedright\arraybackslash}p{0.20\linewidth}@{\hspace{0.025\linewidth}}>{\raggedright\arraybackslash}p{0.48\linewidth}@{\hspace{0.025\linewidth}}>{\raggedright\arraybackslash}p{0.27\linewidth}@{}}
        \toprule
        \textbf{State information} & \textbf{Representative descriptors} & \textbf{Role in state analysis} \\
        \midrule
        Population quality &
        Best, worst, and mean fitness, fitness dispersion, and the location of the best individual in $\mathbb{P}_t$. &
        Characterizes the quality distribution of the current parent population. \\
        \addlinespace
        Population diversity &
        Pairwise distances, coordinate-wise dispersion, and coordinate ranges within $\mathbb{P}_t$. &
        Describes the spread and concentration of candidate solutions. \\
        \addlinespace
        Evolutionary progress &
        Best-fitness trajectory and absolute or normalized improvement across recent generations. &
        Reveals sustained improvement, diminishing gains, and stagnation. \\
        \addlinespace
        Population dynamics &
        Displacement of the population centroid and changes in coordinate-wise variance across generations. &
        Distinguishes population movement from contraction or expansion. \\
        \addlinespace
        Spatial coverage &
        Proximity of individuals to the domain boundary and spatial spread of historical evaluations. &
        Characterizes boundary concentration and the extent of sampled regions. \\
        \addlinespace
        Landscape structure &
        Estimated curvature, directional variation, and fitness--distance correlation from historical evaluations. &
        Relates observed fitness to the geometry of the sampled landscape. \\
        \addlinespace
        Landscape variation &
        Skewness and kurtosis of historical fitness values, together with fitness differences between nearby solutions. &
        Describes distributional asymmetry and local fitness variation. \\
        \addlinespace
        Dimensional structure &
        Effective dimensionality of evaluated samples and estimated coordinate importance for fitness. &
        Summarizes concentration in decision space and coordinate relevance. \\
        \addlinespace
        Evaluation budget &
        Consumed evaluations $FE$ relative to $FE_{\text{max}}$ and the remaining evaluation budget. &
        Contextualizes population dynamics within the remaining optimization horizon. \\
        \bottomrule
    \end{tabular}
\end{table}

Joint interpretation is especially important in population-based search. Similar fitness progress may accompany a concentrated population with little movement or a dispersed population traversing new regions. By relating progress to diversity, population movement, and empirical landscape evidence, the SAG-guided report provides Multi-Agent Deliberation module with a basis for designing offspring-generation operators suited to the current evolutionary state.

\clearpage
\section{OnDesign for Evolutionary Mixed-Variable Optimization}
\label{app:ondesign_emvo}

\textbf{Algorithm~\ref{alg:ondesign_emvo}} summarizes OnDesign for evolutionary mixed-variable optimization. Task instructions $\iota$ extend the interface in Fig.~\ref{fig:prompt_ec_task2} with variable types and domains. At design stage $t$, $\pi_t$ generates $N$ normalized offspring from $\mathbb{P}_t$ and its fitness values. Discrete variables are ordered integers on MV-BBOB test suite and unordered categories on EOPCCV test suite. Other notation follows Appendix~\ref{app:ondesign_ec}. Each generation evaluates up to $N$ offspring before selection produces $\mathbb{P}_{t+1}$.

\begin{algorithm}[H]
\renewcommand{\algorithmicrequire}{\textbf{Input:}}
\renewcommand{\algorithmicensure}{\textbf{Output:}}
\caption{OnDesign for Evolutionary Mixed-Variable Optimization}
\label{alg:ondesign_emvo}
\small
\begin{algorithmic}[1]
\Require $f$, $N$, $FE_{\text{max}}$, $L_{\mathrm{SAG}}$, and task instructions $\iota$
\Ensure Best solution $\textbf{x}^{*}$ and its fitness $f(\textbf{x}^{*})$
\State $t\leftarrow 0$; $[\mathbb{P}_t,FE]\leftarrow\mathit{Initialization}(N;\iota)$;
\State $\mathbb{DB}\leftarrow\mathbb{P}_t$; initialize $\phi_t$; $\mathbb{H}_t\leftarrow\emptyset$; $\mathcal{H}^{\mathrm{SAG}}_t\leftarrow\emptyset$;
\While{$FE<FE_{\text{max}}$}
    \State Collect mixed-variable runtime observations $o_t$;
    \State $q_t\leftarrow\mathcal{P}(o_t;\phi_t)$; \Comment{State Analyst}
    \State $\pi_t\leftarrow\mathit{Multi\text{-}Agent\_Deliberation}(q_t,\mathbb{H}_t;\iota)$;
    \State $best\_y \leftarrow\min_{\textbf{x}\in\mathbb{P}_t} f(\textbf{x})$;
    \State $\mathbb{O}_t\leftarrow\mathit{Offspring\_Generation}(\mathbb{P}_t,\pi_t)$;
    \State Evaluate the solutions in $\mathbb{O}_t$ by $f$;
    \State $\mathbb{DB}\leftarrow\mathbb{DB}\cup\mathbb{O}_t$ and $FE\leftarrow FE+|\mathbb{O}_t|$;
    \State $\mathbb{P}_{t+1}\leftarrow\mathit{Environmental\_Selection}(\mathbb{P}_t\cup\mathbb{O}_t,N)$;
    \State $[\mathbb{H}_{t+1},r_t]\leftarrow\mathit{Feedback\_Update}(\mathbb{H}_t,\pi_t,best\_y,\mathbb{P}_{t+1})$;
    \State $\mathcal{H}^{\mathrm{SAG}}_{t+1}\leftarrow\mathit{SAG\_Feedback}(\mathcal{H}^{\mathrm{SAG}}_t,\phi_t,r_t)$;
    \State $\phi_{t+1}\leftarrow\mathit{SAG\_Update}(\phi_t,\mathcal{H}^{\mathrm{SAG}}_{t+1};t,L_{\mathrm{SAG}})$; \Comment{SAG Refiner}
    \State $t\leftarrow t+1$;
\EndWhile
\State $\textbf{x}^{*}\leftarrow\arg\min_{\textbf{x}\in\mathbb{P}_t} f(\textbf{x})$;
\end{algorithmic}
\end{algorithm}

Algorithm design precedes execution from $t=0$, with $t$ advanced once per generation and SAG refinement every $L_{\mathrm{SAG}}$ generations. The framework consists of five main steps:
\begin{enumerate}
    \item \textbf{Initialization} (Lines 1--2): With $t=0$, sample continuous coordinates using a scrambled Sobol design and distribute each discrete variable's values as evenly as possible across $N$ individuals. Evaluate these samples to form $\mathbb{P}_t$ and $\mathbb{DB}$, with $FE=N$. Generate $\phi_t$ with the SAG Refiner $\mathcal{E}$ and initialize both archives as in Appendix~\ref{app:ondesign_bo}.
    \item \textbf{Online Offspring-Generation Design} (Lines 4--6): \emph{State Analyst} $\mathcal{P}$ constructs $q_t$ from mixed-variable population and runtime observations $o_t$ under $\phi_t$ (Table~\ref{tab:emvo_state_data_hint}). Multi-Agent Deliberation module synthesizes $\pi_t$ from $q_t$ and the algorithm descriptions and gains in $\mathbb{H}_t$, without past states or reports.
    \item \textbf{Offspring Generation and Evaluation} (Lines 7--10): Record the current best fitness as $best\_y$ and generate $N$ normalized offspring $\mathbb{O}_t$ using $\pi_t$. The external routine clips coordinates to $[0,1]$ and decodes continuous values by rescaling, categories by equal-width bins, and ordered integers by rescaling and rounding. Discrete encodings are canonicalized for storage. Evaluate only the offspring allowed by the remaining budget, updating $\mathbb{DB}\leftarrow\mathbb{DB}\cup\mathbb{O}_t$ and $FE\leftarrow FE+|\mathbb{O}_t|$.
    \item \textbf{Environmental Selection} (Line 11): Merge $\mathbb{P}_t$ and $\mathbb{O}_t$ and retain the best $N$ solutions in $\mathbb{P}_{t+1}$. Offspring solutions excluded from $\mathbb{P}_{t+1}$ remain in $\mathbb{DB}$ as evidence about previously explored regions and discrete configurations.
    \item \textbf{Feedback and SAG Update} (Lines 12--14): Compute $r_t$ using Eq.~(\ref{eq:execution_gain}), with $b_t=best\_y$ and $b_{t+1}$ equal to the best fitness in $\mathbb{P}_{t+1}$. Update $\mathbb{H}_{t+1}\leftarrow\mathbb{H}_t\cup\{(\pi_t,r_t)\}$ and credit $r_t$ to $\phi_t$ through $\mathit{SAG\_Feedback}$. Then $\mathit{SAG\_Update}$ uses $\mathcal{H}^{\mathrm{SAG}}_{t+1}$ to generate $\phi_{t+1}$ when $(t+1)\bmod L_{\mathrm{SAG}}=0$, and otherwise retains $\phi_t$. Advance $t$ (Line 15); the resulting guideline is used in the next generation.
\end{enumerate}
When $FE=FE_{\text{max}}$, the best solution in the final $\mathbb{P}_t$ is returned. The shared prompts appear in Appendix~\ref{app:prompt_templates}.

\clearpage
\paragraph{State information.}
Offspring generation acts on a population whose continuous coordinates and discrete configurations evolve jointly. At design stage $t$, OnDesign combines evidence about $\mathbb{P}_t$, its changes across generations, and empirical landscape characteristics inferred from $\mathbb{DB}$, while respecting the variable types specified in $\iota$. Table~\ref{tab:emvo_state_data_hint} organizes these observations $o_t$ using the same categories as Table~\ref{tab:ec_state_data_hint}, adapted to mixed-variable search, and summarizes \emph{State Data Types Hint} provided to \emph{SAG Refiner} $\mathcal{E}$. The resulting SAG $\phi_t$ guides \emph{State Analyst} $\mathcal{P}$ in constructing $q_t$, which describes both the population's current condition and its type-specific search dynamics.

\begin{table}[H]
    \centering
    \caption{State information summarized by \emph{State Data Types Hint} for evolutionary mixed-variable optimization.}
    \label{tab:emvo_state_data_hint}
    \small
    \renewcommand{\arraystretch}{1.10}
    \begin{tabular}{@{}>{\raggedright\arraybackslash}p{0.20\linewidth}@{\hspace{0.025\linewidth}}>{\raggedright\arraybackslash}p{0.48\linewidth}@{\hspace{0.025\linewidth}}>{\raggedright\arraybackslash}p{0.27\linewidth}@{}}
        \toprule
        \textbf{State information} & \textbf{Representative descriptors} & \textbf{Role in state analysis} \\
        \midrule
        Population quality &
        Best, worst, and mean fitness, fitness dispersion, and the location of the best individual in $\mathbb{P}_t$. &
        Characterizes the quality distribution of the current parent population. \\
        \addlinespace
        Population diversity &
        Mixed pairwise distances, continuous dispersion and ranges, discrete-value frequencies and entropy, distinct discrete combinations, and decoded-decision duplicate rate. &
        Describes continuous spread and discrete concentration without imposing an order on categorical labels. \\
        \addlinespace
        Evolutionary progress &
        Best-fitness trajectory and absolute or normalized improvement across recent generations. &
        Reveals sustained improvement, diminishing gains, and stagnation. \\
        \addlinespace
        Population dynamics &
        Displacement of the continuous population centroid, changes in continuous variance, and changes in discrete-value frequencies across generations. &
        Distinguishes continuous movement from redistribution among discrete values. \\
        \addlinespace
        Spatial coverage &
        Proximity of individuals to continuous-domain boundaries and coverage of allowed discrete values. &
        Characterizes boundary concentration and underexplored discrete values. \\
        \addlinespace
        Landscape structure &
        Fitness--distance correlation using mixed distances and estimated continuous curvature within the best observed discrete configuration, when sufficient data are available. &
        Relates fitness to sampled mixed geometry and conditional local structure. \\
        \addlinespace
        Landscape variation &
        Skewness and kurtosis of historical fitness values, together with fitness differences between mixed-space neighbors and within a shared discrete configuration. &
        Describes distributional asymmetry and local fitness variation. \\
        \addlinespace
        Dimensional structure &
        Principal-component spread of continuous coordinates and estimated variable importance using type-aware inputs. &
        Summarizes concentration in continuous space and empirical variable relevance. \\
        \addlinespace
        Evaluation budget &
        Consumed evaluations $FE$ relative to $FE_{\text{max}}$ and the remaining evaluation budget. &
        Contextualizes population dynamics within the remaining optimization horizon. \\
        \bottomrule
    \end{tabular}
\end{table}

Joint interpretation is especially important in mixed-variable search. Similar fitness progress may accompany a population spread across continuous regions but concentrated on a few discrete configurations, or one covering many discrete values with little continuous refinement. By relating progress to type-specific diversity, population movement, and empirical landscape evidence, the SAG-guided report provides Multi-Agent Deliberation module with a basis for designing offspring-generation operators suited to the current mixed-variable optimization state.

\clearpage
\section{Prompt Templates}
\label{app:prompt_templates}

This appendix presents eight prompt templates: three task-specific instruction templates and five agent-role templates shared by Algorithms~\ref{alg:ondesign_bo}--\ref{alg:ondesign_emvo}. The task-specific instructions define the optimization scenario and the interface of $\pi_t$, while the role prompts support State Analysis, Multi-Agent Deliberation, and SAG Evolution. Braced fields are placeholders for task descriptions, runtime observations, state reports, or archived feedback. The explanations below follow the terminology and notation of Sec.~\ref{sec:framework}; the displayed prompts retain their original wording.

\subsection{Task-Specific Instructions}

Figures~\ref{fig:prompt_bo_task}--\ref{fig:prompt_ec_task2} present the task instructions $\iota$ for applying OnDesign to Bayesian optimization, evolutionary continuous optimization, and evolutionary mixed-variable optimization, respectively. In Bayesian optimization, $\pi_t$ maps the incumbent fitness and GP posterior mean and standard deviation to acquisition utilities, which are maximized to select the next candidate. In both evolutionary scenarios, $\pi_t$ maps the parent population $\mathbb{P}_t$ and its fitness values to an offspring population $\mathbb{O}_t$.

\begin{figure}[H]
    \centering
    \includegraphics[width=0.75\linewidth]{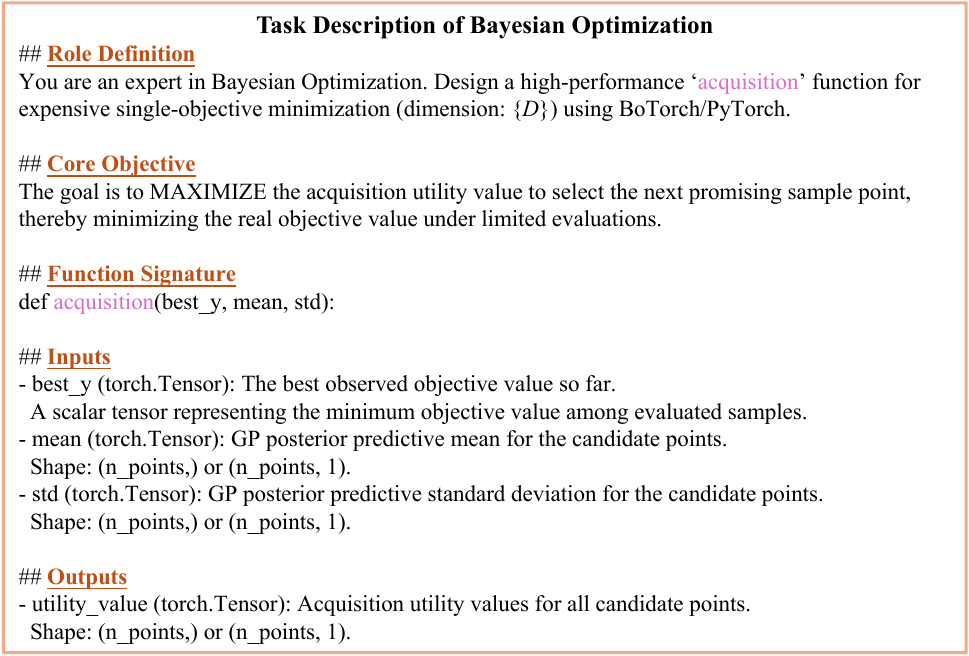}
    \caption{Task-description prompt for applying OnDesign to Bayesian optimization.}
    \label{fig:prompt_bo_task}
\end{figure}

\begin{figure}[H]
    \centering
    \includegraphics[width=0.75\linewidth]{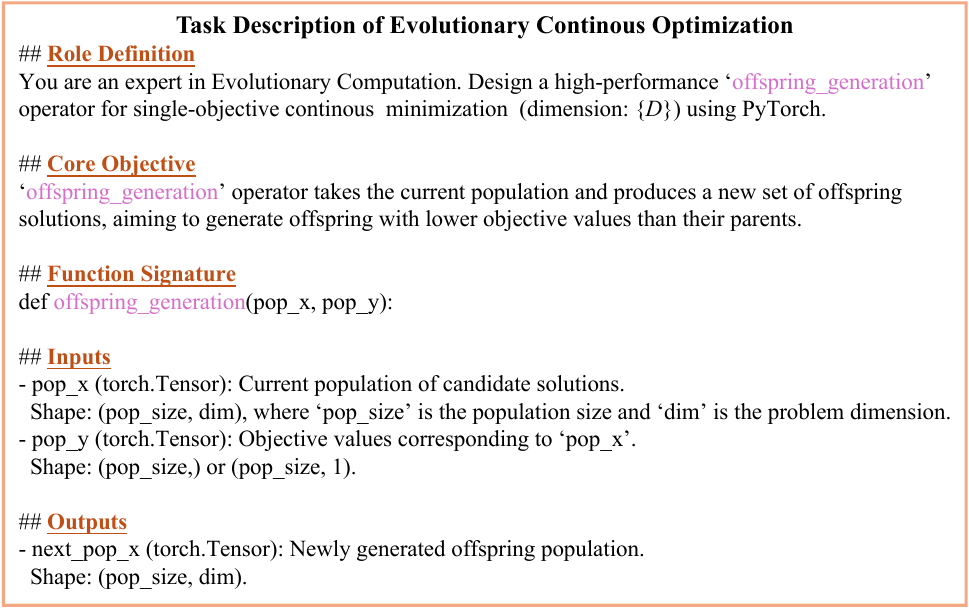}
    \caption{Task-description prompt for applying OnDesign to evolutionary continuous optimization.}
    \label{fig:prompt_ec_task}
\end{figure}

\begin{figure}[H]
    \centering
    \includegraphics[width=0.75\linewidth]{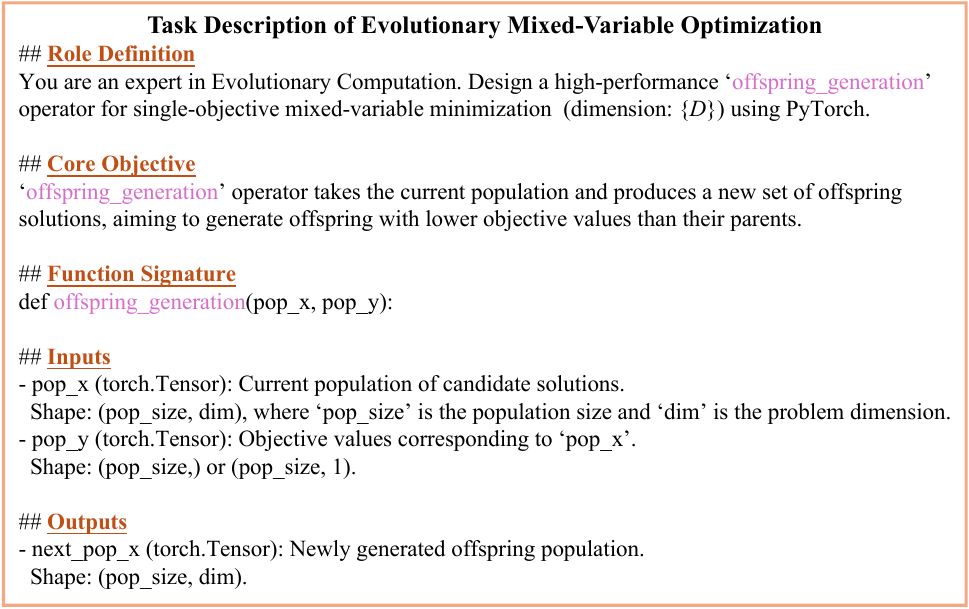}
    \caption{Task-description prompt for applying OnDesign to evolutionary mixed-variable optimization.}
    \label{fig:prompt_ec_task2}
\end{figure}

\subsection{State Analysis}

Figure~\ref{fig:prompt_state_analysis} presents the prompt for \emph{State Analyst} $\mathcal{P}$, with fields for the task instructions $\iota$, runtime observations $o_t$, and current SAG $\phi_t$. It requests a structured state report $q_t$ grounded in the supplied observations and organized according to $\phi_t$. The role instruction emphasizes neutral analysis; algorithmic implications in generated reports are assessed during Multi-Agent Deliberation.

\begin{figure}[H]
    \centering
    \includegraphics[width=0.75\linewidth]{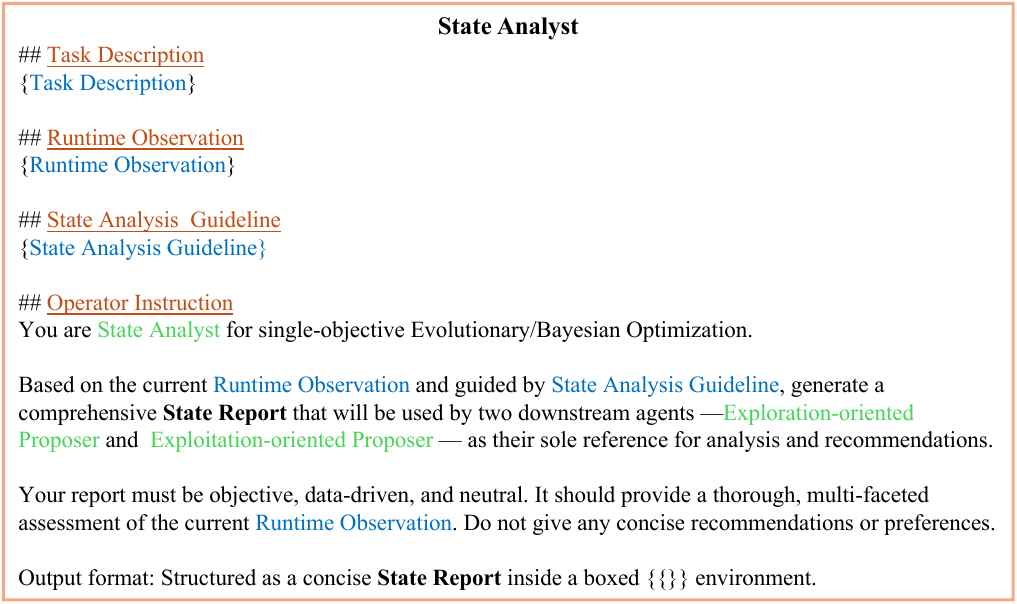}
    \caption{\emph{State Analyst} prompt for State Analysis. The current SAG guides the construction of the state report.}
    \label{fig:prompt_state_analysis}
\end{figure}

\subsection{Multi-Agent Deliberation}

Figures~\ref{fig:prompt_exploration} and~\ref{fig:prompt_exploitation} present the exploration-oriented and exploitation-oriented prompts for \emph{Proposers} $\mathcal{L}_1$ and $\mathcal{L}_2$, respectively. Across all three optimization scenarios, both prompts receive the same task instructions $\iota$, state report $q_t$, and historical entries containing algorithm descriptions and execution gains from $\mathbb{H}_t$.

The exploration-oriented prompt emphasizes uncertainty, insufficient coverage, diversity loss, and stagnation to encourage broader search, while the exploitation-oriented prompt emphasizes solution quality, promising regions, and recent improvement to encourage focused refinement. Both request three output fields corresponding to \emph{Design Direction} $v_t^{(k)}$, \emph{Rationale} $e_t^{(k)}$, and \emph{Design Advice} $d_t^{(k)}$ in Sec.~\ref{sec:framework}. The design direction specifies a \emph{Mild}, \emph{Moderate}, or \emph{Aggressive} preference for exploration or exploitation; executable code is not requested.

\begin{figure}[H]
    \centering
    \includegraphics[width=0.75\linewidth]{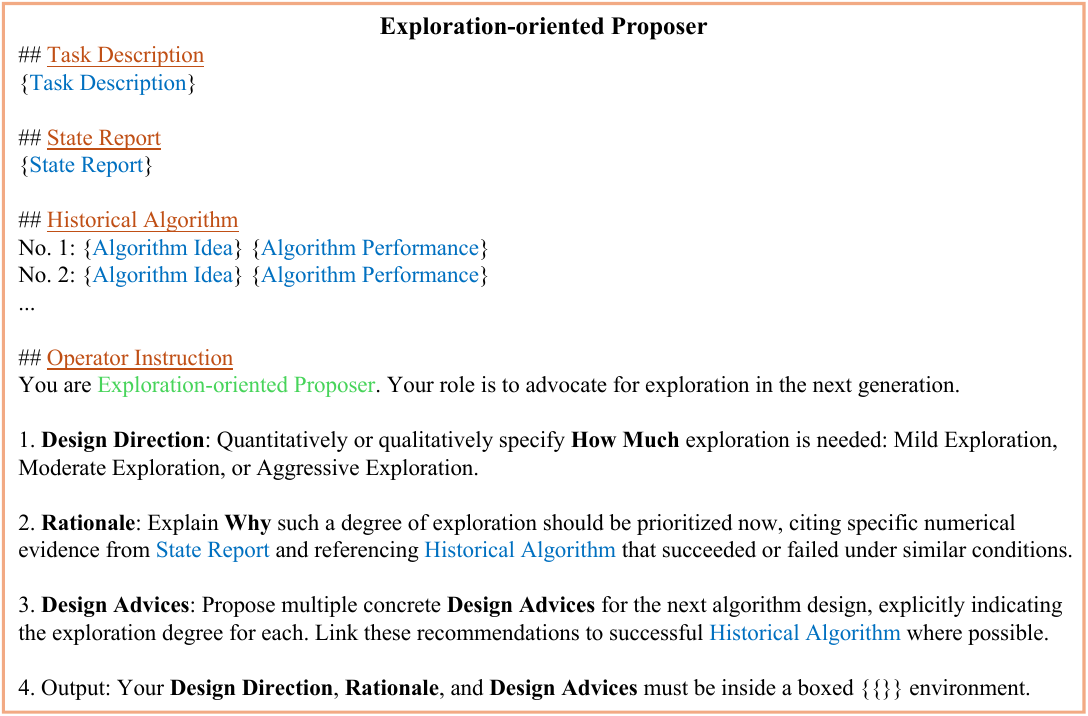}
    \caption{\emph{Exploration-oriented Proposer} prompt for Multi-Agent Deliberation.}
    \label{fig:prompt_exploration}
\end{figure}

\begin{figure}[H]
    \centering
    \includegraphics[width=0.75\linewidth]{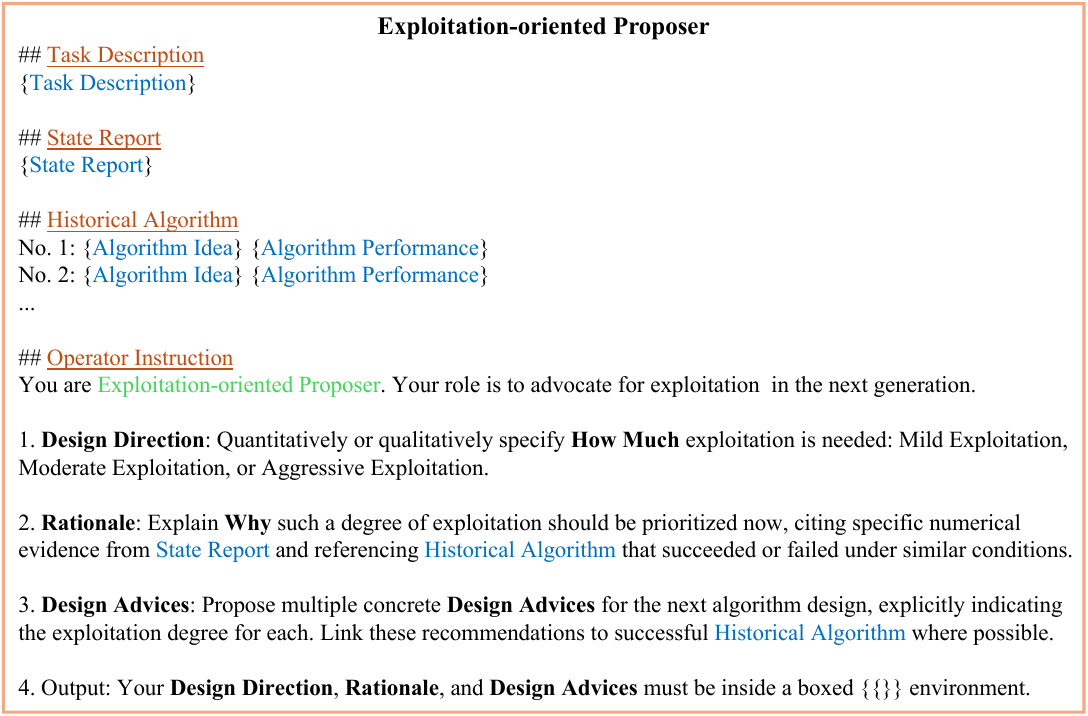}
    \caption{\emph{Exploitation-oriented Proposer} prompt for Multi-Agent Deliberation.}
    \label{fig:prompt_exploitation}
\end{figure}

\paragraph{Deliberation and program synthesis.}
Figure~\ref{fig:prompt_arbiter} presents the prompt for \emph{Arbiter} $\mathcal{G}$. Its inputs comprise both proposals, the task instructions $\iota$, the state report $q_t$, and algorithm descriptions and execution gains from $\mathbb{H}_t$. The prompt requests two outputs: a concise design idea and complete Python code for $\pi_t$, implementing the task-specific interface defined by $\iota$.

\subsection{SAG Evolution}
Figure~\ref{fig:prompt_sag_refiner} presents the prompt for \emph{SAG Refiner} $\mathcal{E}$. The task instructions $\iota$ establish the optimization context, while  \emph{State Data Types Hint} specifies the categories of runtime evidence available to State Analysis (Tables~\ref{tab:bo_state_data_hint}--\ref{tab:emvo_state_data_hint}). Historical entries pair each SAG with a \emph{Performance} value, corresponding to its aggregated score in Eq.~(\ref{eq:sag_feedback}). These pairs provide feedback for revising which signals to emphasize and how to interpret them. The output is a natural-language guideline for subsequent State Analysis, rather than an executable optimization algorithm.

\begin{figure}[H]
    \centering
    \includegraphics[width=0.75\linewidth]{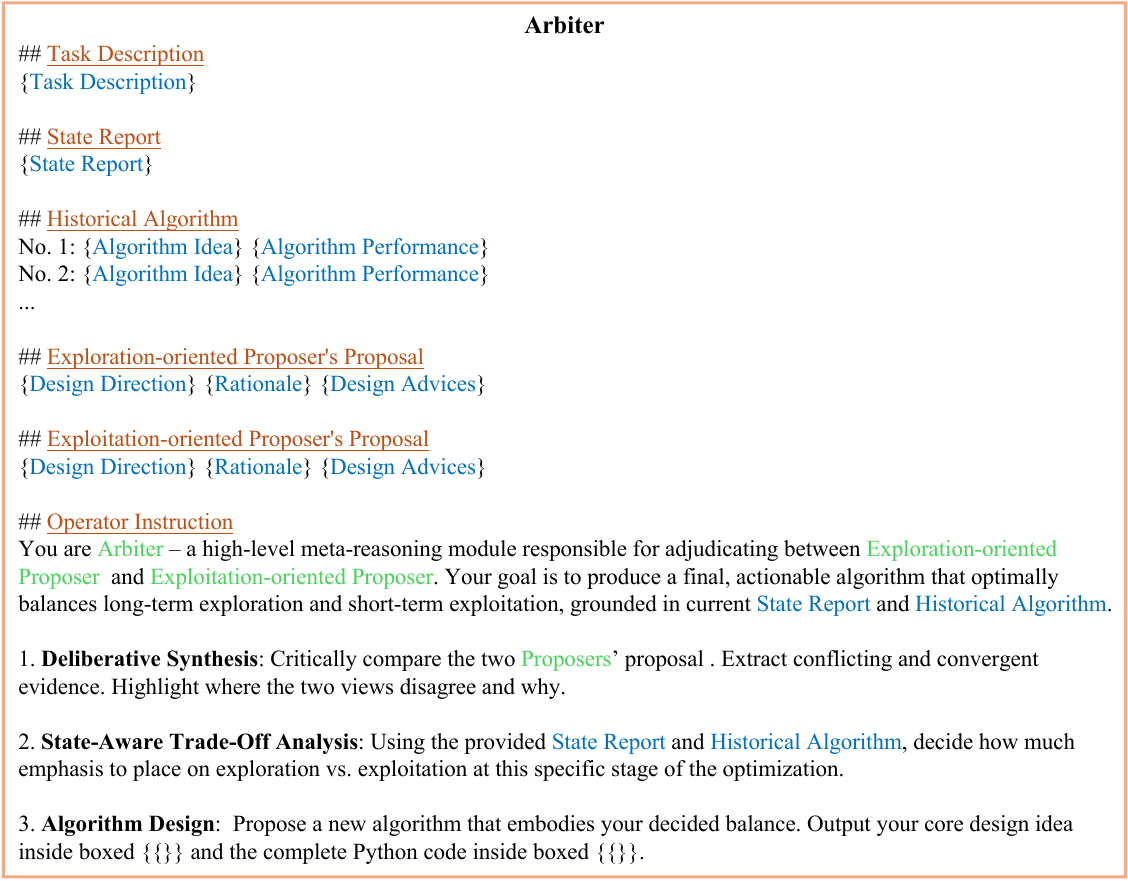}
    \caption{\emph{Arbiter} prompt for Multi-Agent Deliberation. The two proposals are synthesized into a task-specific executable algorithm.}
    \label{fig:prompt_arbiter}
\end{figure}

\begin{figure}[H]
    \centering
    \includegraphics[width=0.75\linewidth]{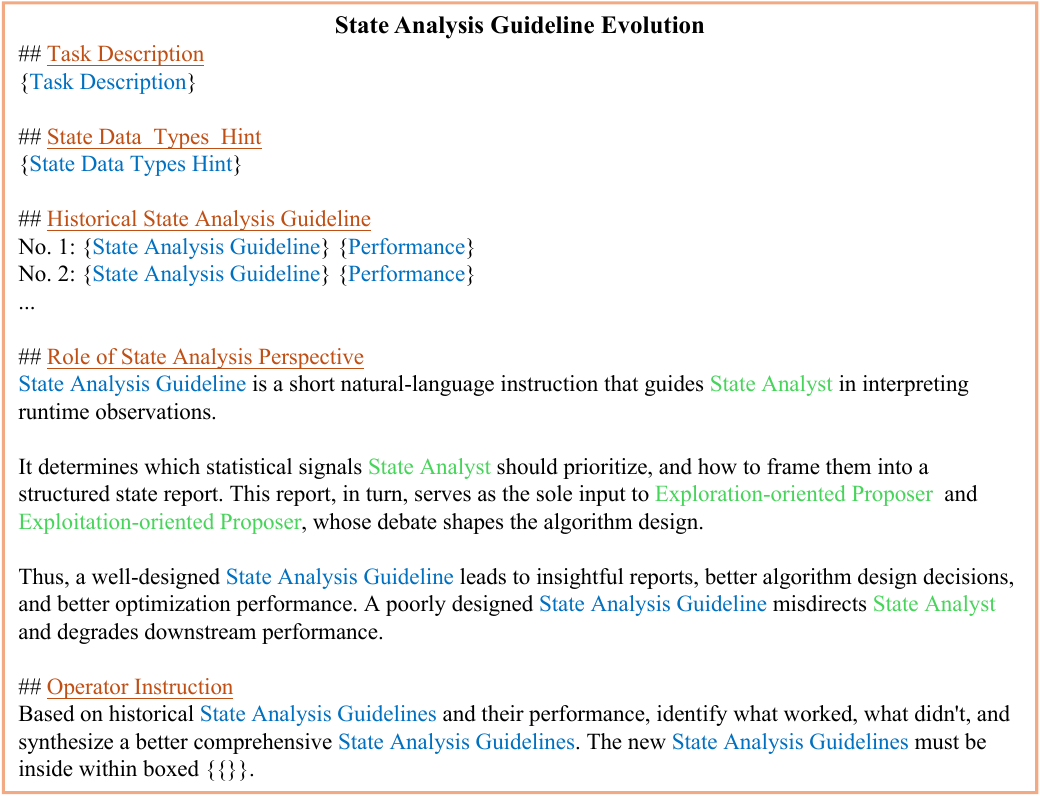}
    \caption{\emph{SAG Refiner} prompt, conditioned on available state-data types and scored SAG history.}
    \label{fig:prompt_sag_refiner}
\end{figure}

\clearpage
\section{Construction of the MV-BBOB Test Suite}
\label{app:mv_bbob}

We construct MV-BBOB from the 24 continuous BBOB functions by restricting selected coordinates to integer values. For a fixed instance of function $i\in\{1,\ldots,24\}$ in dimension $D\ge 2$, let $f_i$ denote the original objective, $[\ell_j,u_j]$ its bounds for coordinate $j$, and $\mathcal{J}_i$ the set of integer-coordinate indices. Evaluation applies a coordinate-wise mapping before calling the original objective:
\begin{equation}
\begin{aligned}
    f_i^{\mathrm{MV}}(\textbf{x}) &= f_i\!\left(R_i(\textbf{x})\right),\\
    [R_i(\textbf{x})]_j & =
    \begin{cases}
        \operatorname{clip}\!\left(\operatorname{round}(x_j),\ell_j,u_j\right),
        & j\in\mathcal{J}_i,\\
        x_j, & j\notin\mathcal{J}_i.
    \end{cases}
\end{aligned}
\label{eq:mv_bbob_mapping}
\end{equation}
Here, $\operatorname{round}$ rounds to the nearest integer, with ties to the even integer, and $\operatorname{clip}$ enforces the inherited bounds. Thus, integer coordinates use unit spacing, while the remaining coordinates retain their continuous domains. The underlying objective implementation and instance parameters are unchanged.

The target proportion $\rho_i$ of integer variables is fixed for each function, as listed in Table~\ref{tab:mv_bbob_ratios}. The actual number of integer coordinates is $k_i=\max\{1,\min\{D-1,\operatorname{round}(\rho_i D)\}\}$, ensuring that both variable types are present. Using one-based coordinate indices, the implementation selects
\begin{equation}
    \mathcal{J}_i=
    \left\{1+m\left\lfloor\frac{D}{k_i}\right\rfloor
    \;\middle|\;m=0,\ldots,k_i-1\right\}.
\label{eq:mv_bbob_indices}
\end{equation}
This deterministic rule keeps the variable-type assignment fixed across instances and runs for a given function and dimension.

\begin{table}[H]
    \centering
    \caption{Function-specific target proportions of integer variables in MV-BBOB. The actual counts follow the rounding rule for $k_i$.}
    \label{tab:mv_bbob_ratios}
    \small
    \begin{tabular}{cl}
        \toprule
        \textbf{Target proportion $\rho_i$} & \textbf{BBOB function indices $i$} \\
        \midrule
        $0.15$ & $9,21,22$ \\
        $0.20$ & $6,8,10,15,19,20$ \\
        $0.30$ & $12,18$ \\
        $0.40$ & $3,4,11,16,17,23,24$ \\
        $0.50$ & $2$ \\
        $0.60$ & $7,13,14$ \\
        $0.70$ & $1,5$ \\
        \bottomrule
    \end{tabular}
\end{table}

The resulting suite combines continuous variables with ordered integers. Varying the integer-variable proportion across functions yields diverse continuous--integer compositions, broadening the range of mixed-variable settings covered by the test suite. Because integer constraints restrict the feasible inputs, the optimum of the original continuous problem is not assumed to remain feasible.

\clearpage
\section{Experimental Details}
\label{app:experimental_details}

\subsection{Baselines}
\label{app:benchmarks_baselines}

We group the baselines into 1) fixed-logic optimizers, 2) adaptive optimizers, and 3) LLM-designed algorithms. The LLM-designed group contains algorithms reported in prior studies and scenario-specific algorithms generated offline for each setting. EoH~\citep{liu2024eoh} co-evolves natural-language design ideas and executable code; MEoH~\citep{yao2025meoh} formulates heuristic discovery as a multi-objective search and manages candidates using dominance and code dissimilarity; and ReEvo~\citep{ye2024reevo} combines evolutionary search with LLM-generated reflections that guide subsequent heuristic revisions. In our experiments, these methods perform offline algorithm design in advance on problems from the corresponding target setting, producing a scenario-specific algorithm for comparison with OnDesign.

\paragraph{Bayesian optimization.}
LogEI~\citep{ament2023unexpected} and LogPI~\citep{ament2023unexpected} are log-space reformulations of improvement-based acquisition functions that improve numerical stability. UCB~\citep{srinivas2009gaussian} balances the surrogate posterior mean and uncertainty through an upper confidence bound, while JES~\citep{hvarfner2022joint} selects evaluations by their information gain about the joint distribution of the optimal input and value; these four methods form the fixed-logic group. SETUP-BO~\citep{vasconcelos2022self}, the adaptive baseline, uses a portfolio of acquisition functions and Thompson sampling to tune its portfolio hyperparameters. Among the LLM-designed algorithms reported in prior studies, FunBO~\citep{aglietti2024funbo} discovers acquisition functions through FunSearch~\citep{romera2024mathematical}-based program search, ATRBO-LCB~\citep{li2026llamea} combines an LCB acquisition function with adaptive trust-region search, and TREvol~\citep{li2026llamea} integrates trust-region search with dynamic kernels, acquisition blending, and evolutionary candidate generation. EoH~\citep{liu2024eoh}, MEoH~\citep{yao2025meoh}, and ReEvo~\citep{ye2024reevo} provide the three scenario-specific baselines for Bayesian optimization.

\paragraph{Evolutionary continuous optimization.}
BSPGA~\citep{su2020non} uses a binary space-partition tree to record explored solutions, avoid revisits, and guide population refinement. CLPSO~\citep{liang2006comprehensive} constructs particle exemplars from the historical best positions of multiple particles to preserve swarm diversity. CoDE~\citep{wang2011differential} generates trial vectors by combining three mutation strategies with three parameter settings; these methods constitute the fixed-logic group. Within the adaptive group, SAHLPSO~\citep{tao2021self} assigns particles exploration and exploitation roles and adapts their learning parameters from success history, while SHADE~\citep{tanabe2013success} maintains memories of successful differential-evolution parameters. AutoEP~\citep{xu2026autoep} uses multi-LLM reasoning to derive hyperparameter adaptation strategies. ParEvo1 and ParEvo2 are two algorithms reported by PartEvo~\citep{hu2025partition}, whose design search combines feature-assisted niches with collaborative evolution. EoH~\citep{liu2024eoh}, MEoH~\citep{yao2025meoh}, and ReEvo~\citep{ye2024reevo} provide the scenario-specific baselines for evolutionary continuous optimization.

\paragraph{Evolutionary mixed-variable optimization.}
CoDE~\citep{wang2011differential} uses the composite differential-evolution mechanism described above. MDE-IHS~\citep{liao2010two} couples differential evolution with improved harmony search, whereas $\mathrm{DE}_{\mathrm{MV}}$~\citep{lin2018hybrid} co-evolves continuous variables with standard DE and discrete variables with set-based DE. Sig-DE~\citep{yu2016stock} employs a sigmoid transformation to map differential-evolution search into discrete decisions, and IMI-GWO~\citep{gupta2019efficient} augments grey-wolf optimization with opposition-based learning and chaotic local search; together with CoDE, they form the fixed-logic group. SHADE~\citep{tanabe2013success} uses success-history parameter memories as described above, while $\mathrm{PSO}_{\mathrm{MV}}$~\citep{wang2021particle} combines mixed-variable encoding and type-specific reproduction with adaptive parameter tuning; these are the adaptive baselines. EoH~\citep{liu2024eoh}, MEoH~\citep{yao2025meoh}, and ReEvo~\citep{ye2024reevo} provide the scenario-specific offline AAD baselines for evolutionary mixed-variable optimization.

\subsection{Experimental Configurations and Evaluation Protocol}
\label{app:evaluation_protocol}

\paragraph{Shared evaluation environment.}
OnDesign and the baselines use the same benchmark objectives, variable domains, and fitness-evaluation budgets within each scenario. All objectives are minimized. Table~\ref{tab:default_experimental_configurations} summarizes the common evaluation settings and run counts. The total evaluation budget includes initialization and all subsequent calls to the benchmark objective. Apart from the shared settings specified below, all baseline hyperparameters retain their default values in the corresponding implementations.

For OnDesign, we use $L_{\mathrm{SAG}}=10$ as the default SAG update interval across all three optimization scenarios, refining the guideline after every ten completed design stages. An illustrative sensitivity analysis of this interval is provided in Appendix~\ref{app:sag_sensitivity}.

\begin{table}[H]
    \centering
    \caption{Common evaluation settings and run counts. Evaluation budgets include initialization; the Bayesian optimization budget specifies evaluations after initialization.}
    \label{tab:default_experimental_configurations}
    \small
    \setlength{\tabcolsep}{4pt}
    \begin{tabular}{lccc}
        \toprule
        \textbf{Scenario} & \textbf{Initial setting} & \textbf{Evaluation budget} & \textbf{Runs} \\
        \midrule
        Bayesian optimization
        & $2D+1$ Sobol samples
        & 100 additional evaluations
        & 10 \\
        \makecell[l]{Evolutionary continuous\\optimization}
        & $N=100$
        & 5,000 total evaluations
        & 10 \\
        \makecell[l]{Evolutionary mixed-variable\\optimization}
        & $N=100$
        & 5,000 total evaluations
        & 10 \\
        \bottomrule
    \end{tabular}
\end{table}

\paragraph{Bayesian optimization.}
We evaluate the 24 noiseless BBOB functions~\citep{elhara2019coco} at $D\in\{10,30\}$, covering separable, ill-conditioned, and multimodal landscapes. CEC2020~\citep{yue2019cec2020} and CEC2022~\citep{kumar2021cec2022} contain 10 and 12 functions, respectively, including basic, hybrid, and composition functions with shifts and rotations; both are evaluated at $D\in\{10,20\}$. For CEC2026~\citep{Chen2026CEC}, we use the 29-function single-objective benchmark at $D=30$, which includes unimodal, multimodal, hybrid, and composition landscapes. Each method begins with $2D+1$ scrambled Sobol samples and performs 100 subsequent evaluations, giving total budgets of 121, 141, and 161 for $D=10$, 20, and 30, respectively. Search inputs are represented in $[0,1]^D$ and mapped to the benchmark bounds before objective evaluation. The common surrogate is a single-task Gaussian process with a scaled Mat\'ern-$5/2$ kernel, fitted to standardized objective observations. Evaluations after initialization are sequential, with one candidate evaluated at a time.

\paragraph{Evolutionary continuous optimization.}
We use the same four continuous benchmark suites and seven suite--dimension configurations as in Bayesian optimization, evaluating population-based search on the same range of landscape characteristics. The common population-size parameter is $N=100$, and the total budget is 5,000 fitness evaluations, including the initial population. All candidates are evaluated within the bounds specified by the corresponding benchmark. The stopping criterion is the number of objective evaluations rather than the number of generations, so methods with different evaluation requirements per generation are compared under the same total budget.

\paragraph{Evolutionary mixed-variable optimization.}
MV-BBOB extends the 24 BBOB functions by restricting function-specific proportions of coordinates to ordered integers, retaining continuous coordinates and the underlying landscape diversity. We evaluate it at $D\in\{10,30\}$. EOPCCV~\citep{9464165} contains 30 ten-dimensional problems combining continuous variables with unordered categories: each problem has 2, 5, or 8 categorical variables, each admitting 5 or 10 labels. These configurations vary both the categorical-variable proportion and the domain cardinality. As in continuous optimization, the common population-size parameter is $N=100$, and the total budget is 5,000 evaluations, including initialization. All methods use the same variable types and admissible domains for each problem. Fitness evaluation uses continuous values within the specified bounds and admissible integer values or category labels for discrete coordinates. The construction and integer-variable proportions of MV-BBOB are described in Appendix~\ref{app:mv_bbob}.

\paragraph{Repeated runs and performance summaries.}
OnDesign and all baselines are evaluated over ten independent runs per problem configuration. Independent random seeds are used across runs. For each problem, we record the best objective value reached by the end of the prescribed budget. Function-wise tables report means and standard deviations over independent runs. Because objective scales differ across functions, each method is ranked separately on each function and the resulting ranks are averaged within a suite--dimension configuration. Baseline entries in the summary tables additionally report $+/-/=$ counts, denoting the numbers of functions on which the baseline is better, worse, or tied relative to OnDesign. All ranks are interpreted within an optimization scenario because the compared method sets differ across scenarios.

%

\subsection{Sensitivity to the SAG Update Interval}
\label{app:sag_sensitivity}

The SAG update interval $L_{\mathrm{SAG}}$ specifies the number of completed design stages between guideline refinements; $L_{\mathrm{SAG}}=\infty$ denotes retaining the initial SAG throughout optimization. We use $L_{\mathrm{SAG}}=10$ consistently across the three optimization settings. Figure~\ref{fig:sag_sensitivity} illustrates different update frequencies in the Bayesian optimization setting on BBOB F22 at $D=30$. Its adjusted illustrative values visualize the scheduling trade-off rather than establish an optimal interval: frequent refinements can respond sooner to changes in the optimization state but draw on less new execution evidence per update, whereas infrequent refinements accumulate more feedback but may react later. The fixed, moderate interval of 10 balances these considerations in our experimental protocol.

\begin{figure}[H]
    \centering
    \includegraphics[width=0.65\linewidth]{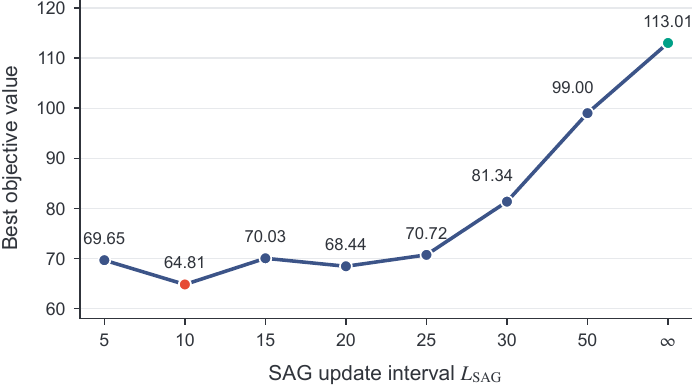}
    \caption{Illustration of SAG update frequencies in Bayesian optimization on BBOB F22 at $D=30$. The plotted objective values are adjusted illustrations, not a controlled comparison; $\infty$ denotes no SAG refinement. Lower is better.}
    \label{fig:sag_sensitivity}
\end{figure}

\subsection{Engineering Validation}
\label{app:engineering_problem}

\paragraph{Background.}
Frontal collisions require vehicles to dissipate impact energy while limiting the loads transmitted to the passenger compartment. The vehicle's front-end structure addresses this requirement through controlled deformation of its energy-absorbing members. Within this structure, thin-walled energy-absorbing boxes dissipate energy through progressive crushing. Multicell configurations distribute deformation across multiple walls, whose thickness allocation affects both energy absorption and component mass. This trade-off motivates maximizing specific energy absorption (SEA), the energy absorbed per unit mass. We consider the multicell energy-absorbing box studied by \citet{zhang2024two}, adopting its geometry and wall-thickness parameterization for a single-objective optimization task. Figure~\ref{fig:engineering_problem} shows its position in the vehicle front-end structure and the spatial distribution of its design variables.

\begin{figure}[H]
    \centering
    \includegraphics[width=\linewidth]{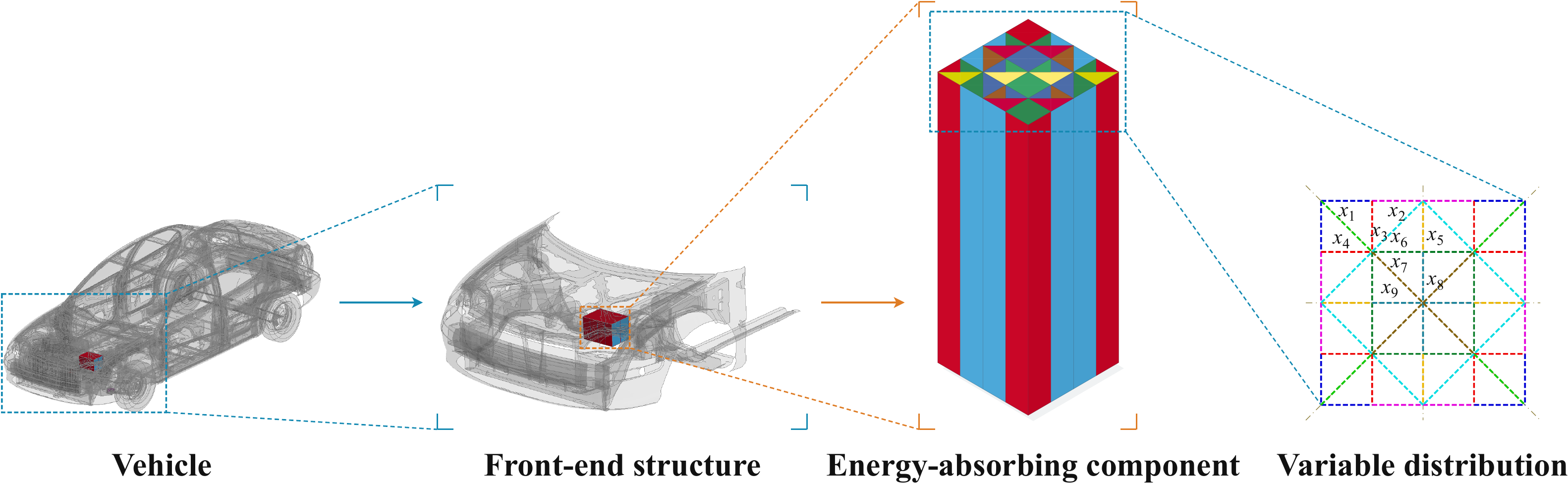}
    \caption{Multicell automotive energy-absorbing component and its nine wall-thickness variables. The cross-sectional layout identifies the independently parameterized wall groups under one-eighth symmetry.}
    \label{fig:engineering_problem}
\end{figure}

\paragraph{Design variables.}
The component has one-eighth symmetry, allowing its wall-thickness distribution to be specified by nine independent continuous variables,
\begin{equation}
    \mathbf{x}=(x_1,\ldots,x_9)^{\top},
    \qquad
    0.2 \le x_i \le 3.0~\mathrm{mm},
    \quad i=1,\ldots,9.
\end{equation}
Each $x_i$ controls the thickness of the corresponding wall group indicated in Figure~\ref{fig:engineering_problem}; symmetry-related walls share the same thickness. The optimization therefore reallocates material among the wall groups within a fixed geometric configuration. The remaining model settings, including material properties and loading conditions, are held fixed across candidate designs.

\paragraph{Problem definition.}
Let ${EA}(\mathbf{x})$ denote the energy absorbed during the simulated crushing response, measured in kJ, and let $M(\mathbf{x})$ denote the component mass, measured in kg. The specific energy absorption (SEA) is
\begin{equation}
    \emph{{SEA}}(\mathbf{x})
    =
    \frac{\emph{{EA}}(\mathbf{x})}{M(\mathbf{x})},
\end{equation}
with units of kJ/kg. Our objective is to maximize \emph{SEA}:
\begin{equation}
\begin{aligned}
    \max_{\mathbf{x}\in\mathbb{R}^{9}}\quad
    &{SEA}(\mathbf{x})\\
    \text{subject to}\quad
    &0.2 \le x_i \le 3.0~\mathrm{mm},
    \qquad i=1,\ldots,9.
\end{aligned}
\label{eq:engineering_sea}
\end{equation}
This objective favors designs that absorb more energy per unit mass, incorporating the mass cost of thickness changes directly into the performance measure. The adopted formulation is a single-objective, bound-constrained problem.

\paragraph{Objective evaluation.}
Each candidate thickness vector is inserted into the shell-section definitions of the finite-element model, and LS-DYNA is used to simulate the crushing response. The absorbed energy is obtained from the final internal-energy output, and the component mass is extracted from the solver's part-mass output. These quantities are converted to kJ and kg before computing SEA. Each such simulation counts as one fitness evaluation. The need to run the finite-element solver for every candidate makes this an expensive black-box optimization task.

\paragraph{Evaluation protocol.}
The search is conducted in normalized coordinates $\mathbf{u}\in[0,1]^9$, with physical wall thicknesses recovered by $x_i=0.2+2.8u_i$ in mm. The reported comparison starts with $2D+1=19$ initial evaluations and follows the methods through 40 subsequent sequential evaluations, ending at a total of 59 evaluations. Figure~\ref{fig:case1-convergence} reports the best observed SEA at each evaluation count from 19 to 59. OnDesign generates acquisition functions during the target run and is compared with the same Bayesian optimization and LLM-based AAD baselines used in the benchmark study.

\subsection{Tracing an Online Acquisition-Function Design Decision}
\label{app:bo_case_study_fe056}

To expose the mechanism underlying the run-level trajectory in Fig.~\ref{fig:algorithm_evolution}, we examine one acquisition-function design decision from the same 10-dimensional BBOB F1 run. The selected checkpoint follows 29 consecutive evaluations without improvement and immediately precedes a substantial reduction in the optimality gap. Here, FE56 denotes the 56th sequential evaluation after the 21 initialization evaluations. Before this decision, the optimization state contains 76 objective observations, and the incumbent is -92.526282, corresponding to an optimality gap of 0.123718. Executing the generated acquisition function produces the 77th observation, improves the incumbent to -92.612056, and reduces the gap to 0.037944, a reduction of 69.33\%. For readability, Figs.~\ref{fig:case_fe056_state} and~\ref{fig:case_fe056_deliberation} present representative excerpts from the complete Agent input and output records; bracketed ellipses mark omitted portions. Figure~\ref{fig:case_fe056_code} shows the generated algorithm and its execution feedback. These figures do not depict a SAG Evolution update, which is a separate periodic process. Common role instructions appear in Appendix~\ref{app:prompt_templates}.

\paragraph{Module 1: State Analysis.}
Figure~\ref{fig:case_fe056_state} shows the selected runtime observations and the State Report produced by the State Analyst. At FE56, 45 sequential evaluations remain, and the incumbent has not improved within the latest ten-step state window. The fitted GP has a lengthscale of 11.41, substantially larger than the maximum distance of 3.16 in the normalized search space, indicating a strongly smoothed posterior. Meanwhile, the candidate diagnostic reports zero minimum distance to the observed set, revealing a risk of duplicate sampling. The small local residual indicates local consistency, but the global posterior geometry provides no reliable basis for pure exploitation. Among the conditions discussed in the full State Report, three are particularly relevant to this design decision: prolonged stagnation, potentially over-smoothed posterior geometry, and duplicate-sampling risk.

\paragraph{Module 2: Multi-Agent Deliberation.}
Figure~\ref{fig:case_fe056_deliberation} shows how the two Proposers reason from the same State Report and Algorithm Design History but emphasize different evidence. \emph{Exploitation-oriented Proposer} retains LogEI as the primary ranking mechanism and recommends only a small auxiliary correction with $\beta\in[0.1,0.5]$. \emph{Exploration-oriented Proposer} places greater weight on the prolonged stagnation and ambiguous posterior geometry, advocating a stronger but bounded correction with $\beta\in[0.8,1.3]$. Rather than selecting either proposal directly, \emph{Arbiter} synthesizes a new conditional structure: an improvement-based backbone is retained, while an auxiliary term is activated according to posterior-standardized predicted improvement. The resulting design uses $\beta=1.0$ and a correction weight of 0.35, corresponding to a moderate design tendency.

\paragraph{Generated algorithm and execution feedback.}
Figure~\ref{fig:case_fe056_code} shows the executable output of Multi-Agent Deliberation and the feedback obtained after its execution.
With $b$ denoting the incumbent objective, $\mu$ the posterior mean, and $\sigma$ the clamped posterior standard deviation, the generated acquisition score is
\begin{equation}
\alpha=\log\!\bigl(1+\max\{\mathrm{EI},0\}\bigr)
+0.35\,\frac{\sigma}{\sigma+10^{-8}}\,
\operatorname{sigmoid}\!\left(\frac{b-\mu}{\sigma}\right).
\label{eq:case_fe056_acquisition}
\end{equation}
The first term preserves the ranking induced by expected improvement. The second introduces a bounded, posterior-standardized correction whose influence increases when the posterior assigns greater plausibility to improvement and vanishes as the posterior standard deviation approaches zero. Thus, the generated acquisition function changes the composition and conditional structure of the decision rule rather than merely adjusting a parameter of a predefined acquisition function. The selected evaluation reduces the optimality gap by 69.33\%. Its normalized gain of 0.030447, together with the executed code and updated optimization state, is subsequently added to the Algorithm Design History. These records become part of the context for the next design stage, completing the transition from state evidence to algorithm generation, execution, and renewed state-dependent design.

\clearpage
\begin{figure}[H]
    \centering
    \includegraphics[width=\linewidth]{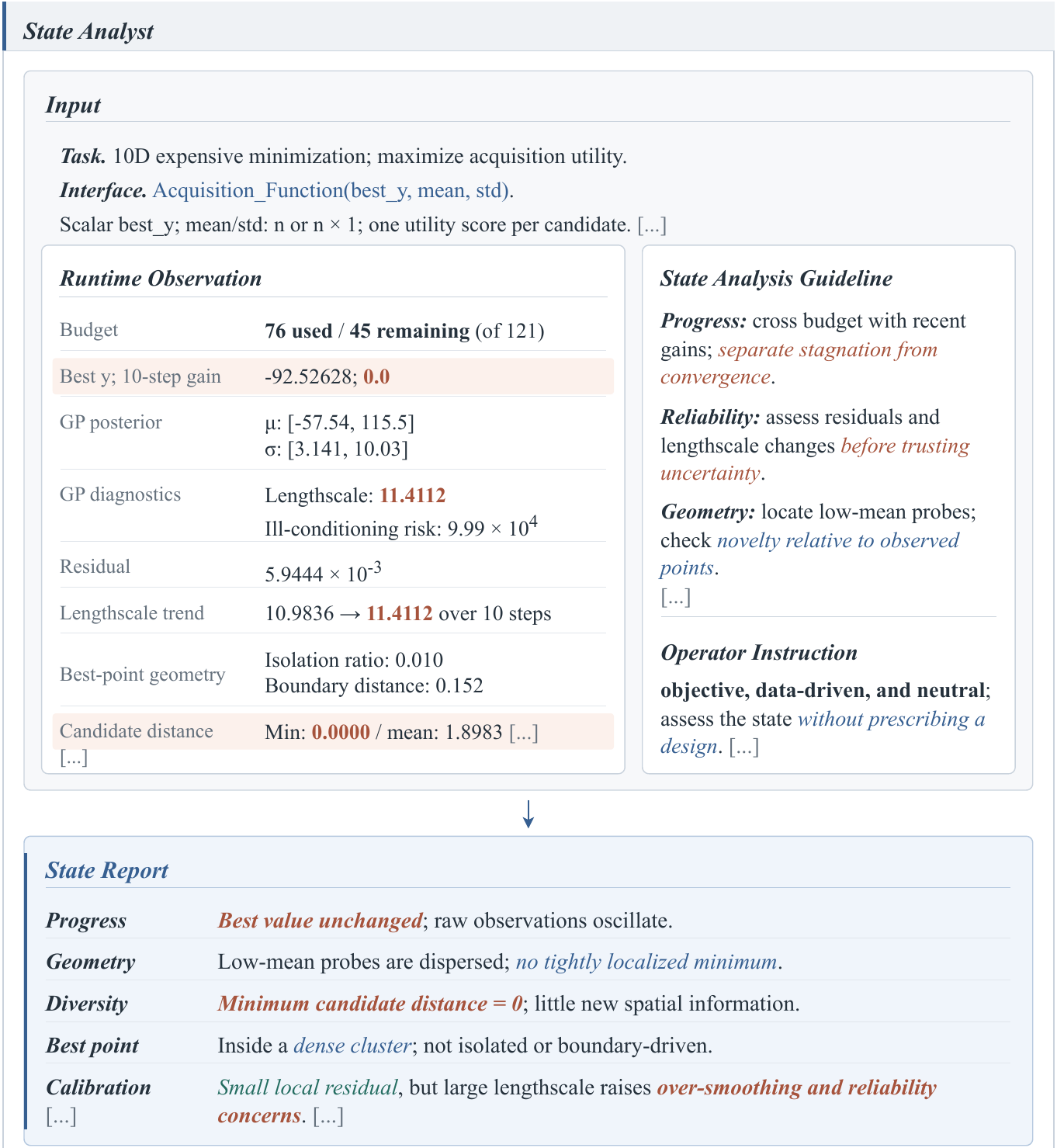}
    \caption{Module 1 (State Analysis) at FE56: from runtime observations to the State Report. The State Analyst organizes evidence concerning search progress, posterior geometry, sampling diversity, and model reliability. Bracketed ellipses denote omitted content; all displayed values are taken from the archived Agent records.}
    \label{fig:case_fe056_state}
\end{figure}

\clearpage
\begin{figure}[H]
    \centering
    \includegraphics[width=\linewidth]{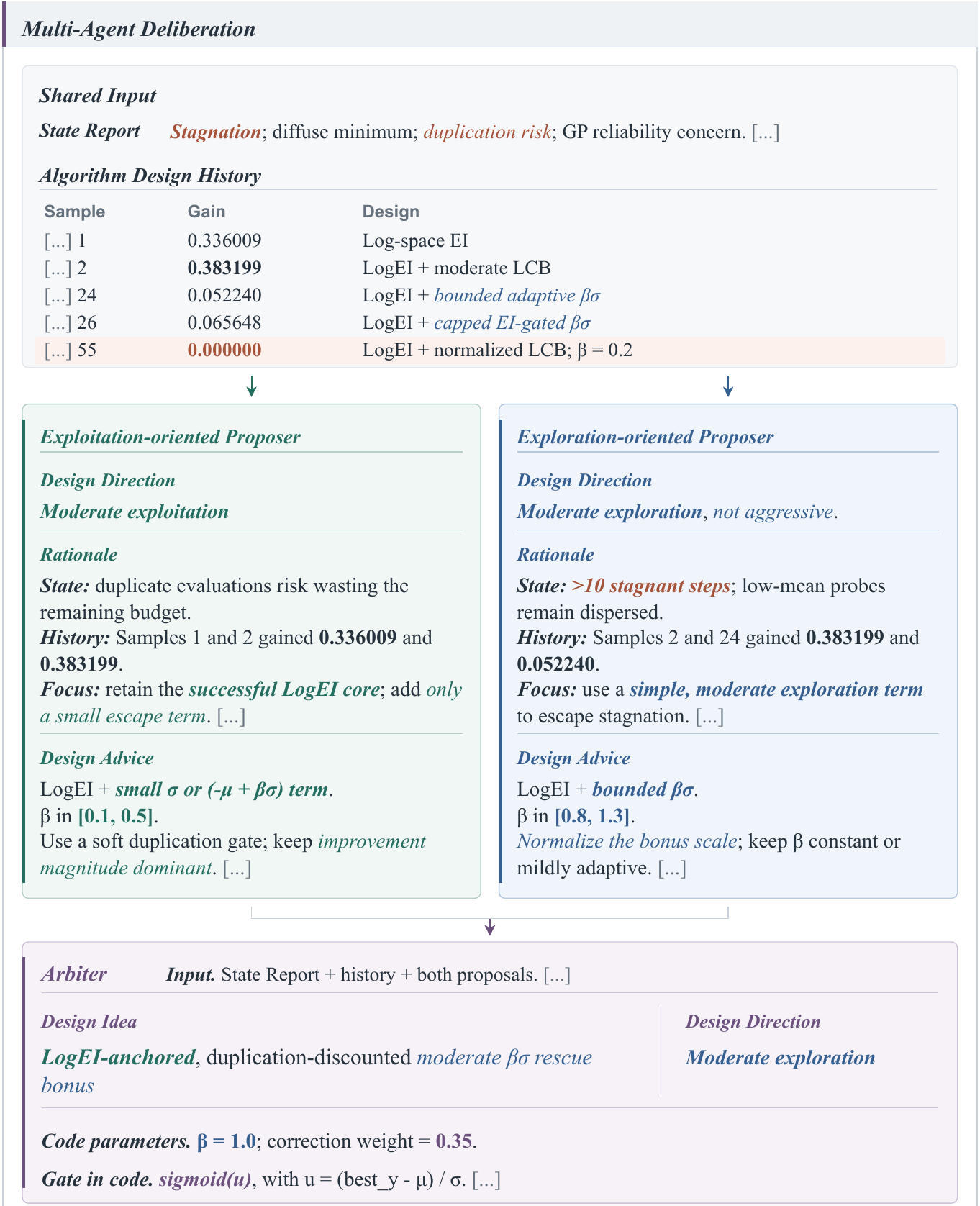}
    \caption{Module 2 (Multi-Agent Deliberation) at FE56: from alternative proposals to the arbitrated acquisition-function design. The two Proposers emphasize different evidence from the same State Report and Algorithm Design History, while \emph{Arbiter} synthesizes an improvement-based design with a gated auxiliary correction. Bracketed ellipses denote omitted content.}
    \label{fig:case_fe056_deliberation}
\end{figure}

\clearpage
\begin{figure}[H]
    \centering
    \includegraphics[width=0.97\linewidth]{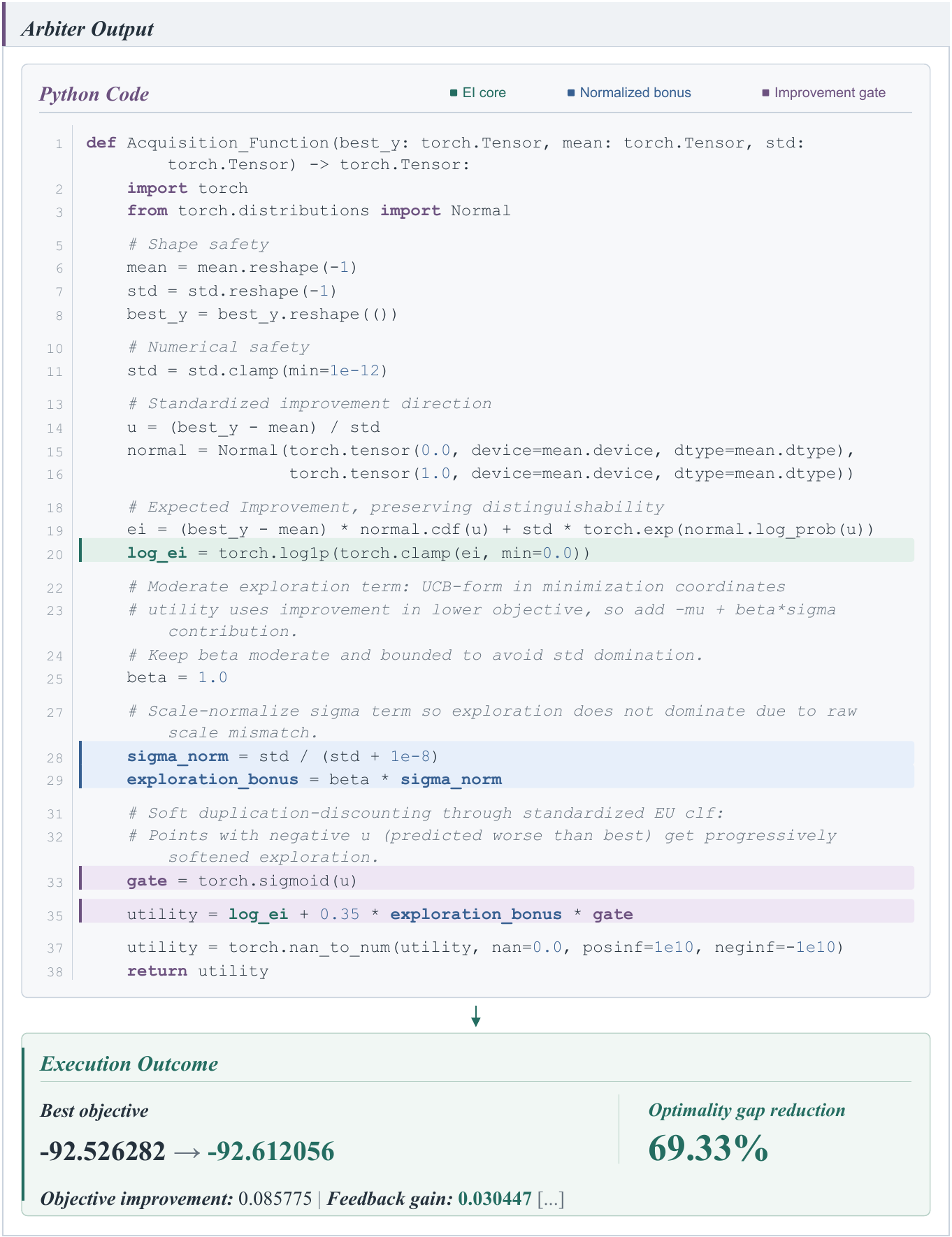}
    \caption{Output of Module 2 (Multi-Agent Deliberation) and its execution feedback at FE56. The highlighted statements implement Eq.~(\ref{eq:case_fe056_acquisition}). The resulting evaluation reduces the optimality gap by 69.33\%, and the normalized gain is returned to the Algorithm Design History for subsequent design stages.}
    \label{fig:case_fe056_code}
\end{figure}

\clearpage
\section{Additional Experimental Results}
\label{app:additional_results}

Average ranks and $+/-/=$ counts are computed from the means displayed in the following function-wise tables. Equal displayed means receive average ranks and count as ties; $+/-/=$ compares each baseline with OnDesign.

\begin{table*}[htbp]
  \centering
  \caption{Results on Bayesian optimization scenario: BBOB test suite with $D=10$.}
  \label{tab:bo-bbob-10d}
  \setlength{\tabcolsep}{2pt}
  \renewcommand{\arraystretch}{1.1}
  \arrayrulecolor{ODRule}
  \resizebox{\linewidth}{!}{%
%
  }
\end{table*}

\begin{table*}[htbp]
  \centering
  \caption{Results on Bayesian optimization scenario: BBOB test suite with $D=30$.}
  \label{tab:bo-bbob-30d}
  \setlength{\tabcolsep}{2pt}
  \renewcommand{\arraystretch}{1.1}
  \arrayrulecolor{ODRule}
  \resizebox{\linewidth}{!}{%
    %
%
  }
\end{table*}

\begin{table*}[htbp]
  \centering
  \caption{Results on Bayesian optimization scenario: CEC2020 test suite with $D=10$.}
  \label{tab:bo-cec2020-10d}
  \setlength{\tabcolsep}{2pt}
  \renewcommand{\arraystretch}{1.1}
  \arrayrulecolor{ODRule}
  \resizebox{\linewidth}{!}{%
    %
%
  }
\end{table*}

\begin{table*}[htbp]
  \centering
  \caption{Results in the Bayesian optimization scenario: CEC2020 test suite with $D=20$.}
  \label{tab:bo-cec2020-20d}
  \setlength{\tabcolsep}{2pt}
  \renewcommand{\arraystretch}{1.1}
  \arrayrulecolor{ODRule}
  \resizebox{\linewidth}{!}{%
    %
%
  }
\end{table*}

\begin{table*}[htbp]
  \centering
  \caption{Results on Bayesian optimization scenario: CEC2022 test suite with $D=10$.}
  \label{tab:bo-cec2022-10d}
  \setlength{\tabcolsep}{2pt}
  \renewcommand{\arraystretch}{1.1}
  \arrayrulecolor{ODRule}
  \resizebox{\linewidth}{!}{%
    %
%
  }
\end{table*}

\begin{table*}[htbp]
  \centering
  \caption{Results on Bayesian optimization scenario: CEC2022 test suite with $D=20$.}
  \label{tab:bo-cec2022-20d}
  \setlength{\tabcolsep}{2pt}
  \renewcommand{\arraystretch}{1.1}
  \arrayrulecolor{ODRule}
  \resizebox{\linewidth}{!}{%
    %
%
  }
\end{table*}

\begin{table*}[htbp]
  \centering
  \caption{Results on evolutionary continuous optimization scenario: BBOB test suite with $D=10$.}
  \label{tab:ec-bbob-10d}
  \setlength{\tabcolsep}{2pt}
  \renewcommand{\arraystretch}{1.1}
  \arrayrulecolor{ODRule}
  \resizebox{\linewidth}{!}{%
    %
%
  }
\end{table*}

\begin{table*}[htbp]
  \centering
  \caption{Results on evolutionary continuous optimization: BBOB test suite with $D=30$.}
  \label{tab:ec-bbob-30d}
  \setlength{\tabcolsep}{2pt}
  \renewcommand{\arraystretch}{1.1}
  \arrayrulecolor{ODRule}
  \resizebox{\linewidth}{!}{%
    %
%
  }
\end{table*}

\begin{table*}[htbp]
  \centering
  \caption{Results on evolutionary continuous optimization: CEC2020 test suite with $D=10$.}
  \label{tab:ec-cec2020-10d}
  \setlength{\tabcolsep}{2pt}
  \renewcommand{\arraystretch}{1.1}
  \arrayrulecolor{ODRule}
  \resizebox{\linewidth}{!}{%
    %
%
  }
\end{table*}

\begin{table*}[htbp]
  \centering
  \caption{Results on evolutionary continuous optimization: CEC2020 test suite with $D=20$.}
  \label{tab:ec-cec2020-20d}
  \setlength{\tabcolsep}{2pt}
  \renewcommand{\arraystretch}{1.1}
  \arrayrulecolor{ODRule}
  \resizebox{\linewidth}{!}{%
    %
%
  }
\end{table*}

\begin{table*}[htbp]
  \centering
  \caption{Results on evolutionary continuous optimization: CEC2022 test suite with $D=10$.}
  \label{tab:ec-cec2022-10d}
  \setlength{\tabcolsep}{2pt}
  \renewcommand{\arraystretch}{1.1}
  \arrayrulecolor{ODRule}
  \resizebox{\linewidth}{!}{%
    %
%
  }
\end{table*}

\begin{table*}[htbp]
  \centering
  \caption{Results on evolutionary mixed-variable optimization: MV-BBOB test suite with $D=10$.}
  \label{tab:mix-mv-bbob-10d}
  \setlength{\tabcolsep}{2pt}
  \renewcommand{\arraystretch}{1.1}
  \arrayrulecolor{ODRule}
  \resizebox{\linewidth}{!}{%
    %
%
  }
\end{table*}

\begin{table*}[htbp]
  \centering
  \caption{Results on evolutionary mixed-variable optimization: MV-BBOB test suite with $D=30$.}
  \label{tab:mix-mv-bbob-30d}
  \setlength{\tabcolsep}{2pt}
  \renewcommand{\arraystretch}{1.1}
  \arrayrulecolor{ODRule}
  \resizebox{\linewidth}{!}{%
    %
%
  }
\end{table*}

\begin{table*}[htbp]
  \centering
  \caption{Results on evolutionary mixed-variable optimization: EOPCCV test suite with $D=10$.}
  \label{tab:mix-eopccv-10d}
  \setlength{\tabcolsep}{2pt}
  \renewcommand{\arraystretch}{1.1}
  \arrayrulecolor{ODRule}
  \resizebox{\linewidth}{!}{%
    %
%
  }
\end{table*}

\begin{table*}[htbp]
  \centering
  \caption{Ablation results on Bayesian optimization on BBOB at $D=10$ and $D=30$. OnDesign denotes the complete framework.}
  \label{tab:bo-ablation-details}
  \setlength{\tabcolsep}{2pt}
  \renewcommand{\arraystretch}{1.1}
  \arrayrulecolor{ODRule}
  \resizebox{\linewidth}{!}{%
    %
%
  }
\end{table*}

\end{document}